\newcommand{\R}{\mathbb{R}}
\newcommand{\E}{\mathbf{E}}
\newcommand{\entropy}{\mathbf{H}}
\newcommand{\I}{\mathbb{I}}
\newcommand{\X}{\mathbb{X}}
\newcommand{\D}{\mathcal{D}}
\newcommand{\prob}{\mathbf{P}}
\newcommand{\eubo}{\mathrm{qEUBO}}
\newcommand{\qei}{\mathrm{qEI}}
\DeclareMathOperator*{\argmax}{argmax}
\newtheorem{theorem}{Theorem}
\def\renewtheorem#1{%
  \expandafter\let\csname#1\endcsname\relax
  \expandafter\let\csname c@#1\endcsname\relax
  \gdef\renewtheorem@envname{#1}
  \renewtheorem@secpar
}
\def\renewtheorem@secpar{\@ifnextchar[{\renewtheorem@numberedlike}{\renewtheorem@nonumberedlike}}
\def\renewtheorem@numberedlike[#1]#2{\newtheorem{\renewtheorem@envname}[#1]{#2}}
\def\renewtheorem@nonumberedlike#1{  
\def\renewtheorem@caption{#1}
\edef\renewtheorem@nowithin{\noexpand\newtheorem{\renewtheorem@envname}{\renewtheorem@caption}}
\renewtheorem@thirdpar
}
\def\renewtheorem@thirdpar{\@ifnextchar[{\renewtheorem@within}{\renewtheorem@nowithin}}
\def\renewtheorem@within[#1]{\renewtheorem@nowithin[#1]}
\begin{document}

%
\runningtitle{qEUBO: A Decision-Theoretic Acquisition Function for Preferential Bayesian Optimization}

%

\twocolumn[

\aistatstitle{qEUBO: A Decision-Theoretic Acquisition Function for\\ Preferential Bayesian Optimization}

\aistatsauthor{Raul Astudillo \And Zhiyuan Jerry Lin \And Eytan Bakshy \And Peter I. Frazier}

\aistatsaddress{Caltech \And  Meta \And Meta \And Cornell University} ]

\begin{abstract}
 Preferential Bayesian optimization (PBO) is a framework for optimizing a decision maker's  latent utility function using preference feedback. This work introduces the expected utility of the best option (qEUBO) as a novel acquisition function for PBO. When the decision maker's responses are noise-free, we show that qEUBO is one-step Bayes optimal and thus equivalent to the popular knowledge gradient acquisition function. We also show that qEUBO enjoys an additive constant approximation guarantee to the one-step Bayes-optimal policy when the decision maker's responses are corrupted by noise. We provide an extensive evaluation of qEUBO and demonstrate that it outperforms the state-of-the-art acquisition functions for PBO across many settings. Finally, we show that, under sufficient regularity conditions, qEUBO's Bayesian simple regret converges to zero at a rate $o(1/n)$ as the number of queries, $n$, goes to infinity. 
 In contrast, we show that simple regret under qEI, a popular acquisition function for standard BO often used for PBO, can fail to converge to zero. 
 Enjoying superior performance, simple computation, and a grounded decision-theoretic justification, qEUBO is a promising acquisition function for PBO.
\end{abstract}

\section{INTRODUCTION}
Bayesian optimization (BO) is a framework for global optimization of objective functions with expensive or time-consuming evaluations \citep{shahriari2015taking}. BO algorithms have been successful in broad range of applications, such as sensor set selection \citep{garnett2010bayesian}, hyperparameter tuning of machine learning algorithms \citep{snoek2012practical}, chemical design \citep{griffiths2020constrained}, and culture media optimization for cellular agriculture \citep{cosenza2022multi}. In many problems, it is not possible to observe (potentially noisy) objective values directly. Instead, a decision-maker (DM) provides preference feedback, often in the form of pairwise comparisons between options shown. This arises in applications such as animation design \citep{brochu2010bayesian}, where a DM is shown two different images and chooses the one with better characteristics (e.g. realism or resemblance to a target image); and exoeskeleton gait design \citep{tucker2020preference}, where a DM assisted by an exoskeleton walks for a short period of time using two different gait configurations and indicates the one that resulted in more comfortable walking. Preferential Bayesian optimization (PBO) \citep{brochu2010bayesian, gonzalez2017preferential}, a subframework within BO, has emerged as a powerful tool for tackling such problems.

As in standard BO, a PBO algorithm consists of two main components: a probabilistic surrogate model of the DM's latent utility function; and an acquisition function (AF), computed from the probabilistic surrogate model, whose value at a given set of $q$ alternatives quantifies the benefit of DM feedback about their preferred alternative in the set.
Several AFs for PBO have been proposed \citep{brochu2010bayesian,gonzalez2017preferential,benavoli2021preferential,siivola2021preferential,nguyen2021top}. However, most are derived from heuristic arguments and lack a proper decision-theoretic or information-theoretic justification. For example, \cite{brochu2010bayesian} selects the point that maximizes the posterior mean of the model over points in previous queries as the first alternative, and the point that maximizes the expected improvement with respect to the posterior mean value of the first point as the second alternative. Other works simply adopt AFs from the standard BO literature \citep{siivola2021preferential}, ignoring the fact that preference feedback is observed rather than direct utility values.

To address the shortcomings of existing approaches, we study the \textit{expected utility of the best option (qEUBO)}, which generalizes the EUBO AF proposed by \cite{lin2022bope} for a different problem setting, as a novel AF for PBO with a proper decision-theoretic justification. 


\paragraph{Contributions} Our contributions are as follows:
\begin{itemize}
    \item We propose qEUBO, an AF for PBO. qEUBO has a sound decision-theoretic interpretation, is simple to compute, and exhibits strong empirical performance.
    \item We show that qEUBO outperforms the state-of-the-art AFs for PBO in several synthetic and realistic test problems. Moreover, we show that qEUBO's closest competitor performs well in early iterations because it is \textit{similar} to qEUBO but its performance degrades as the number of queries grows.
    \item We show that, under sufficient regularity conditions, qEUBO's Bayesian simple regret converges to zero at a rate $o(1/n)$ as the number of queries, $n$, goes to infinity. Moreover, we show there exist problem instances where qEI, a popular acquisition from the standard BO setting that is often used in the PBO setting, has Bayesian simple regret bounded {\it below} by a strictly positive constant.
    \item We demonstrate significant benefit of asking queries with more than two alternatives. This contrasts with previous work by \citet{siivola2021preferential}, which concluded that $q>2$ only provides limited performance improvement.
\end{itemize}

\begin{figure}[ht]
\centering
\includegraphics[width=0.38\textwidth]{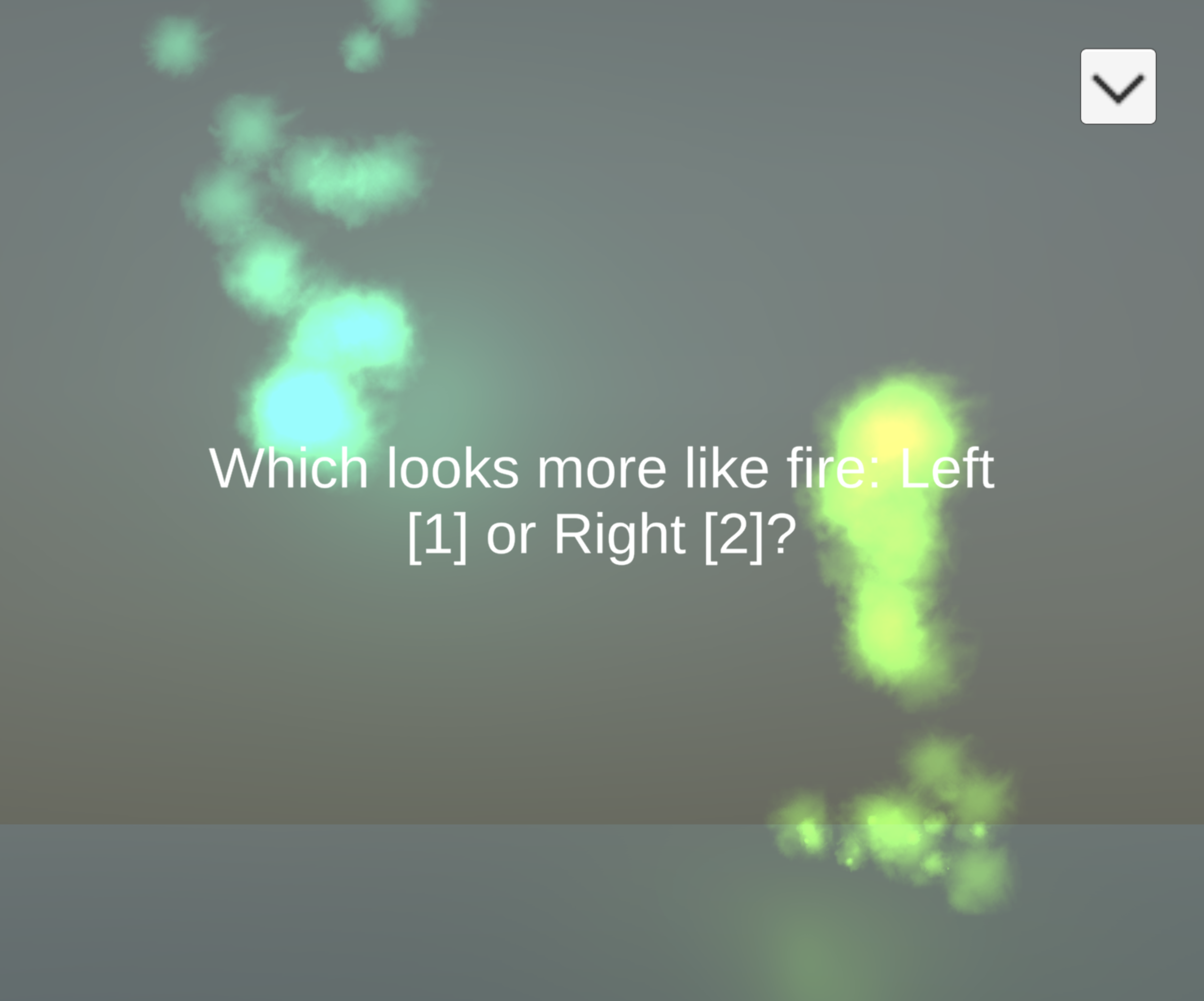}
\includegraphics[width=0.38\textwidth]{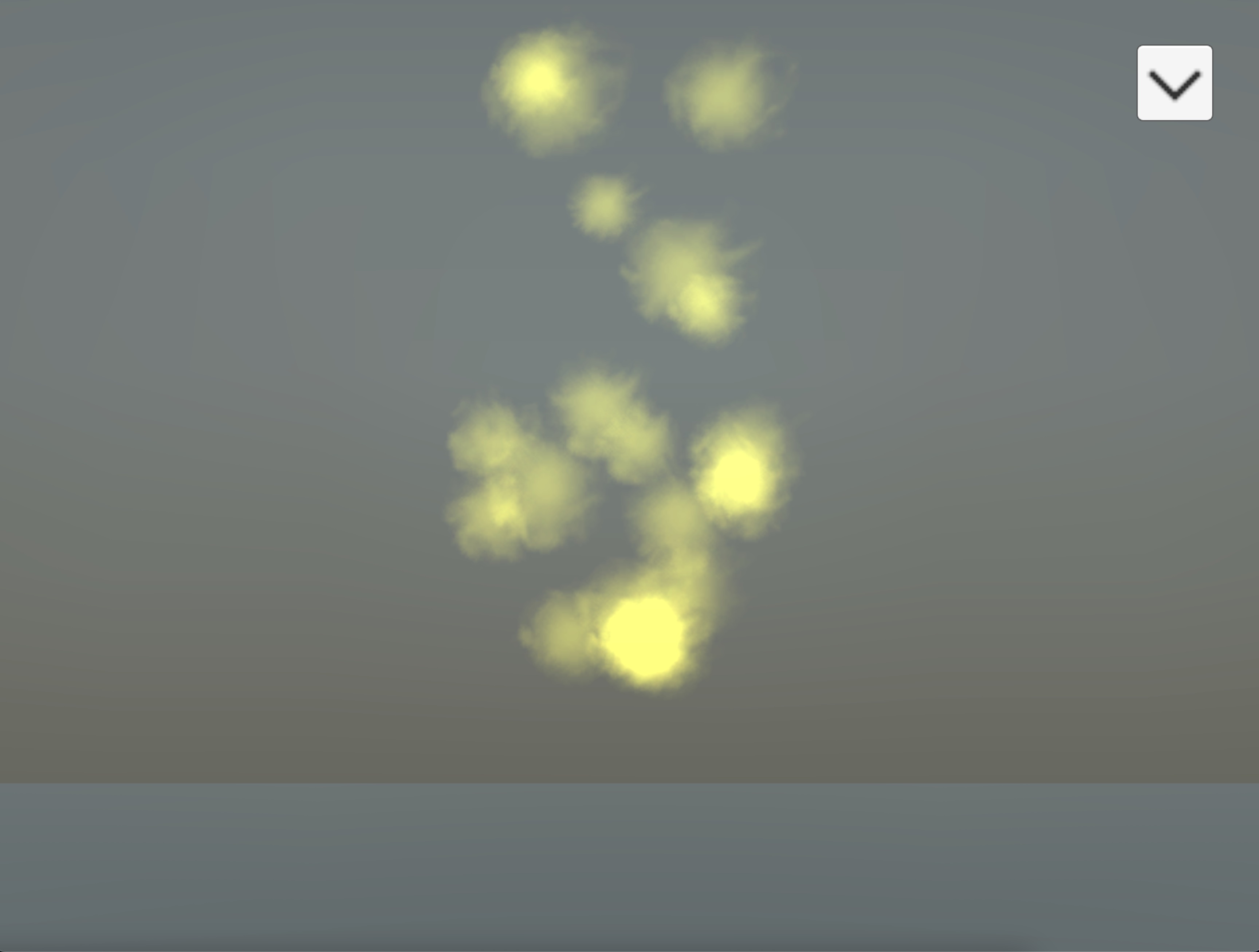}
 \caption{Fire particle rendering problem from Section~\ref{sec:experiments}, in which a human user is asked which of two animations looks more like fire (top). Final rendering results based on fitting a support vector machine model to 100 comparisons between random particle effects and then optimizing the predicted latent decision function over animation parameters (bottom).
 \label{fig:particle_demo}}
\end{figure}

\begin{figure*}[ht]
\centering
\begin{tabular}[b]{c}%
\includegraphics[width=0.33\textwidth]{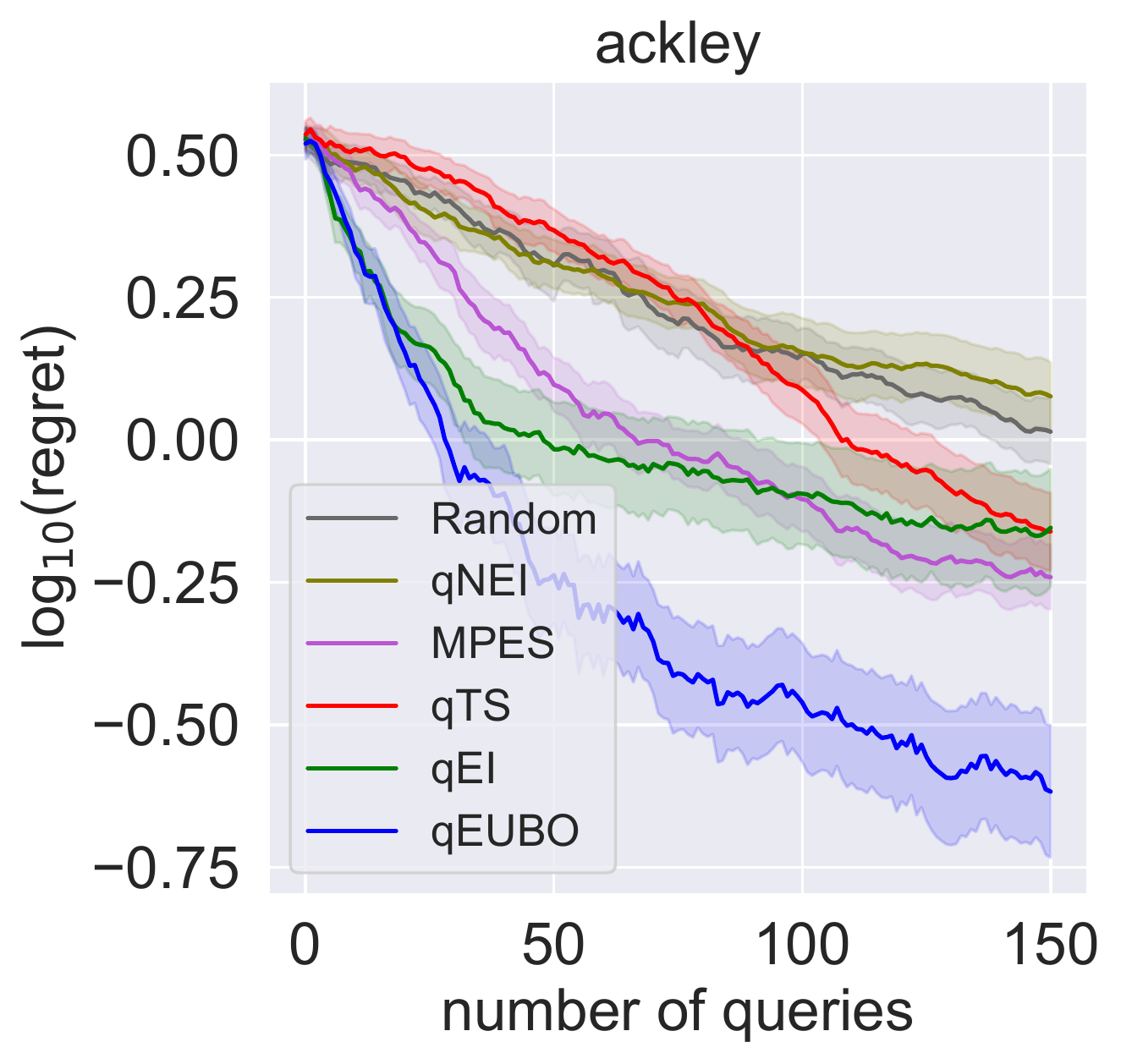}
  \includegraphics[width=0.312\textwidth]{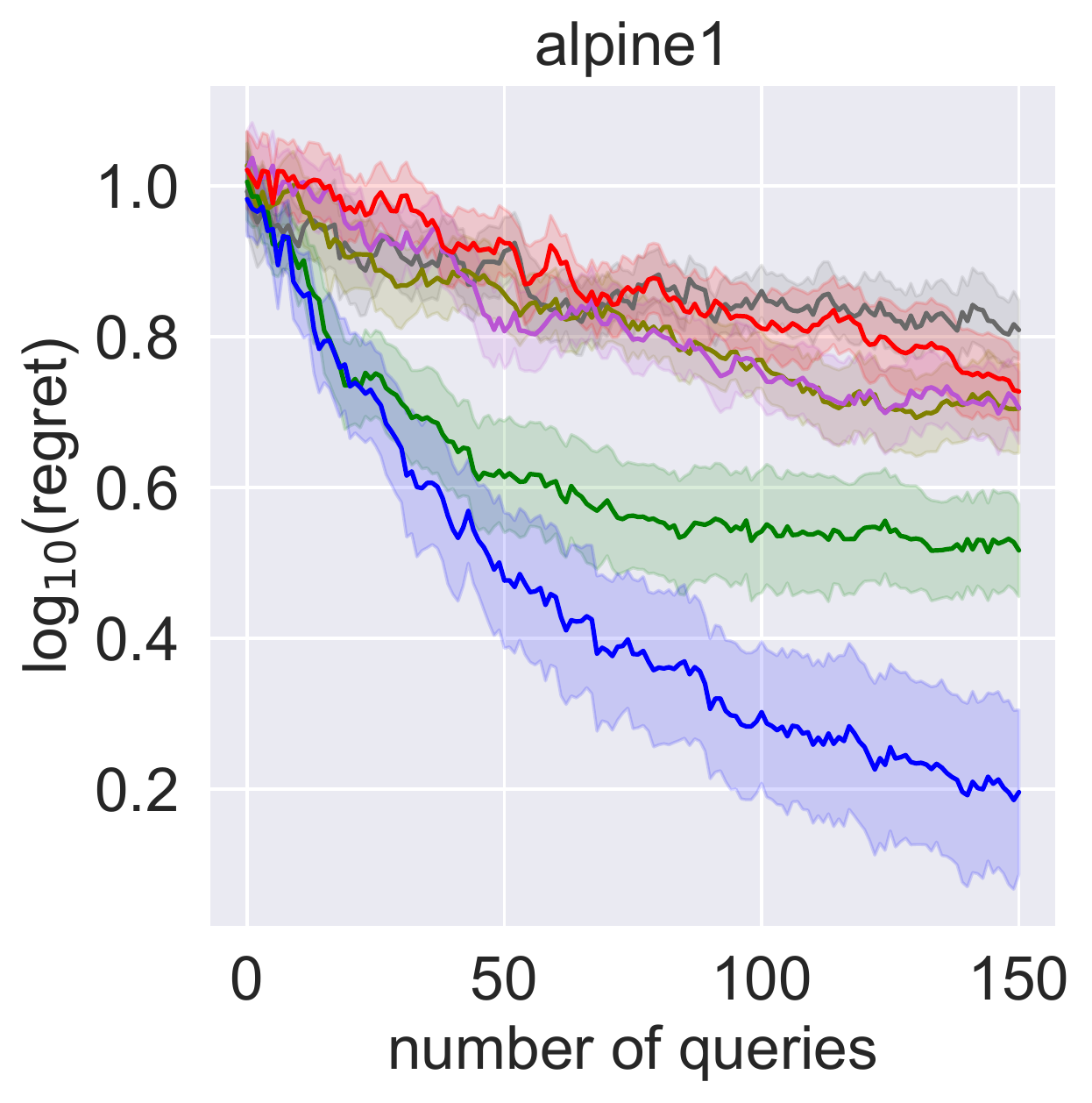}
  \includegraphics[width=0.321\textwidth]{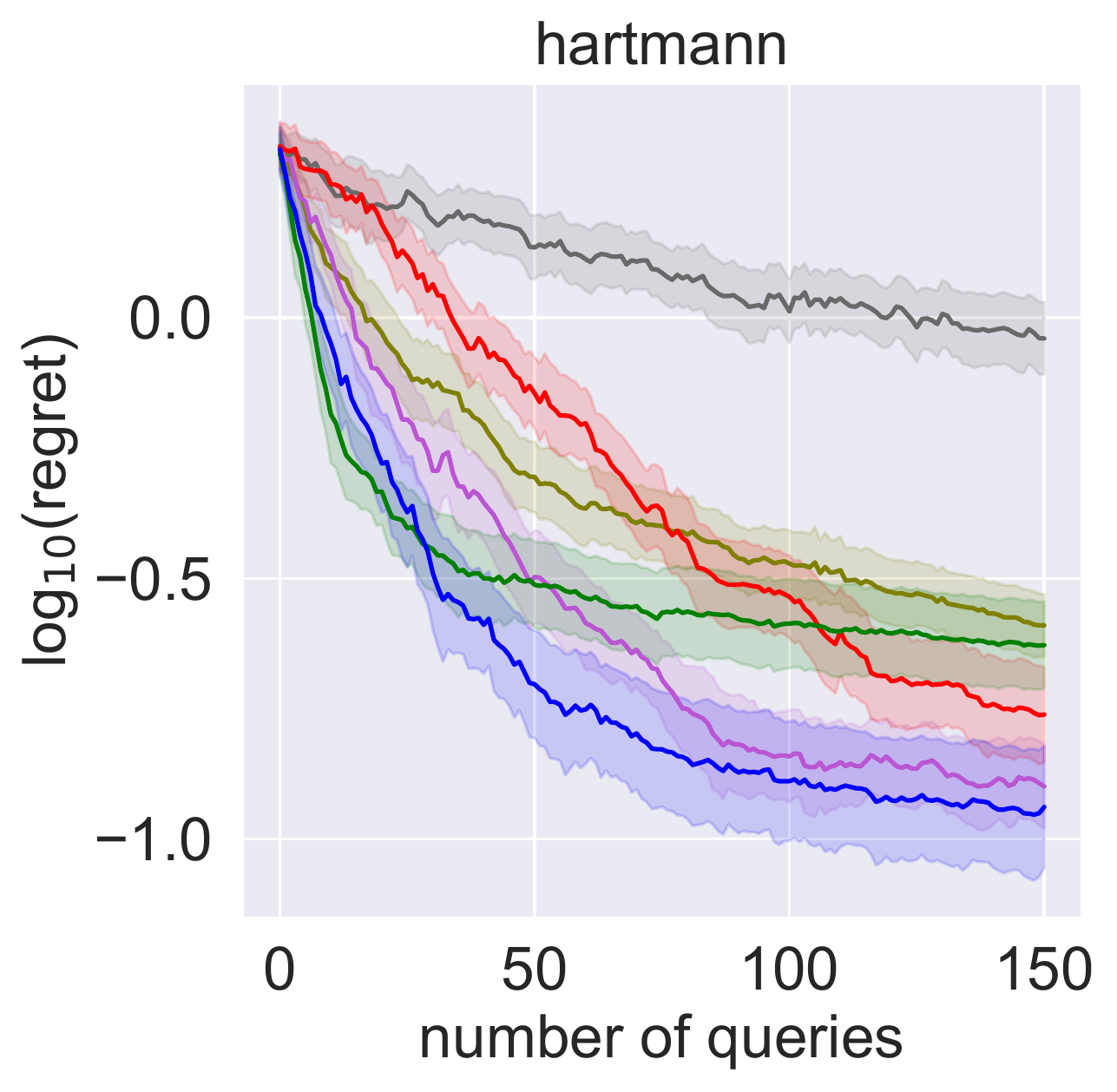}\\
  \includegraphics[width=0.33\textwidth]{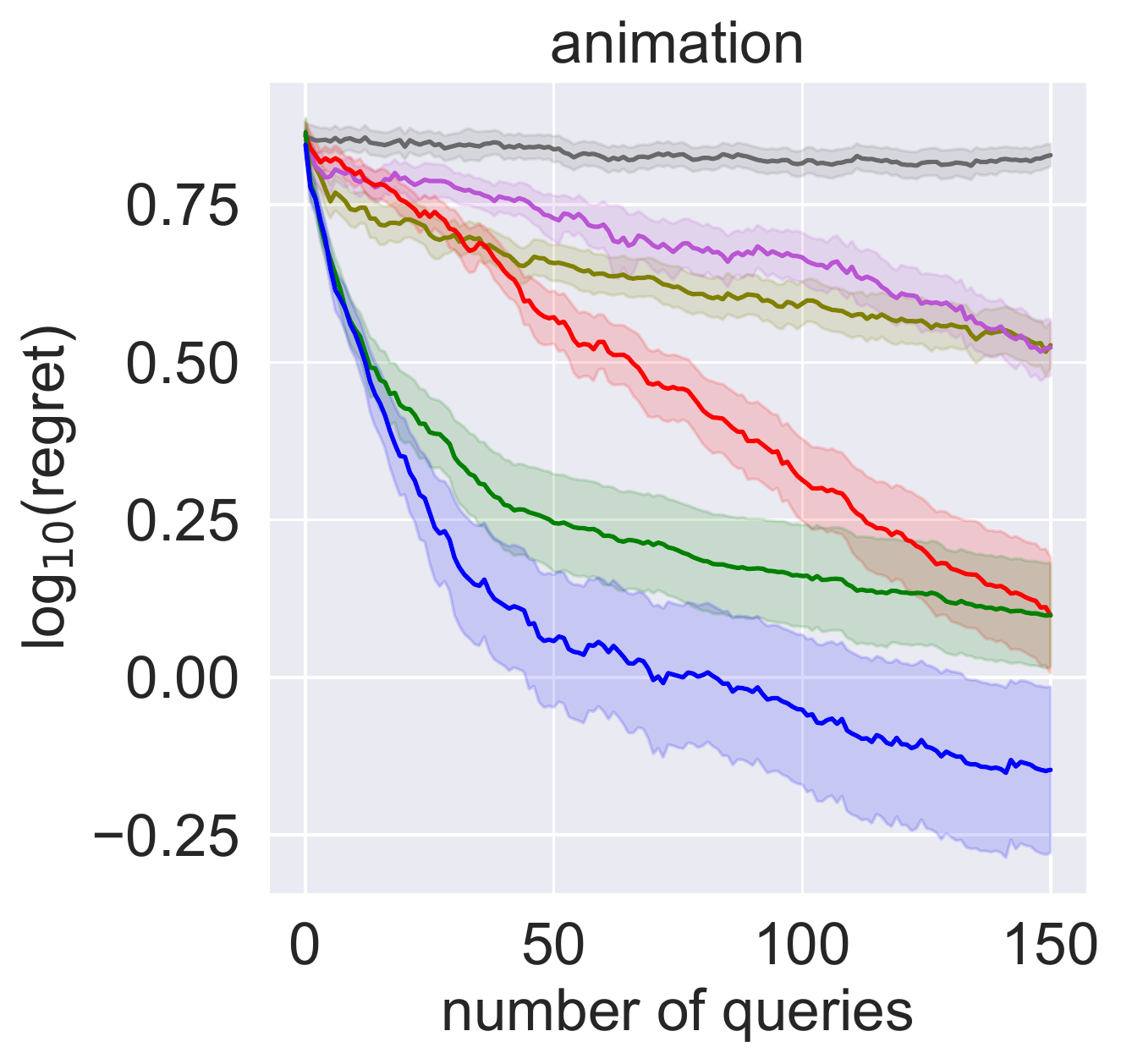}
\includegraphics[width=0.321\textwidth]{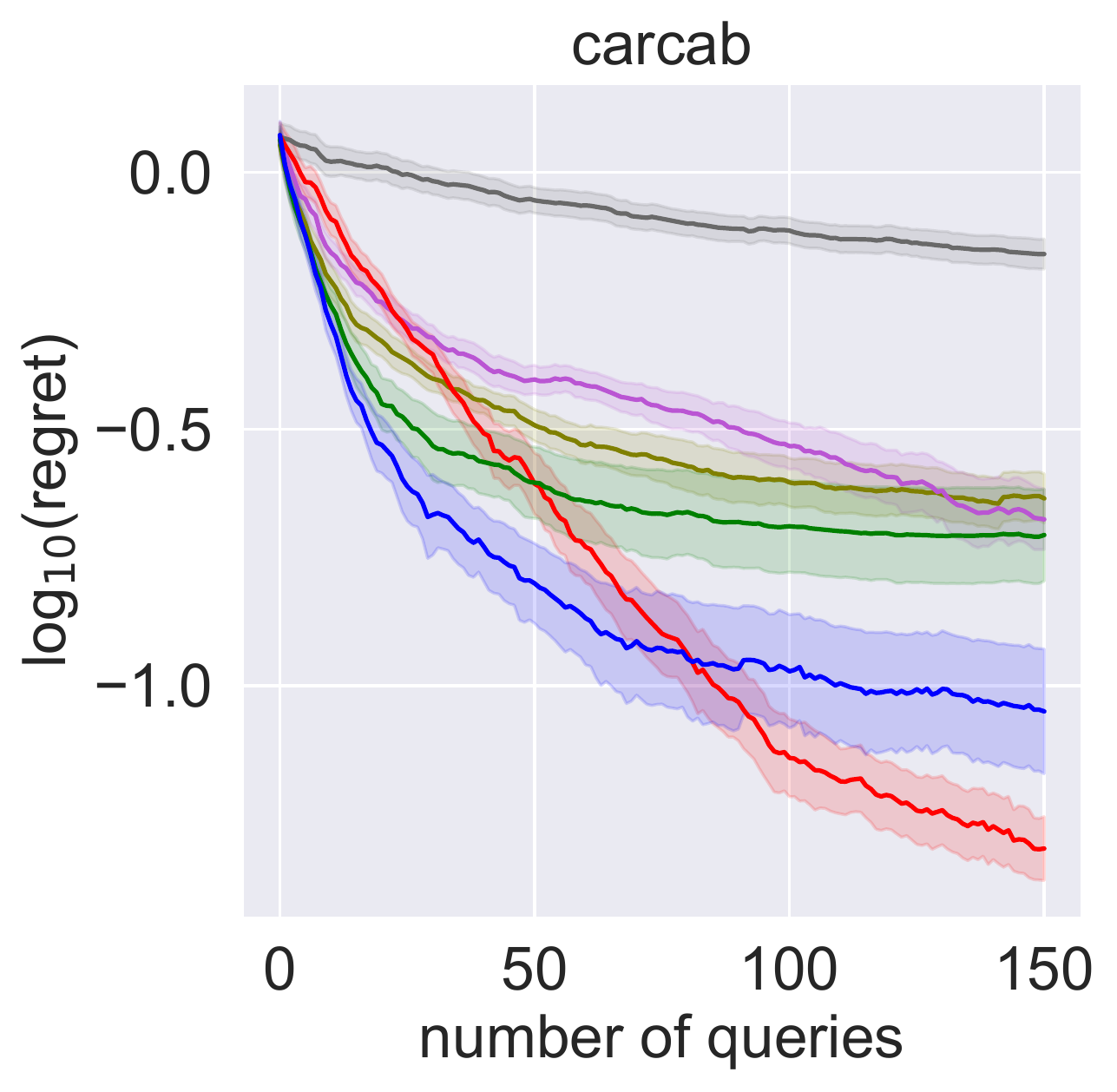}
  \includegraphics[width=0.322\textwidth]{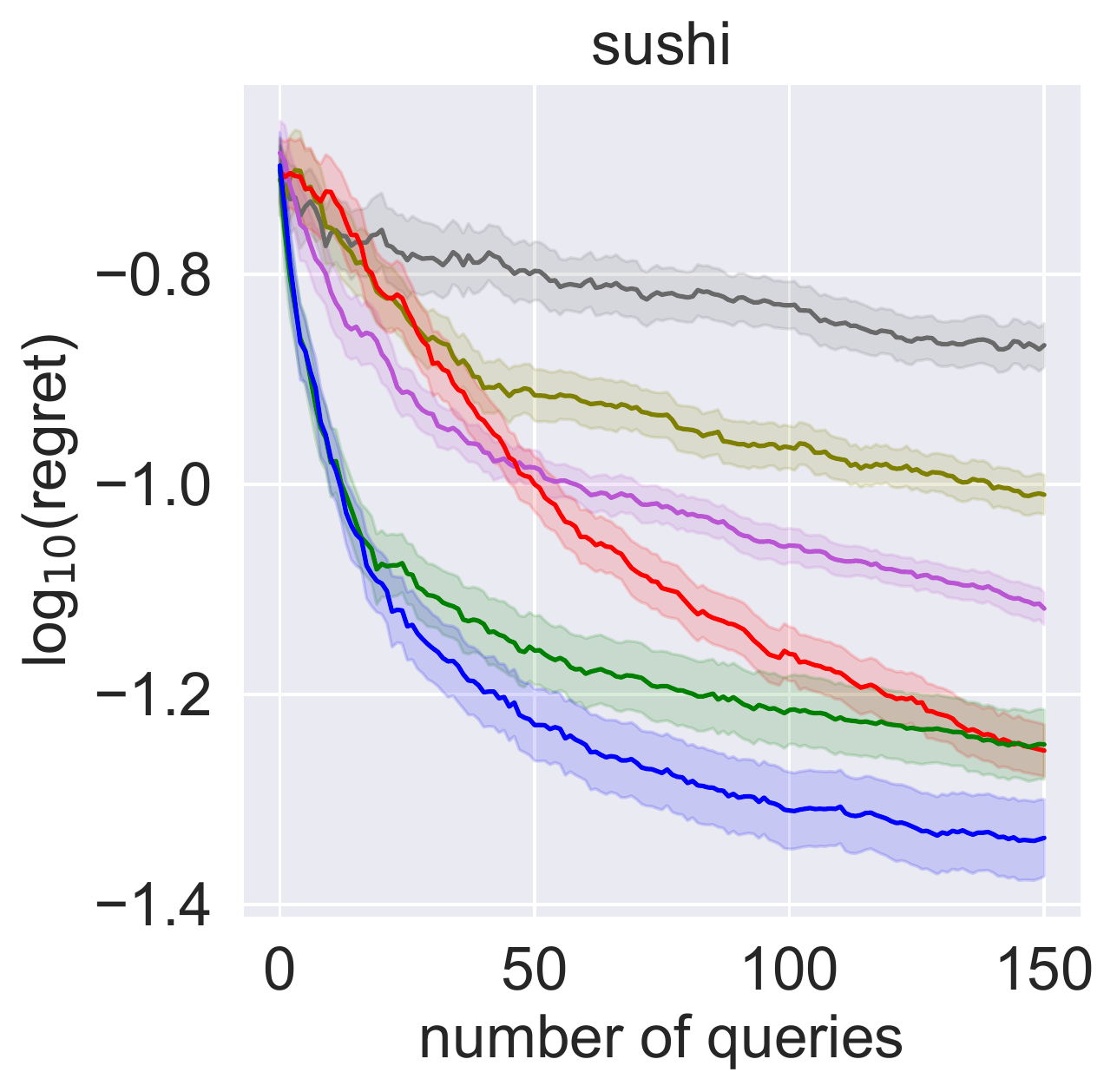}
 \end{tabular}
 \caption{log10(optimum value - utility value at the maximizer of the posterior mean) using moderate logistic noise and $q=2$ comparisons per DM query. All algorithms are shown up to 150 queries. qEUBO outperforms other algorithms on all but one problem.
 \label{fig:lr_results}}
\end{figure*}

\section{RELATED WORK}
Several AFs for PBO have been proposed in the literature. Most of them are designed via heuristic arguments \citep{brochu2010bayesian} or simply reused from the standard BO setting \citep{siivola2021preferential}. For example, \cite{brochu2010bayesian} selects the point that maximizes the posterior mean of the model over points in previous queries as the first alternative, and the point that maximizes the expected improvement with respect to the posterior mean value of the first point as the second alternative. \cite{nielsen2014perception} proposes to use the point preferred by the user in the previous query as the first alternative, and the point that maximizes the expected improvement with respect to this point as the second alternative. For $q=2$, qEUBO recovers this AF if we force the first alternative to be equal to the point preferred by the user in the previous query and optimize only over the second alternative. \cite{gonzalez2017preferential} proposes a pure exploration sampling policy along with two AFs based on the expected improvement and Thompson sampling AFs that aim to maximize the soft-Copeland's score. However, the computation of this score requires integration over the optimization domain, thus making these algorithms intractable even for problems of moderate dimension. \cite{siivola2021preferential} proposes using \textit{batch} versions of the expected improvement and Thompson sampling AFs from standard BO for selecting the points in each query. Since utility values are not observed directly, the batch expected improvement is adapted by defining the improvement with respect to the maximum posterior mean value over points in previous queries, along the lines of the approach followed by \cite{brochu2010bayesian}. Batch Thompson sampling is defined as in the standard BO setting: each point in the batch is selected as the point that maximizes an independently drawn sample from the utility's posterior distribution. \cite{nguyen2021top} proposes the multinomial predictive entropy search (MPES) AF for top-$k$ ranking BO, a slightly more general framework where the DM selects her most $k$ preferred alternatives in each query. MPES selects the query that  maximizes
the information gain on the utility
function's maximizer through observing the DM's feedback. It can be seen as a principled adaptation of the predictive entropy search (PES) AF for standard BO \citep{hernandez2014predictive}. Like with PES, the computation of MPES requires approximating an intractable multi-dimensional integral with respect to the posterior distribution on the utility function's maximizer. This is computationally challenging, especially in problems of moderate dimension, and inaccurate approximation can lead to a degraded performance. To our knowledge, the AFs proposed by \cite{siivola2021preferential} and \cite{nguyen2021top} are the only existing ones allowing for queries with more than two alternatives. Finally, \cite{benavoli2021preferential} proposes an AF where the first alternative is the point chosen by the user in the previous query, and the second one is obtained by maximizing a linear combination between the logarithm of the probability of improvement and the information gain; the weight of this linear combination is a hyperparameter of the algorithm. It also proposes two other AFs based on Thompson sampling and GP-UCB \citep{srinivas2012information}.

The above AFs (except MPES) were derived via heuristic arguments. In contrast, qEUBO is derived following a principled decision-theoretic analysis modeling the fact that, in PBO, observations are comparisons instead of direct utility values.
qEUBO's approach is consistent with the rigorous decision-theoretic or information-theoretic analysis used to derive principled AFs in standard BO. 
Moreover, unlike MPES, qEUBO is easy to compute and comes with a convergence guarantee. Finally,  qEUBO outperforms MPES significantly in our empirical evaluation.

qEUBO, restricted to the case $q=2$, was first discussed by \cite{lin2022bope} in the context of preference exploration for multi-attribute BO. In this context, the DM does not express preferences directly over alternatives but over attributes of these alternatives, which are assumed to be time-consuming to evaluate. As a consequence, qEUBO is not used directly to find alternatives to show to the DM. Instead, it is combined with a probabilistic surrogate model of the mapping from alternatives to attributes to find hypothetical attribute vectors over which the DM expresses preferences. Our work places qEUBO in the context of PBO and extends its definition to queries with $q > 2$ alternatives. We also  generalize the analysis of \cite{lin2022bope} by showing that maximizing qEUBO recovers a one-step optimal solution when responses are noise-free for $q > 2$. Finally, we provide a novel convergence analysis for qEUBO.

The connection between qEUBO and the one-step Bayes optimal policy relates qEUBO to the knowledge gradient class of sampling policies for sequential data collection, which are, by definition, one-step Bayes optimal \citep{frazier2008knowledge}. Knowledge gradient AFs have been widely used in standard BO \citep{wu2016parallel,wu2017bayesian,cakmak2020bayesian} and are known for their superior performance over simpler AFs such as expected improvement or Thompson sampling, especially when observations are noisy or the objective function is highly multi-modal \citep{wu2016parallel,balandat2020botorch} or when these simpler AFs are used in settings where they lack a meaningful interpretation. At the same time, they are typically very challenging to maximize \citep{balandat2020botorch}, often resulting in high computation times and degraded performance in problems of moderate dimension. The former is particularly problematic in the PBO setting where queries are often required to be generated in real time. Since qEUBO is significantly simpler to compute, we effectively overcome the computational burden commonly faced by knowledge gradient AFs. While this equivalence does not hold anymore when responses are noisy, we show that qEUBO still enjoys an additive constant approximation guarantee to the one-step Bayes optimal policy.

Our work is also related to the literature on dueling-bandits \citep{yue2012k,bengs2021preference}. Like in our problem setting, the DM is assumed to express preference feedback over sets (typically pairs) of alternatives. However, most of these approaches assume a finite number of alternatives and often also independence across pairs of alternatives. The double Thomson sampling strategy for dueling bandits proposed by \cite{wu2016double} is analogous to the Thompson sampling AF for PBO proposed by \citet{siivola2021preferential}. 

Finally, our work is also related to a broader stream of research on computational preference elicitation \citep{braziunas2006computational}. This work focuses on problems with a finite set of alternatives, where it suffices to estimate a ranking of the alternatives, or on estimating the underlying DM's utility function within a parametric class of functions. In particular, our work is closely related to \cite{viappiani2010optimal}, which derived an analogous result relating optimal recommendation sets with one-step Bayes optimal query sets. However,  \cite{viappiani2010optimal} proposes using optimal recommendation sets to query the DM, which differ from those queries selected by qEUBO when responses are noisy. As a consequence, our analysis under noisy responses also focuses on relating qEUBO to the one-step Bayes optimal policy rather than the policy that selects optimal recommendation sets.

\begin{figure*}[ht]
\centering
\begin{tabular}[b]{c}%
\includegraphics[width=0.32\textwidth]{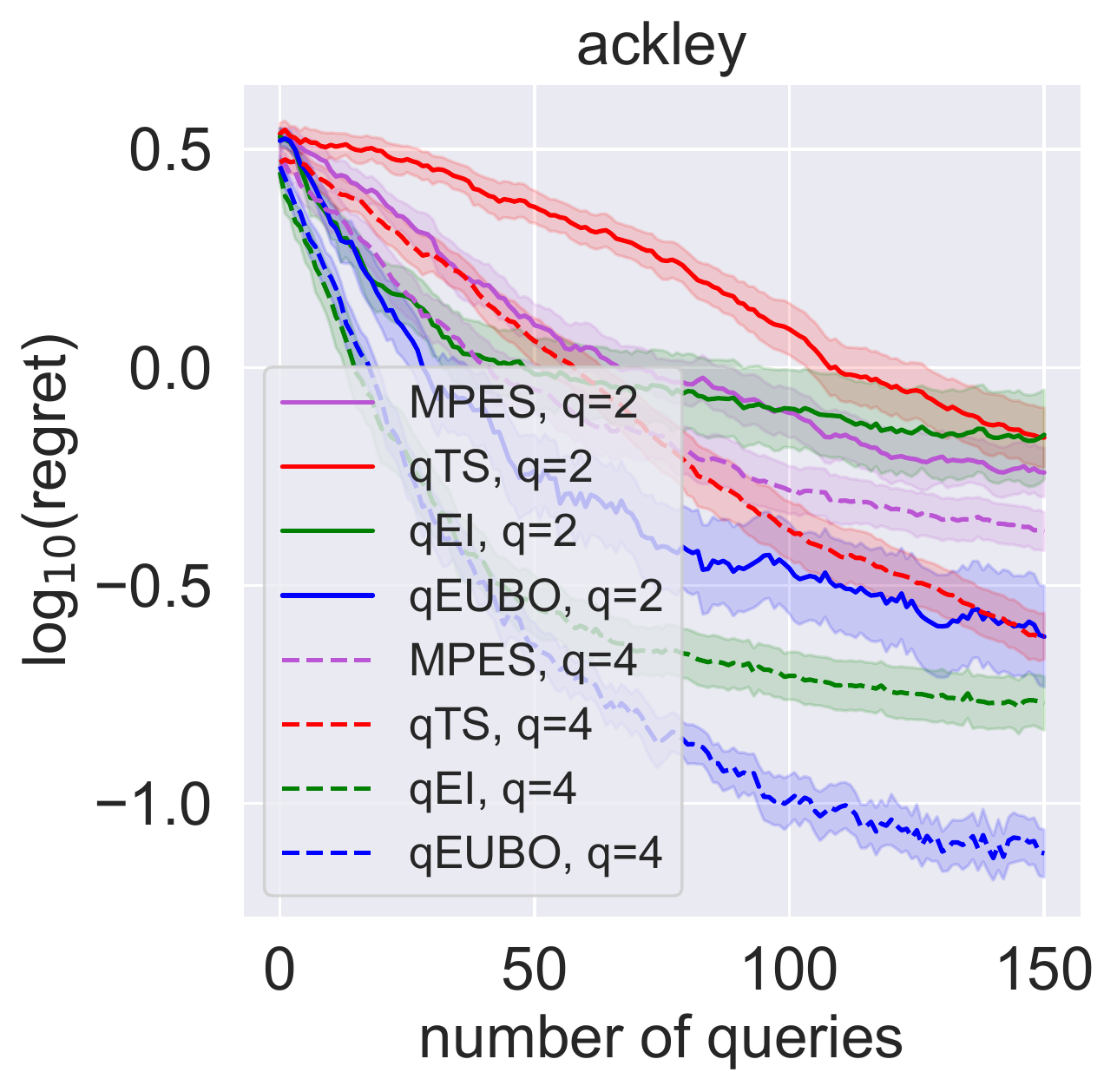}
  \includegraphics[width=0.32\textwidth]{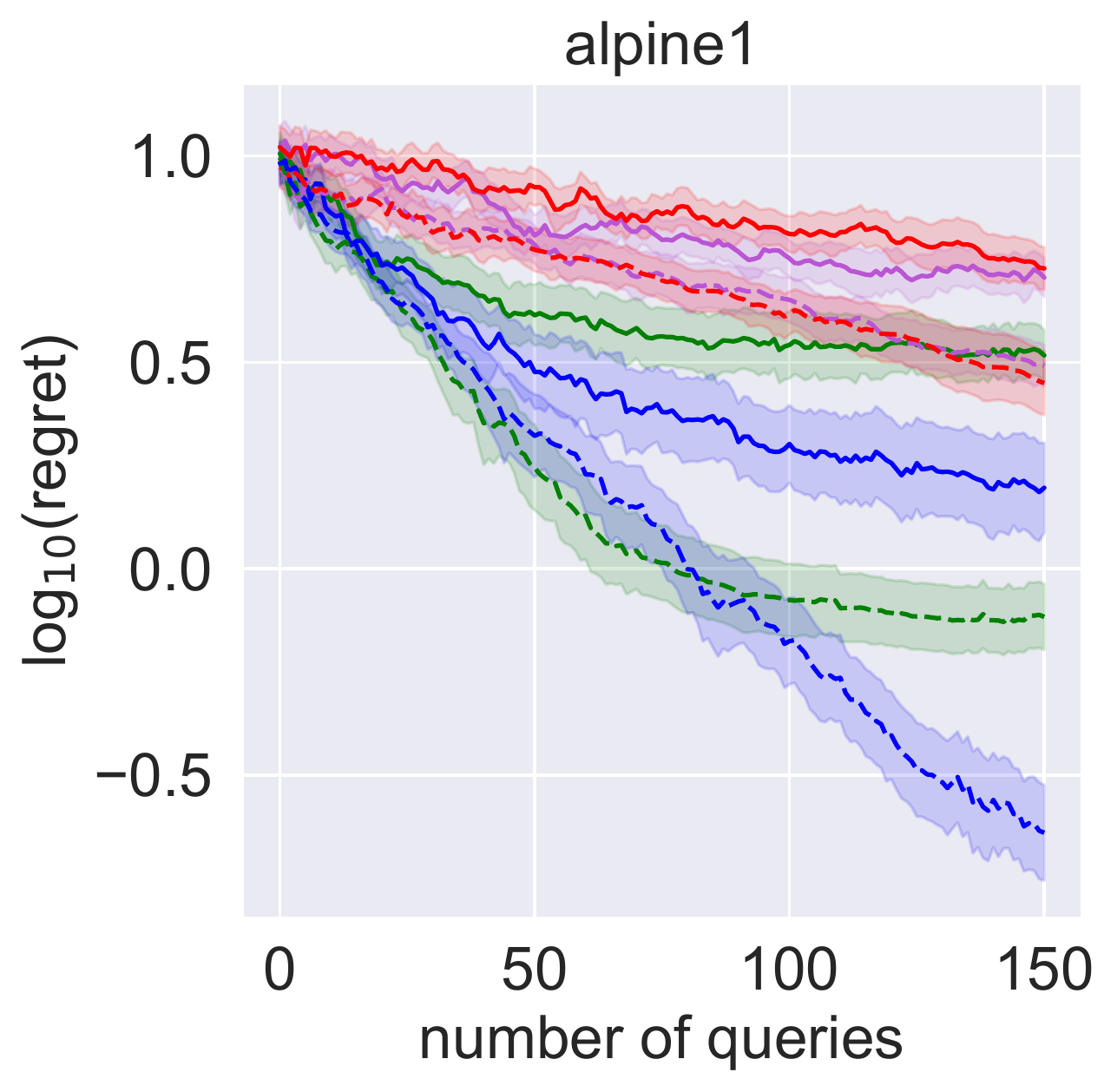}
  \includegraphics[width=0.32\textwidth]{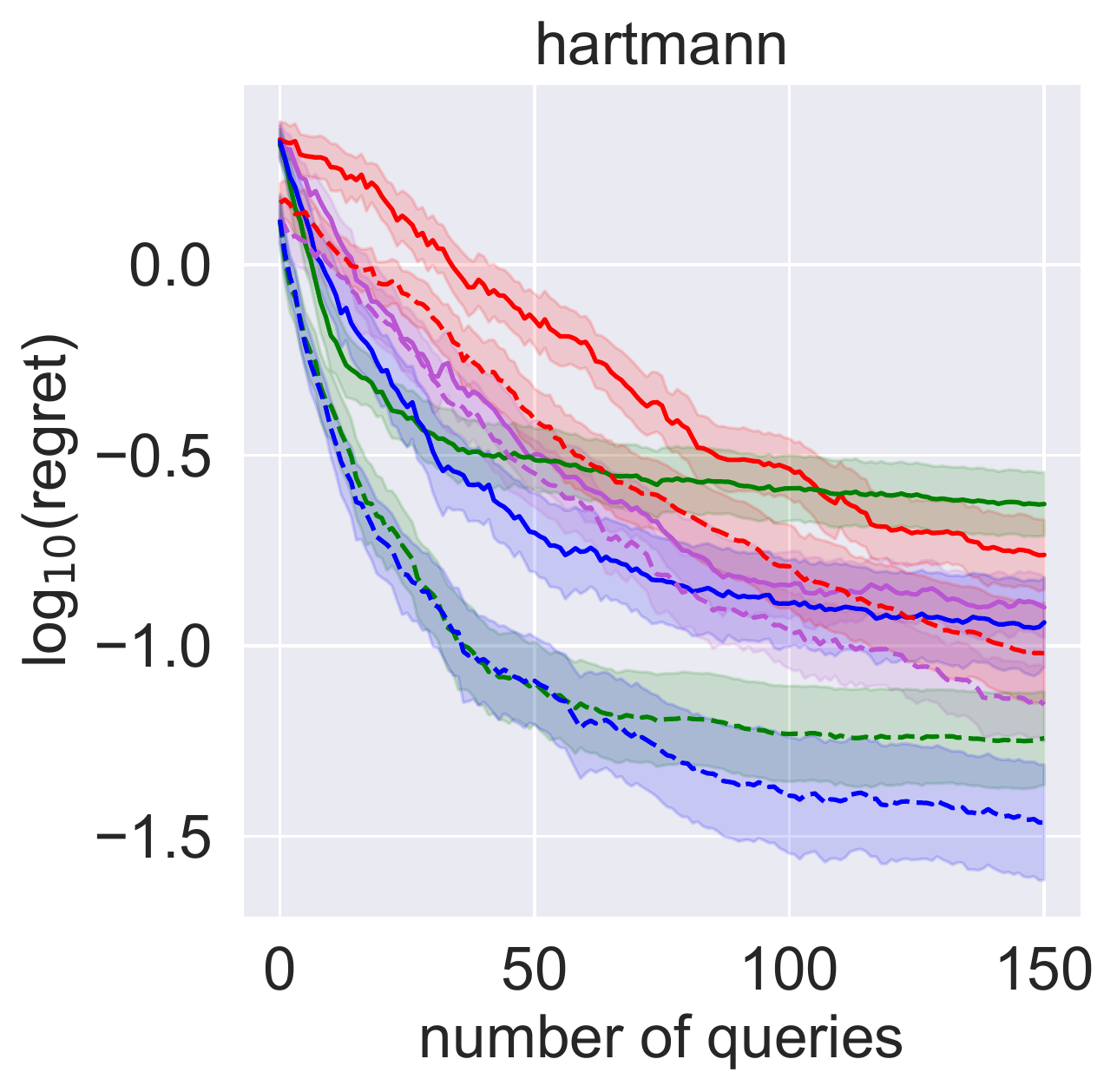}
  \\
  \includegraphics[width=0.32\textwidth]{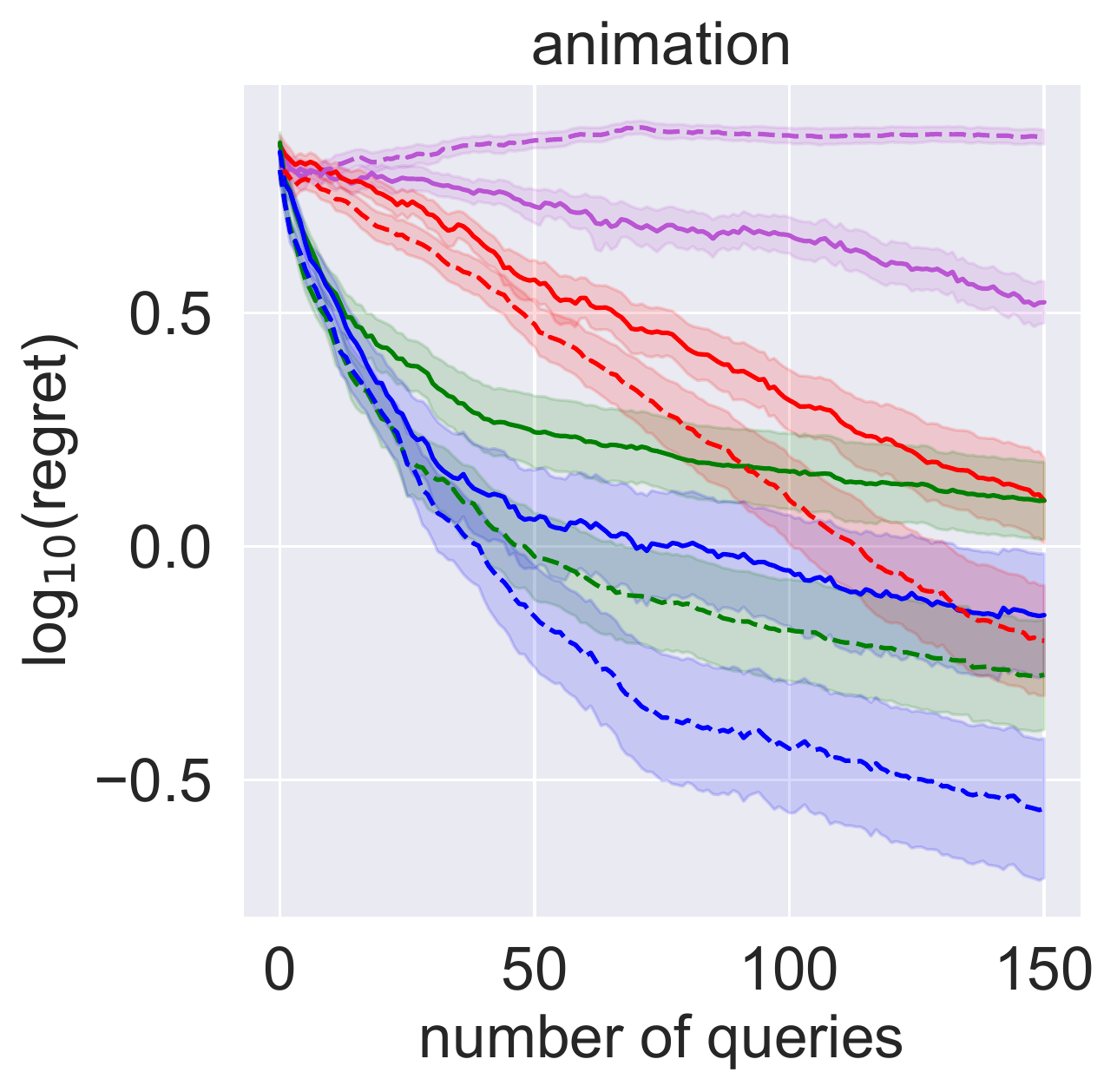}
\includegraphics[width=0.32\textwidth]{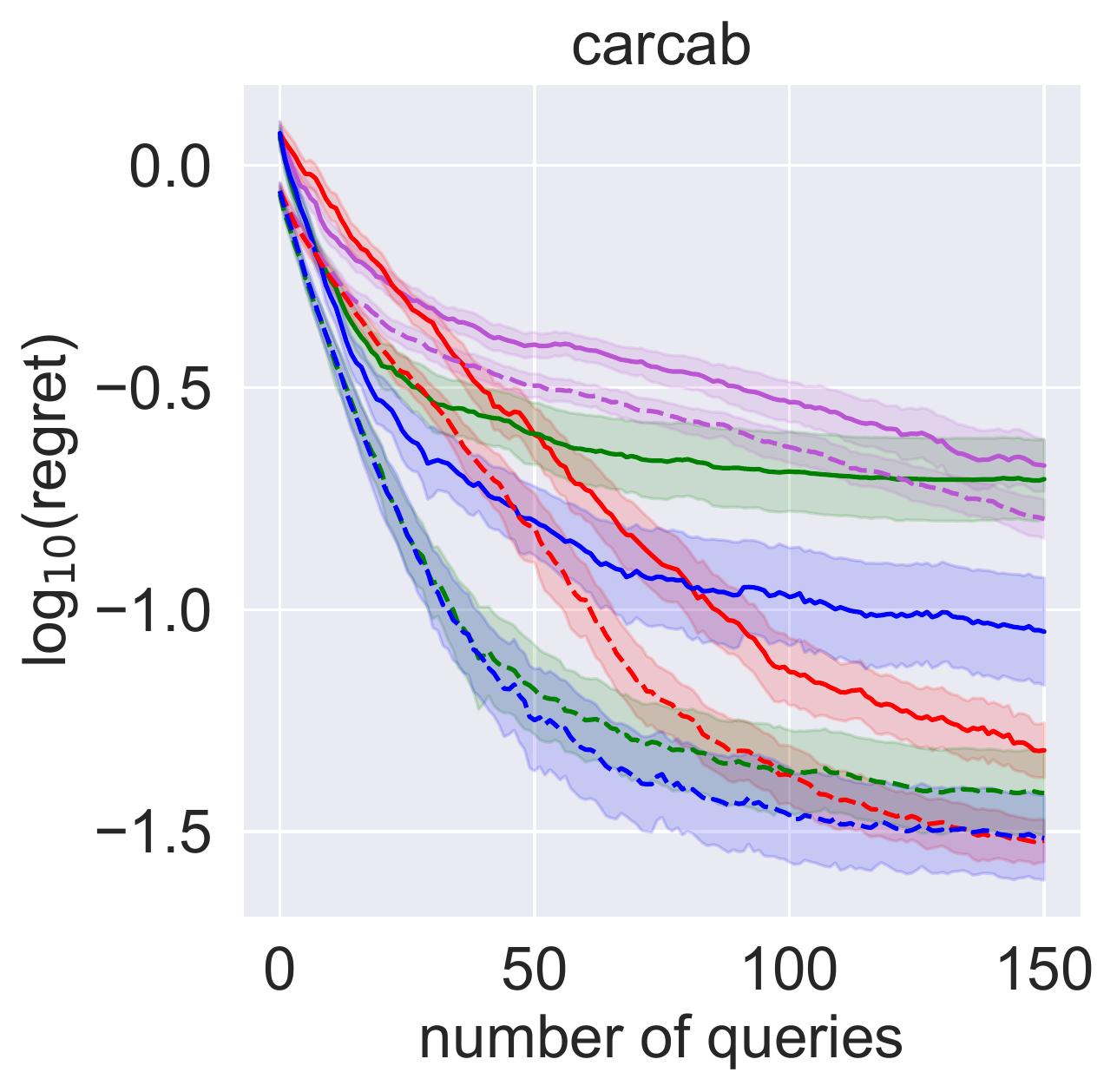}
  \includegraphics[width=0.32\textwidth]{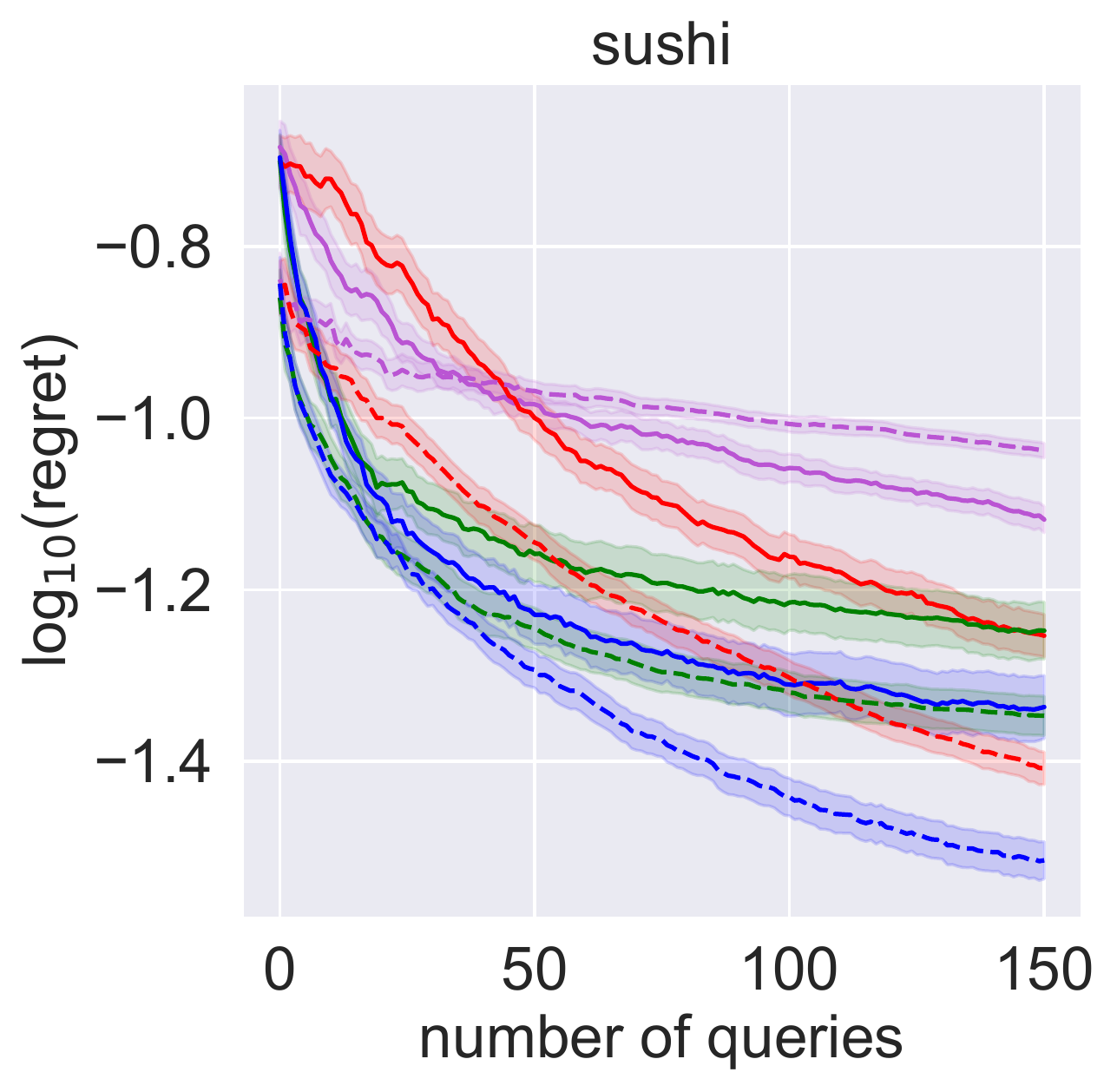}
 
 \end{tabular}
 \caption{log10(optimum value - utility value at the maximizer of the posterior mean) for $q\in \{2,4\}$. Algorithms are shown up to 150 queries. qEUBO outperforms all other algorithms on all problems. Including more alternatives per query ($q=4$) allows regret to decline more quickly.
 \label{fig:lr_batch_results}}
\end{figure*}

\section{PROBLEM SETTING}
We denote the space of \textit{alternatives} or \textit{options} by $\X$. Succinctly, our goal is to find the best possible alternative in $\X$ according to the DM's underlying preferences. These preferences are encoded via a latent utility function, $f:\X\rightarrow\R$. We model $f$ through a general Bayesian prior distribution. In our experiments we use a Gaussian process (GP) prior, but our derivation and analysis of qEUBO does not make this assumption and is applicable to more general priors. 

 At every interaction with the DM, an algorithm selects a \textit{query}, $X = (x_1,\ldots, x_q) \in \X^q$. The DM then expresses her most preferred alternative among these $q$ points. This response is denoted by $r(X)\in\{1, \ldots, q\}$, where $r(X)=i$ if $x_i$ is the alternative chosen by the DM. The DM's responses may be not be always consistent with the underlying utility function. We model this via a parametric likelihood function $L( \ \cdot \ ; \lambda): \R^q \rightarrow \R^q$ such that
 \begin{equation*}
    \prob(r(X)=i\mid f(X)) = L_i(f(X);\lambda),
\end{equation*}
where $L_i(f(X);\lambda)$ is the $i$-th component of $L(f(X) ; \lambda)$ and $\lambda$ is estimated along with other parameters of the model. Our numerical experiments and Theorem~\ref{thm:2} assume a logistic likelihood function of the form
\begin{equation*}
   L_i(f(X);\lambda) = \frac{\exp(f(x_i)/\lambda)}{\sum_{j=1}^q \exp(f(x_j)/\lambda)},
\end{equation*}
for $i = 1,\ldots, q$, where $\lambda \geq 0$ is the \textit{noise level} parameter. For $\lambda = 0$, the above expression is defined as its right-hand limit as $\lambda$ converges to $0$. It can be easily shown that $\lambda=0$ recovers a noise-free response likelihood. Theorem~\ref{thm:eubo_conv}  allows for a broader class of likelihood functions. Details are provided in Section~\ref{sec:thm3_4}.

Let $\D^{(n)} = \{\left(X_m, r(X_m)\right)\}_{m=1}^n$ denote the data collected after $n$ queries and $\E_n$ denote the conditional expectation given $\D^{(n)}$. Following the decision-theory literature, if we decide to stop at time $N$, we will recommend the point that maximizes the DM's expected utility given the data collected so far; i.e., an element of $\argmax_{x\in\X}\E_N[f(x)]$. Thus, we wish to select the queries $X_1, \ldots, X_N$ so that the expected utility received by the DM under our recommendation, $\max_{x\in\X} \E_N[f(x)]$, is as large as possible.

\section{qEUBO}
\subsection{qEUBO and the One-Step Bayes Optimal Policy}
To motivate our AF, we begin by discussing the one-step Bayes optimal policy, i.e., the policy that chooses at every iteration the query that would be optimal if it were the last one.  To this end, we define for an arbitrary query $X\in\X^q$,
\begin{equation*}
    V_n(X) = \E_n\left[\max_{x\in\X}\E_{n+1}[f(x)]\mid X_{n+1} = X\right].
\end{equation*}
This is the expected utility received by the DM if one last additional query $X_{n+1} = X$ is performed. The one-step Bayes optimal policy chooses at every iteration the query that maximizes $V_n$.

Since $\max_{x\in\X}\E_n[f(x)]$ does not depend on $X_{n+1}$, maximizing $V_n$ is equivalent to maximizing
\begin{equation*}
\E_n\left[\max_{x\in\X}\E_{n+1}[f(x)] - \max_{x\in\X}\E_n[f(x)]\mid X_{n+1} = X\right].
\end{equation*}
The above expression is analogous to the knowledge gradient AF from standard BO \citep{frazier2009knowledge, scott2011correlated, wu2016parallel}. As mentioned earlier, knowledge gradient AFs often outperform simpler AFs. However they are also very challenging to maximize due to their nested expectation-maximization structure.

Our main result shows that, when the DM responses are noise-free, maximizing $V_n$ is equivalent to maximizing a simpler AF. We define the \textit{expected utility of the best option (qEUBO)} AF by
\begin{equation*}
    \eubo_n(X) = \E_n\left[\max\{f(x_1),\ldots, f(x_q)\}\right].
\end{equation*}
Under this definition, the following result holds.
\begin{theorem}
\label{thm:1}
Suppose the DM's responses are noise-free. Then,
\begin{equation*}
 \argmax_{X\in\X^q}\eubo_n(X) \subseteq \argmax_{X\in\X^q}V_n(X).   
\end{equation*}
\end{theorem}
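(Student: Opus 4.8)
The plan is to establish the inclusion by squeezing $V_n$ and $\eubo_n$ against one another through two complementary inequalities: a pointwise lower bound $V_n(X) \ge \eubo_n(X)$ valid for every $X \in \X^q$, and an inequality between the optima $\sup_{X} V_n(X) \le \sup_{X} \eubo_n(X)$. Combined, the lower bound gives $\sup_X V_n(X) \ge \sup_X \eubo_n(X)$, so the two suprema coincide. The inclusion is then immediate: if $X^\star \in \argmax_X \eubo_n(X)$, then $V_n(X^\star) \ge \eubo_n(X^\star) = \sup_X \eubo_n(X) = \sup_X V_n(X)$, whence $V_n(X^\star) = \sup_X V_n(X)$ and $X^\star \in \argmax_X V_n(X)$.

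For the lower bound, the key observation is that after any query $X = (x_1,\ldots,x_q)$ we are free to recommend the chosen option $x_{r(X)}$ instead of the Bayes-optimal point, so $\max_{x\in\X}\E_{n+1}[f(x)] \ge \E_{n+1}[f(x_{r(X)})]$. Taking $\E_n[\,\cdot \mid X_{n+1}=X]$ and applying the tower property, noting that $r(X)$ is measurable with respect to $\D^{(n+1)}$, collapses the inner conditional expectation and yields $V_n(X) \ge \E_n[f(x_{r(X)})]$. This is the only place the noise-free hypothesis is used: without noise the DM selects a true maximizer, so $f(x_{r(X)}) = \max_i f(x_i)$ almost surely, and therefore $\E_n[f(x_{r(X)})] = \eubo_n(X)$.

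For the upper bound, which I expect to be the main obstacle, the idea is to turn an arbitrary query $\bar X$ into a recommendation set whose qEUBO dominates $V_n(\bar X)$. For each possible response value $i \in \{1,\ldots,q\}$, let $y_i$ be a Bayes-optimal recommendation conditional on observing that response, i.e.\ $y_i \in \argmax_{x\in\X} \E_n[f(x)\mid X_{n+1}=\bar X, r(\bar X)=i]$; each $y_i$ is a deterministic point given $\D^{(n)}$. Assembling $X' = (y_1,\ldots,y_q)$ and expanding $V_n$ by the law of total expectation over the response gives $V_n(\bar X) = \sum_{i=1}^q \prob_n(r(\bar X)=i)\,\E_n[f(y_i)\mid r(\bar X)=i] = \E_n[f(y_{r(\bar X)})]$. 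Since $f(y_{r(\bar X)}) \le \max_i f(y_i)$ pointwise, we obtain $V_n(\bar X) \le \E_n[\max_i f(y_i)] = \eubo_n(X') \le \sup_{X} \eubo_n(X)$, and taking the supremum over $\bar X$ establishes the claim.

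The subtleties to verify are largely measure-theoretic: that $r(X)$ is $\D^{(n+1)}$-measurable so the tower property applies; that the conditional maximizers $y_i$ are attained so $X'$ is well defined; and that ties among $f(x_1),\ldots,f(x_q)$ occur with probability zero under the noise-free response (or are resolved by a fixed tie-breaking rule) so that $f(x_{r(X)}) = \max_i f(x_i)$ holds. It is worth flagging that the upper-bound argument never invokes the noise-free assumption, which anticipates why only an additive approximation survives in the noisy regime analyzed later.
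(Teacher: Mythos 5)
Your proof is correct and takes essentially the same route as the paper's: your pointwise lower bound $V_n(X)\geq \eubo_n(X)$ (using the noise-free response to identify $f(x_{r(X)})$ with $\max_i f(x_i)$) is the paper's inequality \eqref{eq:lemma2}, and your response-contingent recommendation set $X'=(y_1,\ldots,y_q)$ is exactly the paper's $X^+(X)$ construction yielding \eqref{eq:lemma}. The only difference is cosmetic: you conclude directly via equality of the suprema, whereas the paper packages the same two inequalities into a proof by contradiction; your side remark that the upper bound never uses the noise-free hypothesis is also consistent with how the paper reuses that step in Theorem~\ref{thm:2}.
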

Thus, to find a maximizer of $V_n$, it suffices to maximize $\eubo_n$. This is a significantly simpler task as it does not require solving a nested stochastic optimization problem. When the posterior over $f$ is Gaussian or approximated via a Gaussian distribution (e.g., via a Laplace approximation), $\eubo_n$ can be efficiently maximized via sample average approximation \citep{balandat2020botorch}. This is the approach we pursue in our experiments. Moreover, if $q=2$, $\eubo_n$ has a closed form expression in terms of the posterior mean and covariance functions \citep{lin2022bope}.

When the DM's responses are noisy, maximizing $\eubo_n$ is no longer equivalent to maximizing $V_n$. However, the result below shows that if noise in the DM's responses follows a logistic likelihood, maximizing $\eubo_n$ still recovers a high-quality query. Formally, we show the following.
\begin{theorem}
\label{thm:2}
Suppose that the DM's responses follow the logistic likelihood function with parameter $\lambda$ defined above. Denote $V_n$ as $V_n^\lambda$ to make its dependence on $\lambda$ explicit. If $X^*\in\argmax_{X\in\X^q}\eubo_n(X)$, then
\begin{equation*}
    V_n^\lambda(X^*) \geq \max_{X\in\X^q}V_n^0(X) - \lambda C,
\end{equation*}
where $C = L_W((q-1)/e)$, and $L_W$ is the Lambert $W$ function \citep{corless1996lambertw}.
\end{theorem}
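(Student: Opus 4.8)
The plan is to sandwich $V_n^\lambda(X^*)$ between the posterior-expected utility of the DM's actually-chosen alternative and $\eubo_n(X^*)$, pay a penalty $\lambda C$ for the gap between a hard maximum and the logistic (softmax) average, and then use Theorem~\ref{thm:1} to identify $\eubo_n(X^*)$ with $\max_{X}V_n^0(X)$. First I would record a response-agnostic lower bound valid for every $\lambda$ and every query $X$: the posterior-optimal recommendation at time $n+1$ is at least as good as recommending the chosen alternative $x_{r(X)}$, so $\max_{x\in\X}\E_{n+1}[f(x)]\ge\E_{n+1}[f(x_{r(X)})]$. Taking $\E_n$ and applying the tower property gives $V_n^\lambda(X)\ge\E_n[f(x_{r(X)})]$; conditioning on $f$ and substituting the logistic likelihood rewrites the right-hand side as $\E_n\big[\sum_{i=1}^q f(x_i)\,L_i(f(X);\lambda)\big]$, the posterior-expected softmax average of the queried utilities.

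The heart of the argument is a deterministic pointwise bound on the gap between the maximum and this softmax average:
\begin{equation*}
\max_{1\le i\le q} f(x_i)-\sum_{i=1}^q f(x_i)\,L_i(f(X);\lambda)\;\le\;\lambda C,\qquad C=L_W\big((q-1)/e\big).
\end{equation*}
To prove it I would set $d_i=\max_j f(x_j)-f(x_i)\ge0$ and $s_i=d_i/\lambda$, so the left side equals $\lambda\,\big(\sum_i s_i e^{-s_i}\big)\big/\big(\sum_j e^{-s_j}\big)$, where the maximizing index contributes $s=0$. Maximizing $\Phi(s)=\big(\sum_i s_i e^{-s_i}\big)\big/\big(1+\sum_i e^{-s_i}\big)$ over the remaining $q-1$ nonnegative coordinates, the stationarity conditions force all of them to a common value $s=1+\Phi$; substituting yields $(s-1)e^{s-1}=(q-1)/e$, hence $s-1=L_W((q-1)/e)$ and the maximal gap is exactly $\lambda C$. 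Verifying that $\Phi\to0$ as any coordinate tends to $0$ or to $\infty$ confirms this interior stationary point is the global maximum. This Lambert-$W$ optimization is the step I expect to be the main obstacle; everything else is bookkeeping.

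Taking $\E_n$ of the pointwise bound gives $\E_n[f(x_{r(X)})]\ge\eubo_n(X)-\lambda C$, so $V_n^\lambda(X)\ge\eubo_n(X)-\lambda C$ for every $X$, and in particular at $X^*$. It then remains to show $\eubo_n(X^*)\ge\max_X V_n^0(X)$. Since $X^*$ maximizes $\eubo_n$, this follows from the identity $\max_X\eubo_n(X)=\max_X V_n^0(X)$ underlying Theorem~\ref{thm:1}. For the direction I need, given any query $X$ I would replace each alternative by the posterior-optimal recommendation under the corresponding noise-free response, obtaining a query $X'$ with $\eubo_n(X')\ge V_n^0(X)$ (because $V_n^0(X)=\E_n[f(x^*_{r(X)})]\le\E_n[\max_i f(x^*_i)]=\eubo_n(X')$); the reverse inequality comes from recommending the chosen point under noise-free responses, which yields $V_n^0\ge\eubo_n$ pointwise. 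Combining, $V_n^\lambda(X^*)\ge\eubo_n(X^*)-\lambda C\ge\max_X V_n^0(X)-\lambda C$, which is the claim.
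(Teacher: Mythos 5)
Your proof is correct and follows essentially the same route as the paper's: you establish $V_n^\lambda(X)\ge\E_n[f(x_{r(X)})]\ge\eubo_n(X)-\lambda C$ (the paper's Lemmas~\ref{lemma:a3} and~\ref{lemma:a2}, resting on the Lambert-$W$ softmax-gap bound of Lemma~\ref{lemma:a1}, whose proof the paper outsources to \cite{viappiani2010optimal} while you derive it directly --- your stationarity computation forcing all active coordinates to the common value $1+\Phi$ and yielding $(s-1)e^{s-1}=(q-1)/e$ is precisely the content of that cited result), and then close via $\eubo_n(X^*)\ge\eubo_n(X^+(X))\ge V_n^0(X)$, which is inequality~\eqref{eq:lemma} from the proof of Theorem~\ref{thm:1}, exactly as the paper does. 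The one imprecision is your boundary check in the Lambert-$W$ step when $q>2$: as a single coordinate $s_k\to 0$ the function $\Phi$ does not tend to $0$ but instead reduces to an instance with one fewer free coordinate and an extra unit in the denominator, so ruling out boundary maxima rigorously requires an induction on the number of active coordinates (or a comparable compactification argument) --- a fixable detail in the very step the paper itself defers to the literature.
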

The above two results extend those shown by \cite{lin2022bope} to the logistic likelihood and $q > 2$. Their proofs can be found in Section~\ref{sec:thm1_2}.

\subsection{qEUBO and qEI}
\label{sec:qEI}

\begin{figure*}
    \centering
    \includegraphics[width=0.32\textwidth]{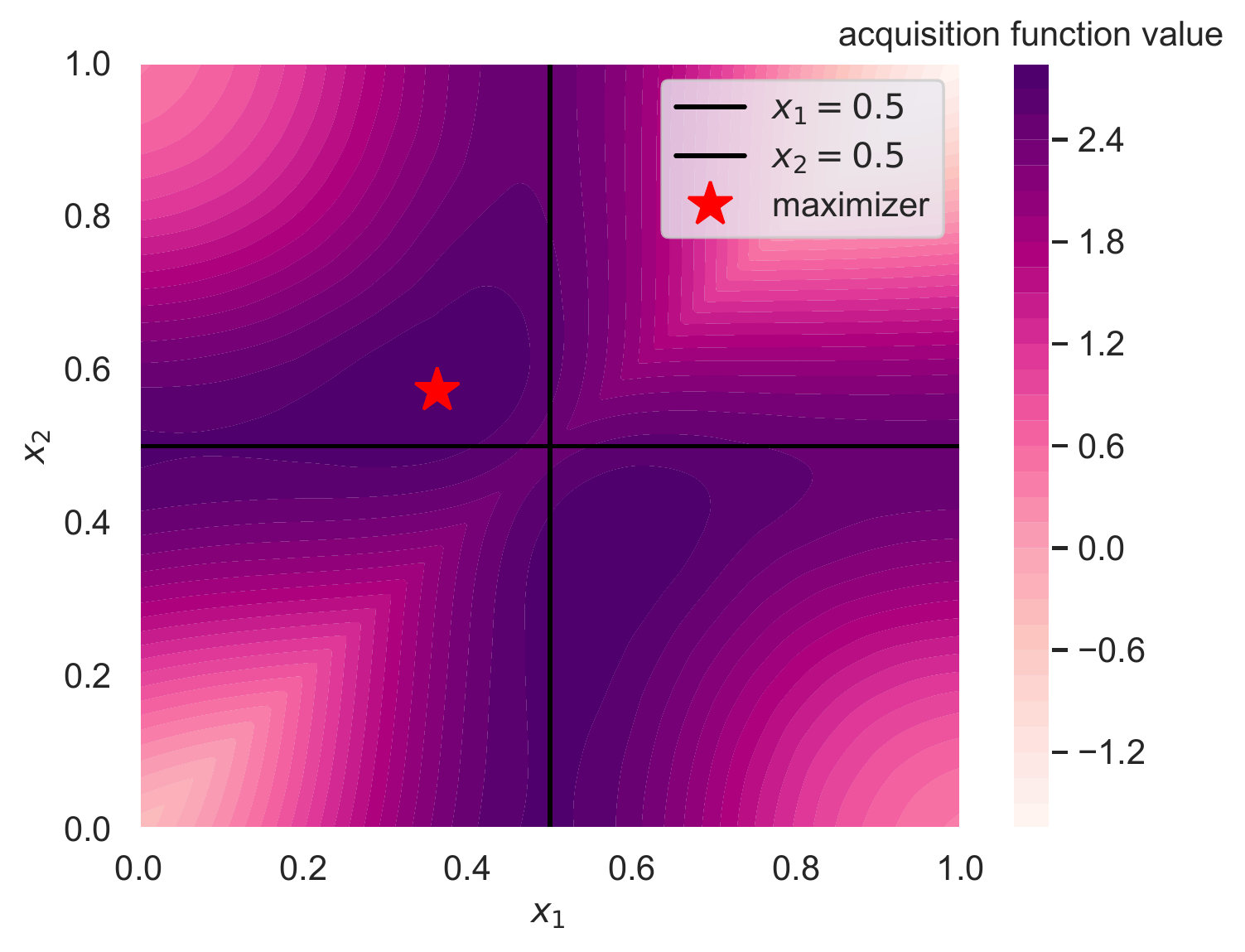}
  \includegraphics[width=0.32\textwidth]{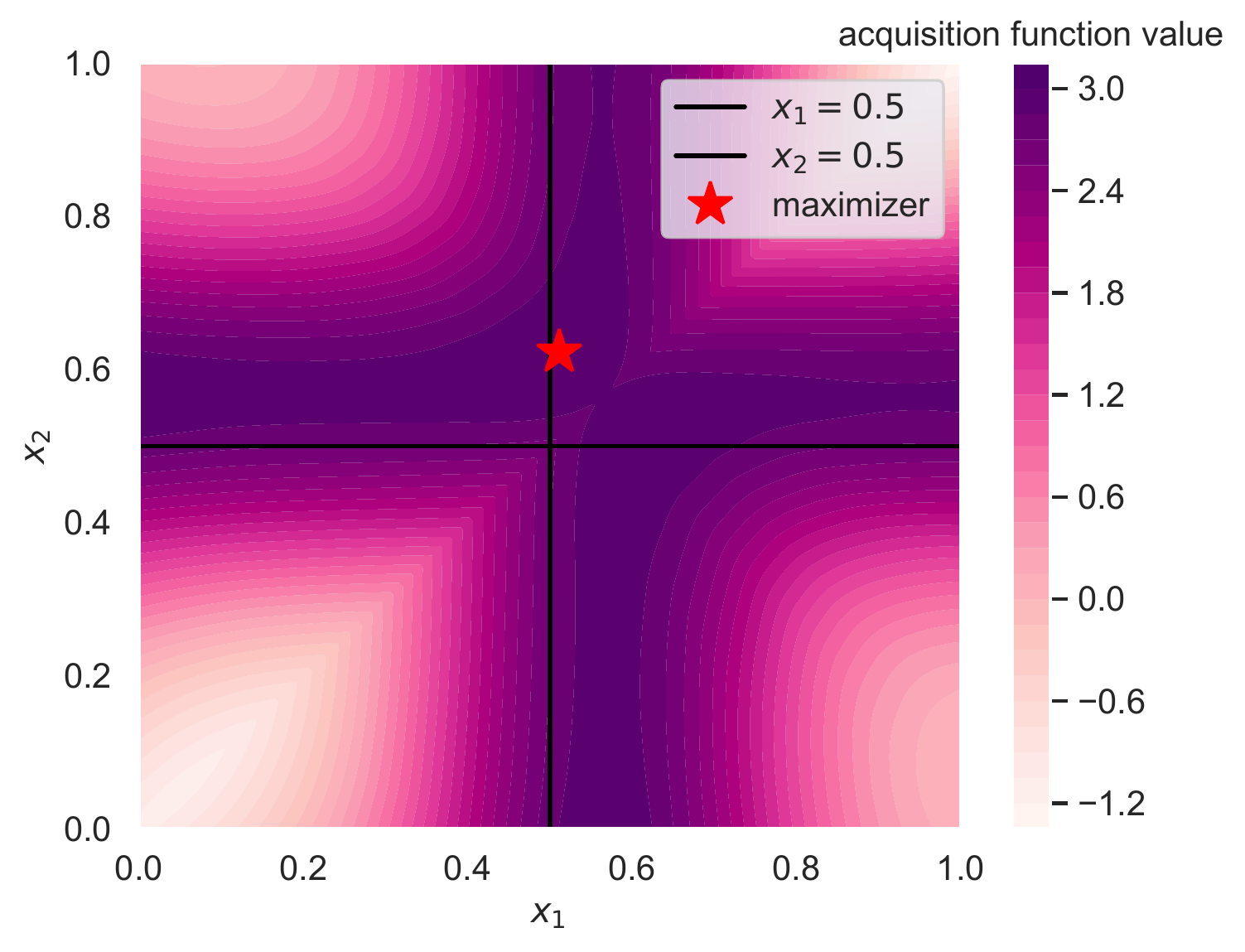}
  \includegraphics[width=0.32\textwidth]{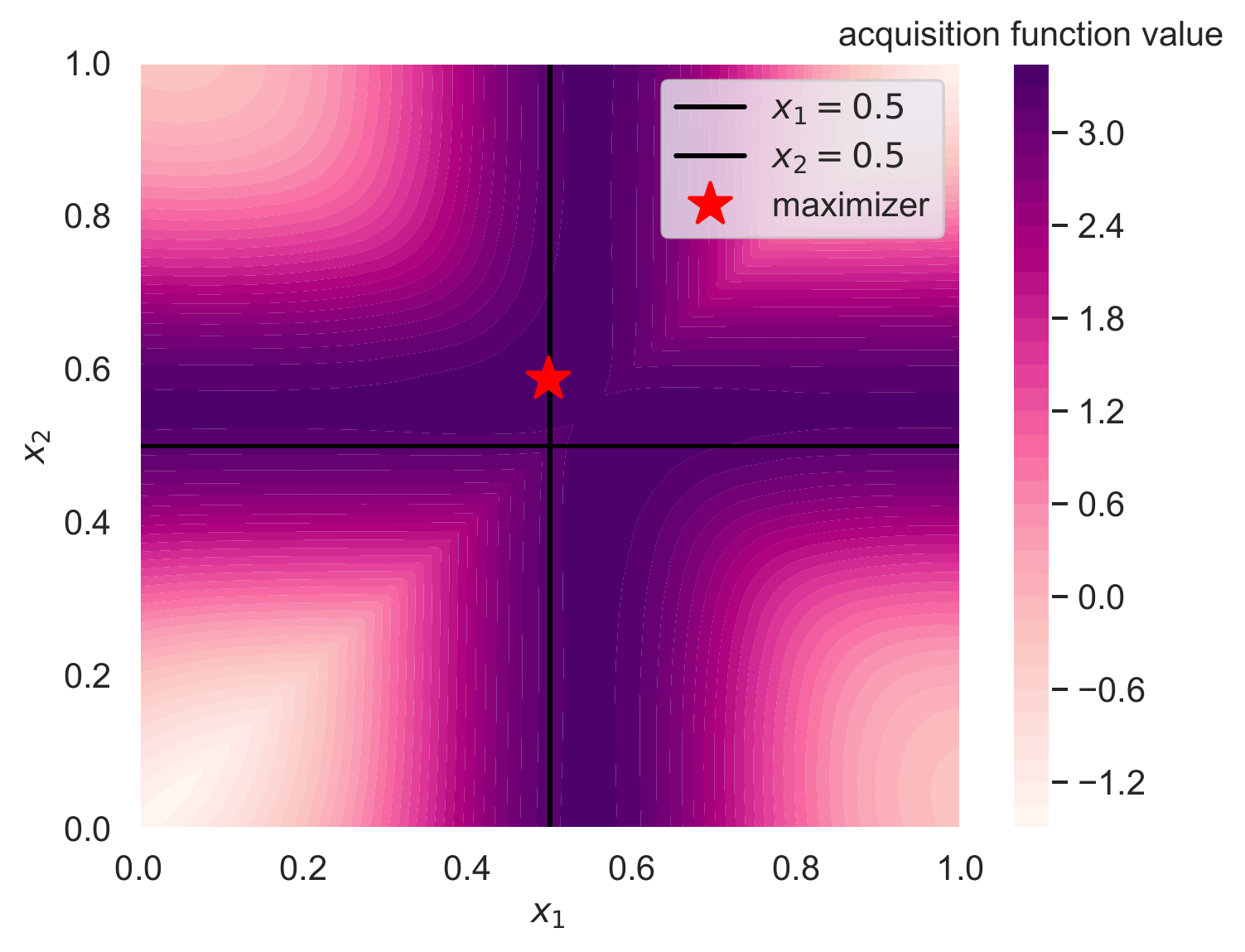}
  \\
  \includegraphics[width=0.32\textwidth]{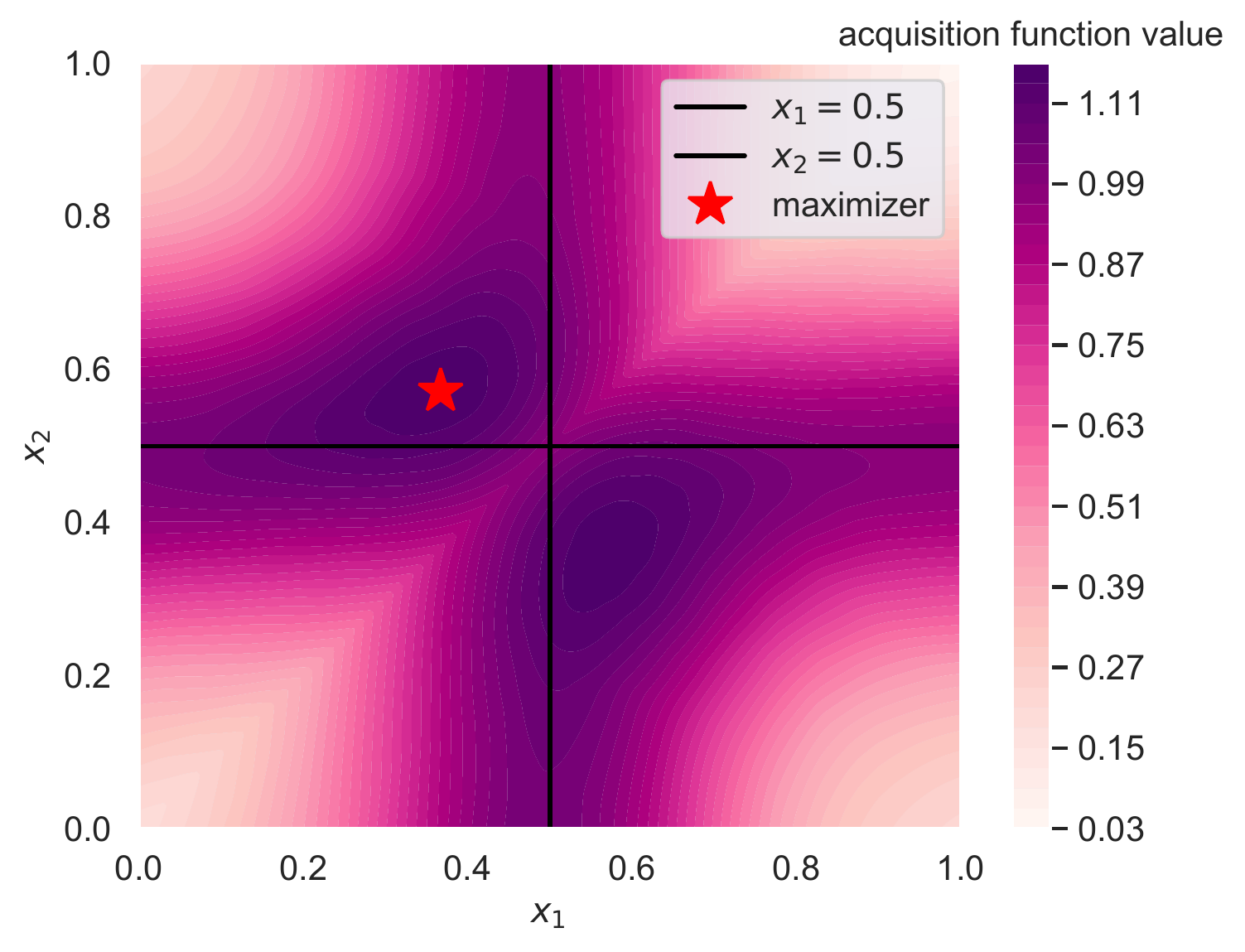}
\includegraphics[width=0.32\textwidth]{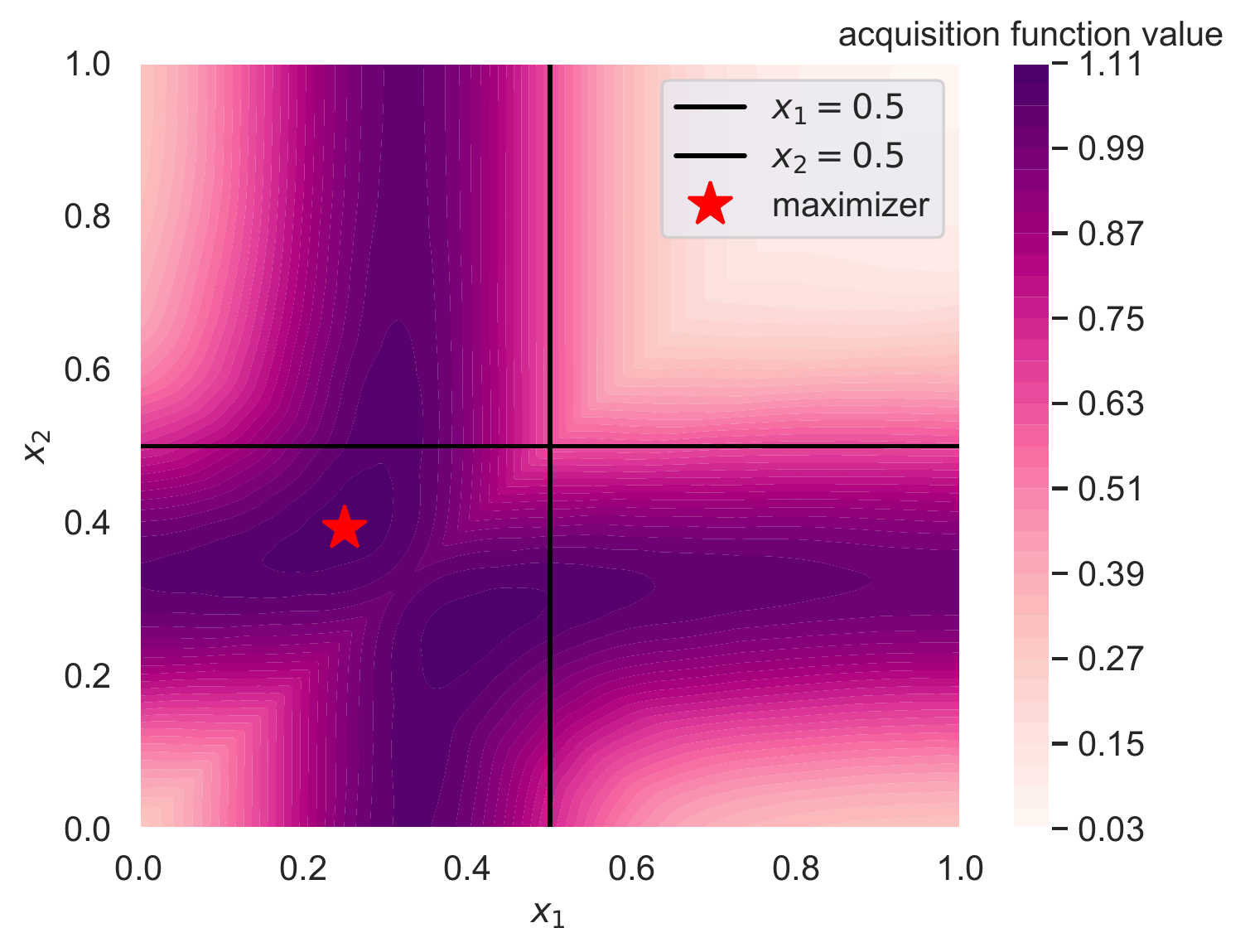}
  \includegraphics[width=0.32\textwidth]{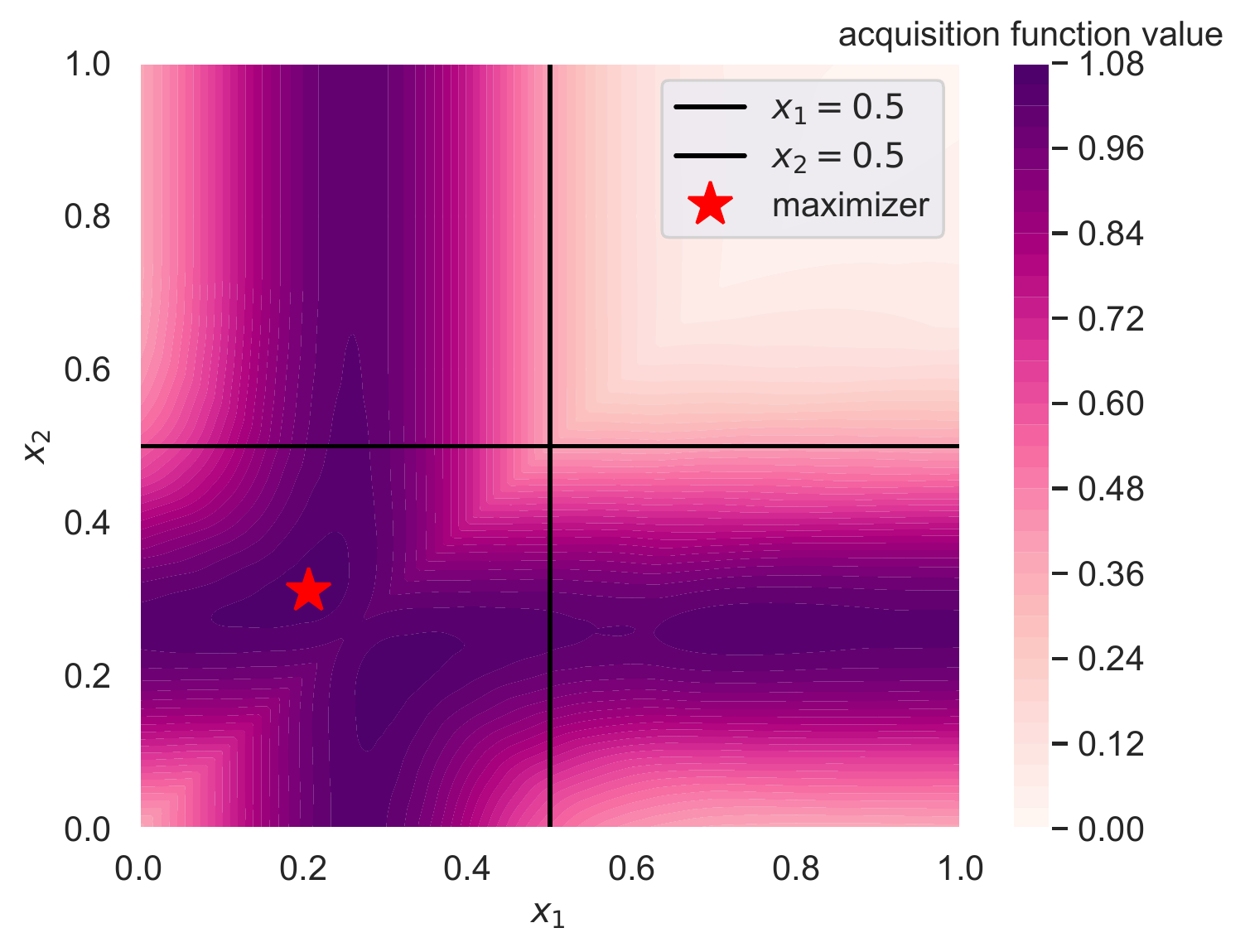}
    \caption{Comparison of qEUBO (top row) against qEI (bottom row) on a 1-dimensional problem (a quadratic function with a single maximum at $x=0.5$) with $q=2$. The first column shows the AF over the two alternatives to be included in the query after training a preferential GP on $5$ randomly generated queries. The second column shows the AF after 5 more queries, generated by the given AF. The third column shows the AF after 5 more queries generated via that row's AF.}
    \label{fig:qeubo_vs_qei}
\end{figure*}

The batch expected improvement AF, commonly known as qEI, was developed in the context of parallel BO \citep{ginsbourger2008multi,wang2016parallel}, where it enjoys a meaningful decision-theoretic interpretation. It was adapted to the PBO setting by \cite{siivola2021preferential}. While qEI lacks a meaningful interpretation in the PBO setting, it often has good performance. Here, we show that qEUBO is related to qEI. This connection sheds light on qEI's strong empirical performance as an AF for PBO. However, we also show qEI has significant drawbacks that cause it to have poor performance in some practical scenarios.  

Observe that 
$\eubo_n(X) = \E_n[F(X)]$ 
where 
$F:\X^q \rightarrow \R$ is defined by $F(X) = \max_{i=1,\ldots, q}\left\{f(x_1),\ldots, f(x_q)\right\}$. 
Moreover, if $I_n$ is any value that does not depend on $X$, then maximizing $\eubo_n$ produces the same query as maximizing $\E_n\left[F(X) - I_n\right]$. Now observe that, if $I_n= \max\left\{\E_n[f(x)] : x\in \cup_{m=1}^nX_m\right\}$, then $\qei_n(X) = \E_n[\{F(X) - I_n\}^+]$ recovers the qEI AF proposed by \cite{siivola2021preferential}. From these expressions we observe that, if $F(X)$ is typically larger than $I_n$ so that $F(X)-I_n = \{F(X)-I_n\}^+$, then optimizing $\eubo_n(X)$ should produce an optimal $X$ similar to the one obtained by optimizing $\qei_n(X)$. This is often the case in early iterations, when $I_n$ is small. However, as $I_n$ becomes larger, we expect this to occur less frequently, making qEI and qEUBO produce more different queries. Moreover, when they diverge, qEI can perform quite poorly, as we will see later. 

When the incumbent alternative (the one whose posterior mean achieves $I_n$) has low variance, as typically results from comparing a good alternative against many other alternatives, then qEI will become increasingly reluctant to include it or points near it into the next query.
In standard BO, this reluctance is appropriate because re-measuring the incumbent will not generate an improvement.
But, in PBO, there is great value in comparing an incumbent alternative to another alternative that might be better --- this is a primary way that we evaluate new alternatives.

This is also consistent with experimental results discussed later in Section~\ref{sec:experiments} and shown in Figures~\ref{fig:lr_results}, \ref{fig:lr_batch_results} and \ref{fig:baseline}: qEUBO and qEI tend to perform similarly early on when we have asked the DM few queries; later, qEI tends to stall, while qEUBO continues to reduce its simple regret.
This intuition is also codified in Theorem~\ref{thm:qEI} in the next section, which shows an example in which qEI fails to be consistent.

To illustrate this further, Figure~\ref{fig:qeubo_vs_qei} compares qEUBO and qEI on a simple 1-dimensional example problem (a quadratic function with a single maximum at $x=0.5$).
For each AF, we first trained a preferential GP model using 5 randomly chosen comparisons (left column of Figure~\ref{fig:qeubo_vs_qei}), then generated 5 more (middle column of Figure~\ref{fig:qeubo_vs_qei}), and additionally 5 more comparisons (right column of Figure~\ref{fig:qeubo_vs_qei}) using qEUBO (top rows) and qEI (bottom rows) respectively.
After the first 5 randomly generated comparisons, the posteriors are the same, the contours of the two AFs are similar because $I_n$ is small, and the two methods make similar queries.
After 5 more queries, generated using each AF, qEUBO has already learned that $0.5$ is a good solution and is comparing this with other alternatives. In contrast, qEI is choosing not to compare with $0.5$. This pattern continues after 5 additional queries.


\subsection{Convergence Analysis of qEUBO and qEI}
We end this section by discussing the convergence properties of qEUBO and qEI. We show that, under sufficient regularity conditions, qEUBO's Bayesian simple regret converges to zero at a rate $o(1/n)$. We also show that there are problem instances where qEI has Bayesian simple regret bounded below by a positive constant; in particular, qEI is not asymptotically consistent.

Our analysis assumes that $\X$ is finite, $q=2$, and other technical conditions described in Section~\ref{sec:thm3_4}. These conditions hold in a broad range of settings. For example, they hold under the logistic likelihood function discussed above if the prior distribution on $f$ is such that 
\begin{align*}
    \delta \leq |f(x) - f(y)| \leq \Delta
\end{align*}
almost surely whenever $x\neq y$ for some $\Delta\geq\delta>0$. They also hold for general non-degenerate GP prior distributions if the likelihood function satisfies
\begin{equation*}
L(f(X);\lambda) = a
\end{equation*}
for some fixed $a > 1/2$ whenever $f(x_1) \neq f(x_2)$.

Under these conditions, we show the following results. Formal statements and proofs can be found in Section~\ref{sec:thm3_4}.

\begin{theorem}
\label{thm:eubo_conv}
Assume  the sequence of queries is chosen by maximizing qEUBO  and the assumptions described in Section~\ref{sec:thm3_4} hold. Then, $\E[f(x^*) - f(\widehat{x}_n^*)] = o(1/n)$, where $x^* = \argmax_{x\in\X}f(x)$ and $\widehat{x}_n^* \in \argmax_{x\in\X} \E_n[f(x)]$.
\end{theorem}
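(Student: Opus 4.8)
The plan is to track the \emph{value process} $V_n^* := \max_{x\in\X}\E_n[f(x)]$ and reduce the Bayesian simple regret to its growth. Since $\widehat{x}_n^*$ is $\D^{(n)}$-measurable, the tower property gives $\E[f(\widehat{x}_n^*)] = \E[\E_n[f(\widehat{x}_n^*)]] = \E[V_n^*]$, so the regret is $R_n := \E[f(x^*) - f(\widehat{x}_n^*)] = \E[\max_x f(x)] - \E[V_n^*]$. For each fixed $x$, $\E_n[f(x)]$ is a martingale, so $V_n^*$, a pointwise maximum of martingales, is a submartingale; hence $\E[V_n^*]$ is nondecreasing and $R_n$ is nonincreasing. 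Under the stated regularity (in particular $|f(x)-f(y)|\le\Delta$, which secures integrability) the bounded submartingale $V_n^*$ converges in $L^1$, and writing $I_n := \E_n[V_{n+1}^*] - V_n^*\ge 0$ for its one-step drift, telescoping gives $\E[V_N^*]-\E[V_n^*]=\sum_{m=n}^{N-1}\E[I_m]$, which once the $L^1$ limit is identified in the next step yields $R_n = \sum_{m\ge n}\E[I_m]$.

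Next I would establish \emph{consistency}, i.e. that the limit of $V_n^*$ equals $\max_x f(x)$, so that $R_n\to 0$. The lever is that qEUBO is one-step optimal (exactly, by Theorem~\ref{thm:1}, or approximately, by Theorem~\ref{thm:2} and its analogue under the conditions assumed here), so $I_n = \max_{X\in\X^q} V_n(X) - V_n^*$ up to a controlled slack. If $V_n^*$ failed to reach $\max_x f(x)$, there would remain an alternative whose comparison against the incumbent is informative---guaranteed by the likelihood conditions, which force every distinct pair to be correctly ranked with probability bounded away from $1/2$ (either via $\delta\le|f(x)-f(y)|$ under the logistic model, or directly via $a>1/2$)---so the one-step improvement would stay bounded away from $0$, contradicting $\sum_m\E[I_m] = R_0 <\infty$.

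The heart of the argument is the \emph{rate}, for which I would lower-bound the best one-step improvement by the remaining conditional regret $G_n := \E_n[\max_x f(x)] - V_n^*$. Comparing the one-step-optimal value to a $k$-query look-ahead and exploiting diminishing returns of information should give $I_n \ge \tfrac{1}{k}\bigl(V_n^{(k)} - V_n^*\bigr)$, where $V_n^{(k)}$ is the optimal $k$-query value; since $\X$ is finite and each comparison is informative, $V_n^{(k)}\uparrow \E_n[\max_x f(x)]$ at a geometric-in-$k$ rate under the stated conditions. Optimizing over $k$ converts this into a bound of the form $\E[I_n]\ge c\,R_n/\log(1/R_n)$ (and, noise-free, the stronger $\E[I_n]\ge R_n/k_0$ for a fixed $k_0$, since the optimum is then identified after finitely many comparisons). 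Feeding this into $R_n - R_{n+1} = \E[I_n]$ produces a recursion whose solution decays faster than any polynomial, so in particular $\sum_m m\,\E[I_m]<\infty$; equivalently $\sum_n R_n<\infty$, and because $R_n$ is nonincreasing this forces $n R_n\to 0$, i.e. $R_n = o(1/n)$.

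The main obstacle I anticipate is the improvement lower bound of the third paragraph: rigorously controlling the rate at which the look-ahead value $V_n^{(k)}$ approaches the full-information value $\E_n[\max_x f(x)]$---uniformly enough to survive the Bayesian average over $f$, where arbitrarily small utility gaps are possible---and doing so while qEUBO is only \emph{approximately} one-step optimal under noise, so that the $\lambda C$ slack of Theorem~\ref{thm:2} (or its analogue) must be absorbed without breaking the recursion. The submartingale reduction and the consistency step are comparatively routine; the quantitative coupling between one-step value gain and remaining regret is where the real work lies.
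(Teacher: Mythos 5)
Your first and last steps are sound (the tower-property reduction, the submartingale monotonicity of $\E[V_n^*]$, and the fact that a nonincreasing summable sequence satisfies $nR_n \to 0$), but the heart of the argument --- the third paragraph's coupling of one-step value gain to remaining regret --- has two genuine gaps, and you have correctly sensed where they are. First, your bridge $I_n \approx \max_{X}V_n(X) - V_n^*$ rests on approximate one-step optimality of qEUBO under noise, but Theorem~\ref{thm:2}'s guarantee carries a \emph{constant} additive slack $\lambda C$ that does not shrink with $n$; since the one-step-optimal improvement itself tends to zero, that guarantee is asymptotically vacuous, and under the general likelihood assumed in Section~\ref{sec:thm3_4} (correct response with probability $a > 1/2$, the setting of Lemma~\ref{lemma:b2}) there is no analogue with vanishing slack to invoke. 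Second, the diminishing-returns inequality $I_n \geq \frac{1}{k}\bigl(V_n^{(k)} - V_n^*\bigr)$ is asserted, not proved, and the value of information is not adaptive submodular in general: comparisons can be complementary, with a single query nearly worthless while a set of $k$ queries is highly informative, so the best single query need not capture a $1/k$ fraction of the best $k$-query gain. The geometric convergence of $V_n^{(k)}$ to $\E_n[\max_x f(x)]$ is likewise unproven. Even your consistency step inherits the problem: under noisy responses the one-step \emph{value} gain can be arbitrarily small while the posterior probability of misidentifying the best alternative stays bounded away from zero, so ``the one-step improvement would stay bounded away from $0$'' needs an argument that the value process simply does not supply.

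The paper's proof avoids all of this by never comparing qEUBO to the one-step Bayes-optimal policy and by abandoning the value process as the potential. Its potential is the entropy of the posterior over the ranking permutation $\pi_f$, which is finite (at most $\log(|\X|!)$). Lemma~\ref{lemma:b3} shows the qEUBO-optimal value exceeds $\max_{x}\E^{(n)}[f(x)]$ by at least $\frac{\delta}{|\X|-1}p_{**}^{(n)}$, where $p_{**}^{(n)} = \prob^{(n)}(x_*^{(n)} \neq x_*)$; Lemma~\ref{lemma:b4} converts this, via the two-sided bound of Assumption 3, into the statement that \emph{both} outcome probabilities of the selected pair are at least $Cp_{**}^{(n)}$ with $C = \delta/((|\X|-1)\Delta)$; Lemmas~\ref{lemma:b5}--\ref{lemma:b7} then lower-bound the expected one-step entropy drop by $2(h(1/2)-h(a))\,Cp_{**}^{(n)}$. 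A supermartingale built from entropy plus accumulated drops converges by Doob's theorem, yielding $\sum_n \E[p_{**}^{(n)}] < \infty$, hence $\prob(x_*^{(n)} \neq x_*) = o(1/n)$, and Assumption 3 converts this misidentification probability directly into regret via $\E[f(x^*) - f(x_*^{(n)})] \leq \Delta\,\prob(x_*^{(n)} \neq x_*)$. The key move you are missing is exactly this substitution: replace the value process, whose per-step gains cannot be lower-bounded under noise, by a finite information budget whose per-step decrease is provably proportional to $p_{**}^{(n)}$, and tie the regret to $p_{**}^{(n)}$ rather than to value gains --- note also that the paper's route exploits the structure of the qEUBO query directly (it must be a balanced, informative comparison), not its relation to any look-ahead optimum.
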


\begin{theorem}
\label{thm:qEI}
There exists a problem instance (i.e., $\X$ and Bayesian prior distribution over $f$) satisfying the assumptions described in Section~\ref{sec:thm3_4} such that if the sequence of queries is chosen by maximizing  qEI, then $\E[f(x^*) - f(\widehat{x}_n^*)] \geq R$ for all $n$, for a constant $R > 0$.
\end{theorem}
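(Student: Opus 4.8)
The plan is to exhibit an explicit finite instance in which qEI is provably starved of the one comparison needed to discover the optimum, and then convert this into a fixed positive regret by a two-worlds indistinguishability argument. I would work in the constant-noise regime from Section~\ref{sec:thm3_4} (the likelihood equals a fixed $a>1/2$ whenever the two utilities differ), so that a query $(x_i,x_j)$ reveals only the \emph{sign} of $f(x_i)-f(x_j)$ and never its magnitude. I take $q=2$ and $\X=\{x_0,x_1,x_2\}$, and design a non-degenerate prior with a reference alternative $x_0$ whose utility is tightly concentrated about a level $v_0$, together with a small mixture prior on $(f(x_1),f(x_2))$ arranged so that: (i) the prior mean of both $x_1$ and $x_2$ lies strictly below $v_0$, so that $x_0=\argmax_{x\in\X}\E_n[f(x)]$; (ii) with positive probability $\max\{f(x_1),f(x_2)\}>v_0$, so that recommending $x_0$ is strictly suboptimal on that event; and (iii) the response distribution of the query $(x_1,x_2)$ is identical across the sub-events that decide whether $x_0$ is optimal, so that comparing $x_1$ against $x_2$ is completely uninformative about this question.

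The heart of the argument is to show that, for every $n$, the query $(x_1,x_2)$ is a qEI-maximizer, so that the qEI policy may legitimately select it at every step. Here I would use the incumbent $I_n=\max\{\E_n[f(x)]:x\in\cup_{m\le n}X_m\}$ together with the identity $\{\max(u,w)-c\}^+=\max\{(u-c)^+,(w-c)^+\}$, which gives $\qei_n(x_i,x_j)=\E_n[\max\{(f(x_i)-I_n)^+,(f(x_j)-I_n)^+\}]$. Because $x_0$ is tightly concentrated about its own posterior mean, the term $(f(x_0)-I_n)^+$ contributes negligibly, so including $x_0$ in a query buys essentially nothing beyond its partner; meanwhile $x_1$ and $x_2$ are the only alternatives with appreciable upper tail above $I_n$, so pairing them maximizes the expected best-of-two improvement. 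Selecting $(x_1,x_2)$ at every step keeps the response distribution equal to its value under the prior by (iii), so the posterior over the deciding sub-events never moves and $\widehat{x}_n^*=x_0$ for all $n$.

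The regret bound then follows immediately: on the positive-probability event from (ii) we have $f(x^*)-f(\widehat{x}_n^*)=\max\{f(x_1),f(x_2)\}-f(x_0)>0$, and this random variable does not depend on $n$, so $\E[f(x^*)-f(\widehat{x}_n^*)]\ge R$ for all $n$ with $R=\E[(\max\{f(x_1),f(x_2)\}-f(x_0))^+]>0$. I expect the main obstacle to be the claim used in the second paragraph, namely that the uninformative pair $(x_1,x_2)$ is a qEI-maximizer \emph{for every posterior reachable along the trajectory}, not merely at the prior. The difficulty is that non-degeneracy forces $x_0$ to carry strictly positive variance, so a query involving $x_0$ has strictly positive---though tiny---expected improvement; one must therefore tune the concentration of $x_0$ and the caps on $x_1,x_2$ so that the revealing queries are never \emph{strictly} better than $(x_1,x_2)$, and track $I_n$ as the uninformative comparisons accumulate. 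Establishing this ``qEI is never strictly compelled to test the incumbent'' property, which is exactly the reluctance phenomenon described in Section~\ref{sec:qEI}, is the crux; once it holds, the indistinguishability and the regret computation are routine.
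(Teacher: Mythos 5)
Your construction is the same in spirit as the paper's: a constant-$a$ likelihood so queries reveal only signs, an incumbent whose value is (nearly) known, and a pair of candidates whose mutual comparison is, by a symmetric mixture prior, uninformative about whether either beats the incumbent --- the paper's instance has $\X=\{1,2,3,4\}$, four atoms $f_1,\dots,f_4$ with $f_i(1)=-1$, $f_i(2)=0$ deterministically, weights $p/2,p/2,q/2,q/2$, and an initial query $(1,2)$ to pin the incumbent, and indeed there $\prob^{(n)}(f\in\{f_1,f_2\})=p$ is frozen forever while only the ordering of $f(3),f(4)$ is learned, exactly your condition (iii). But the step you yourself flag as the crux --- that the uninformative pair is a qEI-maximizer at \emph{every} reachable posterior --- is a genuine gap, and in your non-degenerate formulation it is not merely unproven but in serious danger of being false. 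If $f(x_0)$ has any fixed positive variance, every query containing $x_0$ carries a marginal expected-improvement gain bounded below by a positive constant (the expected excess of $f(x_0)$ over $I_n$ on its upper tail, which never updates since $x_0$ is never queried), whereas the marginal gain of pairing $x_1$ with $x_2$ decays: under the constant-$a$ likelihood the repeated responses to $(x_1,x_2)$ concentrate the posterior, almost surely and geometrically, on one ordering of $f(x_1),f(x_2)$, so the probability that the second candidate improves on the first vanishes. Eventually qEI would \emph{strictly} prefer a revealing query containing $x_0$, the deciding-event posterior would move, and your indistinguishability argument would collapse; no amount of ``tuning the concentration of $x_0$'' escapes a fixed constant losing to a quantity tending to zero.

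The way out is to notice that the obstacle you impose on yourself --- ``non-degeneracy forces $x_0$ to carry strictly positive variance'' --- is not required by the assumptions of Section~\ref{sec:thm3_4}. Assumption 1 only forbids ties $f(x)=f(y)$ for $x\neq y$ almost surely; it permits point masses at individual alternatives, and Assumption 3 then follows from the constant-$a$ likelihood via Lemma~\ref{lemma:b2}. The paper exploits exactly this: $f(2)=0$ under every atom of the prior, so $I_n=0$ and $\{f(2)-I_n\}^+\equiv 0$ exactly, not negligibly. This permits a closed-form comparison, $\qei^{(n)}(3,4)=p_1^{(n)}+p_2^{(n)}$ versus $\qei^{(n)}(2,3)=p_1^{(n)}+\frac12 p_2^{(n)}$ and $\qei^{(n)}(4,2)=\frac12 p_1^{(n)}+p_2^{(n)}$, so $(3,4)$ is the \emph{unique strict} maximizer whenever $p_1^{(n)},p_2^{(n)}>0$; an induction maintaining $p_1^{(n)}<\frac12 p_3^{(n)}$ and $p_2^{(n)}<\frac12 p_4^{(n)}$ (these ratios are invariant under the $(3,4)$ updates) keeps the recommendation at $2$, and the regret is the constant $p$, matching your final formula $R=\E[(\max\{f(x_1),f(x_2)\}-f(x_0))^+]$. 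So your outline identifies the right mechanism and the right closing computation, but to complete it you should replace the tightly-concentrated incumbent with a deterministic one and verify strict optimality of the uninformative pair by explicit calculation, as the paper does; as written, the central claim remains open and its natural non-degenerate instantiation fails.
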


The problem instance in which qEI fails to be consistent has the characteristics previously described in Section~\ref{sec:qEI} --- the incumbent, i.e., the alternative with the best posterior mean, also has known value. As a result, qEI is unwilling to include it in the queries asked.  This makes qEI unable to learn about the value of other alternatives --- it can learn about the relative value of other alternatives with {\it each other}, but not about their value relative to the incumbent. 

\section{EXPERIMENTS}
\label{sec:experiments}
We compare qEUBO with various state-of-the-art AFs for PBO from the literature. We consider MPES from \cite{nguyen2021top}, which, as described, is arguably the only existing PBO AF with a proper justification. We also consider qEI and batch Thompson sampling (qTS) from \cite{siivola2021preferential}, which were both shown to have excellent empirical performance.   We also consider qNEI, a version of qEI that accounts for the uncertainty in latent function values through Monte Carlo integration over fantasized values \cite{balandat2020botorch}. qEUBO, qEI, and qNEI are optimized via sample average approximation with multiple restarts \citep{balandat2020botorch}. qTS uses approximate sample paths obtained via 1000 random Fourier features \citep{rahimi2007random}. For reference, we also include the performance of random search (Random), which selects queries uniformly at random over the space of alternatives. All algorithms use a Gaussian process prior with a constant mean function and RBF covariance function to model $f$. We approximate the posterior distribution over $f$ via the variational inducing point approach introduced by \cite{hensman2015scalable}. Our approach is equivalent to the one pursued by \cite{nguyen2021top} if we take the set of inducing points equal to the set of all points in the queries asked so far. Our set of inducing points includes these points in addition to a small set of quasi-random Sobol points, which improves performance slightly. 

\begin{table*}[th]
\centering
\begin{tabular}{l|rrrrr}
\toprule
Problem/Acquisition function & qNEI & MPES & qTS & qEI & qEUBO\\
\midrule
Ackley &    8.1 &  24.8 & 6.5 &   6.3 &  12.4  \\
Alpine1 &   11.4 & 16.4 &  6.4 & 8.9 &   11.3  \\
Hartmann &  8.7 & 15.3 & 7.1  &  6.0 &    8.3  \\
Animation &  9.4 & 13.5 & 8.6  &  7.4 &    8.2 \\
Carcab  & 7.2  & 12.7 & 6.9   &  7.1   & 7.2\\
Sushi &  8.9 & 23.9 & 9.5  &  5.9 &    7.5  \\
\bottomrule
\end{tabular}
    \caption{Average runtimes in seconds across all test problems. MPES is consistently the slowest algorithm, followed by qNEI. MPES is slow because it requires approximating an intractable integral involving the posterior distribution on the utility function's maximizer.  qTS and qEI are the fastest algorithms, followed closely by qEUBO.
    }
    \label{tab:runtimes}
\end{table*}


We report results across three synthetic test functions and three test functions built from real-world data. In contrast with most existing papers from the literature, which limit themselves to low-dimensional problems, we focus on more challenging problems of moderate dimension ($>3$). Synthetic functions include 6-dimensional Ackley , 7-dimensional Alpine1, and 6-dimensional Hartmann. Realistic problems include a 7-dimensional car cab design problem (Carcab) \citep{lin2022bope}, a 4-dimensional problem involving real-world human preferences over 100 sushi items (Sushi) \citep{siivola2021preferential},  and a novel 5-dimensional animation optimization problem (Animation). Noise is added to simulate inconsistency in the DM's responses.

To create our novel animation optimization problem, we use real human comparison data from a real-world particle effect rendering animation based on the publicly available demo in the AEPsych package~\citep{owen2021aepsych}.
In this setting, a human user is asked to compare two rendered animations of particles side by side and to determine which one looks more like fire (Figure~\ref{fig:particle_demo}, top). The particle animation is parameterized by 5 parameters.
We collected 100 such pairwise comparisons from human users with random particle animation parameters. We then confirmed that by fitting a support vector machine model on this data
and optimizing, we are able to obtain a realistic fire-like particle effect. A screenshot of the resulting animation shown in the bottom of Figure~\ref{fig:particle_demo}.
We then use this fitted model as the ground-truth test function to perform simulation.


\begin{figure}[ht]
\centering
\begin{tabular}[b]{c}%
\includegraphics[width=0.42\textwidth]{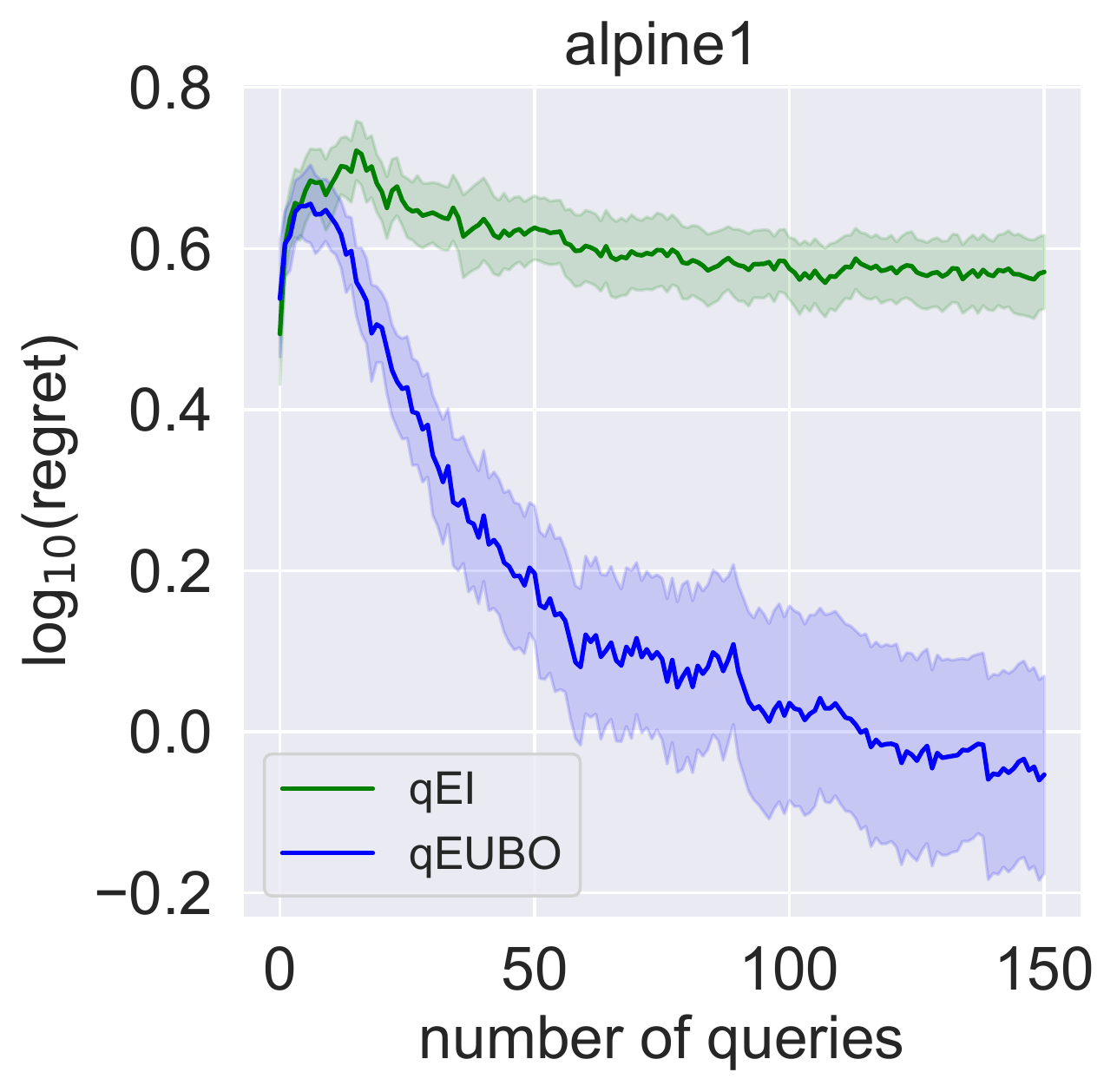}
 \end{tabular}
 \caption{Comparison between qEUBO and qEI on the 7-dimensional Alpine1 function seeded with many comparisons between a good solution and other randomly chosen ones. This setting is similar to Theorem~\ref{thm:qEI}. When we have a reasonably good status quo solution whose value is known with high precision, qEI is unable to significantly reduce its simple regret while qEUBO steadily learns.
 \label{fig:baseline}}
\end{figure}

In all problems, a first stage of interaction with the DM is performed using $4d$ queries chosen uniformly at random over $\X^q$, where $d$ is the input dimension of the problem. After this initial stage, each algorithm was used to select 150 additional queries sequentially. Figures~\ref{fig:lr_results} and~\ref{fig:lr_batch_results} show the mean of the log simple regret, plus and minus 1.96 times the standard deviation divided by the square root of
the number of replications, as a function of the number of queries. Here, simple regret is is defined as the maximum objective value minus the objective value at the maximizer of the posterior mean. We average over 100 replications for the Animation and Sushi problems and 50 replications for the other problems. 
Figure~\ref{fig:lr_results} shows results for $q=2$ for all algorithms. Figure~\ref{fig:lr_batch_results} shows results for both $q=2$ and $q=4$ for MPES, qTS, qEI, and qEUBO; we only focus on the best-performing algorithms to reduce visual clutter. All problems use moderate levels of Gumbel noise, consistent with the use of a logistic likelihood.  qEUBO outperforms all other AFs in all problems except Carcab for $q=2$, followed by qEI and then by qTS. In Section~\ref{sec:addit} we also include results for $q=4$, $q=6$, and varying levels of noise. In these results, qEUBO continues to consistently outperform competitor methods. 

Figure~\ref{fig:lr_batch_results} shows that including more alternatives in each query ($q=4$ vs. $q=2$) allows qEUBO to achieve a given simple regret using fewer queries.
Other AFs also benefit from including more alternatives in each query, but qEUBO seems to benefit the most.
This contrasts with \citet{siivola2021preferential}, which found only a marginal benefit of using larger values of $q$. At the same time, our results are consistent with those from \cite{mikkola2020projective}, which also observed significant benefits from using queries with \textit{larger information content}.  Our work provides complementary evidence because each query in \cite{mikkola2020projective} is equivalent to an infinite number of pairwise comparisons, while our queries use only $q-1$ comparisons. Results in Section~\ref{sec:addit} suggest that there is also a benefit in going from $q=4$ to $q=6$ for all algorithms considered there, including qEUBO, but that this benefit is smaller and less consistent than that of going from $q=2$ to $q=4$.

Table~\ref{tab:runtimes} shows the AF optimization walltime per iteration for each AF and each test problem, averaged over all the iterations. 
qEI is competitive in terms of its computational requirements, often  outperforming all the other AFs.
qEUBO is fast enough to support interactive learning applications, such as those for psychophysics experimentation \citep{owen2021aepsych} and animation \citep{brochu2010bayesian}, despite the challenging dimensionality of the experiments presented here. To better support interactive applications, one can begin optimizing qEUBO to generate the next query while the user is considering the current query. This can be done by initiating qEUBO for all possible user responses to the current query.


Figure~\ref{fig:baseline} compares qEI and qEUBO on an example problem similar to the one analyzed in Theorem~\ref{thm:qEI} in which qEI fails to be consistent. 
The objective function is the 7-dimensional Alpine1 function.
The initial data set contains several queries constituted by pairs where the first point is a known high-utility point close to the optimum, and the second point is drawn uniformly at random over the domain. 
After these comparisons, the value of this point has relatively low variance and has a posterior mean relatively high compared to the posterior mean elsewhere.
This mimics a setting common in practice where we have an in-use status quo solution that is reasonably good, has well-understood performance because it is currently in use, and on which we would like to improve. 
In this setting, the incumbent value $I_n$ has a reasonably high value relative to the posterior mean elsewhere and the variance of the latent utility near the incumbent solution is small. As a result, qEI does not include the incumbent solution or nearby values in DM queries, hampering its ability to learn. 
Consequently, qEI's simple regret stalls while qEUBO, on the other hand, makes steady progress as the number of queries grows. 

The code used to conduct our empirical evaluation can be found at \url{https://github.com/facebookresearch/qEUBO}. 

\section{CONCLUSION}
This work introduces the expected utility of the best option (qEUBO) acquisition function for preferential Bayesian optimization. qEUBO is simple to compute, has a sound decision-theoretic interpretation, and exhibits a strong empirical performance across a broad range of problems. We also draw a connection between qEUBO and its closest competitor, qEI, showing that qEI tends to perform well in early iterations because it is similar to qEUBO but its performance degrades as the number of queries grows or when the variance around the optimum is very small. Furthermore, we show that qEUBO's Bayesian simple regret converges to zero at a rate $o(1/n)$ as the number of queries, $n$, goes to infinity.  In contrast, we show that simple regret under qEI can fail to converge to zero. Finally, we demonstrate the substantial benefit of performing queries with more than two alternatives, in contrast with previous work, which found only a marginal benefit. Future directions include studying qEUBO's performance under other probabilistic models and extending qEUBO to more structured problem settings such as contextual preferential Bayesian optimization.

\subsubsection*{Acknowledgements}
We thank Stryker Buffington and Michael Shvartsman for their help setting up the animation example. We also thank the anonymous
reviewers for their helpful comments.
PF was supported by 
AFOSR FA9550-19-1-0283 and FA9550-20-1-0351-20-1-0351.

\bibliography{ref}

\begin{thebibliography}{}

\bibitem[Balandat et~al., 2020]{balandat2020botorch}
Balandat, M., Karrer, B., Jiang, D., Daulton, S., Letham, B., Wilson, A.~G.,
  and Bakshy, E. (2020).
\newblock Botorch: {A} framework for efficient {M}onte-{C}arlo {B}ayesian
  optimization.
\newblock In Larochelle, H., Ranzato, M., Hadsell, R., Balcan, M.~F., and Lin,
  H., editors, {\em Advances in Neural Information Processing Systems},
  volume~33, pages 21524--21538. Curran Associates, Inc.

\bibitem[Benavoli et~al., 2021]{benavoli2021preferential}
Benavoli, A., Azzimonti, D., and Piga, D. (2021).
\newblock Preferential {B}ayesian optimisation with skew {G}aussian processes.
\newblock In {\em Proceedings of the Genetic and Evolutionary Computation
  Conference Companion}, pages 1842--1850.

\bibitem[Bengs et~al., 2021]{bengs2021preference}
Bengs, V., Busa-Fekete, R., El~Mesaoudi-Paul, A., and H{\"u}llermeier, E.
  (2021).
\newblock Preference-based online learning with dueling bandits: A survey.
\newblock {\em J. Mach. Learn. Res.}, 22:7--1.

\bibitem[Braziunas, 2006]{braziunas2006computational}
Braziunas, D. (2006).
\newblock {Computational approaches to preference elicitation}.
\newblock Technical report, University of Toronto, Department of Computer
  Science.

\bibitem[Brochu et~al., 2010]{brochu2010bayesian}
Brochu, E., Brochu, T., and De~Freitas, N. (2010).
\newblock A {B}ayesian interactive optimization approach to procedural
  animation design.
\newblock In {\em Proceedings of the 2010 ACM SIGGRAPH/Eurographics Symposium
  on Computer Animation}, pages 103--112.

\bibitem[Cakmak et~al., 2020]{cakmak2020bayesian}
Cakmak, S., Astudillo, R., Frazier, P., and Zhou, E. (2020).
\newblock Bayesian optimization of risk measures.
\newblock {\em Advances in Neural Information Processing Systems},
  33:20130--20141.

\bibitem[Corless et~al., 1996]{corless1996lambertw}
Corless, R.~M., Gonnet, G.~H., Hare, D.~E., Jeffrey, D.~J., and Knuth, D.~E.
  (1996).
\newblock On the {LambertW} function.
\newblock {\em Advances in Computational mathematics}, 5(1):329--359.

\bibitem[Cosenza et~al., 2022]{cosenza2022multi}
Cosenza, Z., Astudillo, R., Frazier, P.~I., Baar, K., and Block, D.~E. (2022).
\newblock Multi-information source {B}ayesian optimization of culture media for
  cellular agriculture.
\newblock {\em Biotechnology and bioengineering}, 119(9):2447--2458.

\bibitem[Cover, 1999]{cover1999elements}
Cover, T.~M. (1999).
\newblock {\em Elements of information theory}.
\newblock John Wiley \& Sons.

\bibitem[Frazier et~al., 2009]{frazier2009knowledge}
Frazier, P., Powell, W., and Dayanik, S. (2009).
\newblock The knowledge-gradient policy for correlated normal beliefs.
\newblock {\em INFORMS journal on Computing}, 21(4):599--613.

\bibitem[Frazier et~al., 2008]{frazier2008knowledge}
Frazier, P.~I., Powell, W.~B., and Dayanik, S. (2008).
\newblock A knowledge-gradient policy for sequential information collection.
\newblock {\em SIAM Journal on Control and Optimization}, 47(5):2410--2439.

\bibitem[Garnett et~al., 2010]{garnett2010bayesian}
Garnett, R., Osborne, M.~A., and Roberts, S.~J. (2010).
\newblock Bayesian optimization for sensor set selection.
\newblock In {\em Proceedings of the 9th ACM/IEEE international conference on
  information processing in sensor networks}, pages 209--219.

\bibitem[Ginsbourger et~al., 2008]{ginsbourger2008multi}
Ginsbourger, D., Le~Riche, R., and Carraro, L. (2008).
\newblock A multi-points criterion for deterministic parallel global
  optimization based on gaussian processes.

\bibitem[Gonz{\'a}lez et~al., 2017]{gonzalez2017preferential}
Gonz{\'a}lez, J., Dai, Z., Damianou, A., and Lawrence, N.~D. (2017).
\newblock Preferential {B}ayesian optimization.
\newblock In {\em International Conference on Machine Learning}, pages
  1282--1291. PMLR.

\bibitem[Griffiths and Hern{\'{a}}ndez-Lobato, 2020]{griffiths2020constrained}
Griffiths, R.~R. and Hern{\'{a}}ndez-Lobato, J.~M. (2020).
\newblock Constrained {B}ayesian optimization for automatic chemical design
  using variational autoencoders.
\newblock {\em Chemical Science}, 11(2):577--586.

\bibitem[Hensman et~al., 2015]{hensman2015scalable}
Hensman, J., Matthews, A., and Ghahramani, Z. (2015).
\newblock Scalable variational {G}aussian process classification.
\newblock In {\em Artificial Intelligence and Statistics}, pages 351--360.
  PMLR.

\bibitem[Hern{\'a}ndez-Lobato et~al., 2014]{hernandez2014predictive}
Hern{\'a}ndez-Lobato, J.~M., Hoffman, M.~W., and Ghahramani, Z. (2014).
\newblock Predictive entropy search for efficient global optimization of
  black-box functions.
\newblock {\em Advances in Neural Information Processing Systems}, 27.

\bibitem[Lin et~al., 2022]{lin2022bope}
Lin, Z.~J., Astudillo, R., Frazier, P., and Bakshy, E. (2022).
\newblock Preference exploration for efficient {B}ayesian optimization with
  multiple outcomes.
\newblock In Camps-Valls, G., Ruiz, F. J.~R., and Valera, I., editors, {\em
  Proceedings of The 25th International Conference on Artificial Intelligence
  and Statistics}, volume 151 of {\em Proceedings of Machine Learning
  Research}, pages 4235--4258. PMLR.

\bibitem[Mikkola et~al., 2020]{mikkola2020projective}
Mikkola, P., Todorovi{\'c}, M., J{\"a}rvi, J., Rinke, P., and Kaski, S. (2020).
\newblock Projective preferential {B}ayesian optimization.
\newblock In {\em International Conference on Machine Learning}, pages
  6884--6892. PMLR.

\bibitem[Nguyen et~al., 2021]{nguyen2021top}
Nguyen, Q.~P., Tay, S., Low, B. K.~H., and Jaillet, P. (2021).
\newblock Top-k ranking {B}ayesian optimization.
\newblock In {\em Proceedings of the AAAI Conference on Artificial
  Intelligence}, volume~35, pages 9135--9143.

\bibitem[Nielsen et~al., 2014]{nielsen2014perception}
Nielsen, J. B.~B., Nielsen, J., and Larsen, J. (2014).
\newblock Perception-based personalization of hearing aids using gaussian
  processes and active learning.
\newblock {\em IEEE/ACM Transactions on Audio, Speech, and Language
  Processing}, 23(1):162--173.

\bibitem[Owen et~al., 2021]{owen2021aepsych}
Owen, L., Browder, J., Letham, B., Stocek, G., Tymms, C., and Shvartsman, M.
  (2021).
\newblock Adaptive nonparametric psychophysics.

\bibitem[Rahimi and Recht, 2007]{rahimi2007random}
Rahimi, A. and Recht, B. (2007).
\newblock Random features for large-scale kernel machines.
\newblock {\em Advances in Neural Information Processing Systems}, 20.

\bibitem[Scott et~al., 2011]{scott2011correlated}
Scott, W., Frazier, P., and Powell, W. (2011).
\newblock The correlated knowledge gradient for simulation optimization of
  continuous parameters using gaussian process regression.
\newblock {\em SIAM Journal on Optimization}, 21(3):996--1026.

\bibitem[Shahriari et~al., 2015]{shahriari2015taking}
Shahriari, B., Swersky, K., Wang, Z., Adams, R.~P., and De~Freitas, N. (2015).
\newblock Taking the human out of the loop: A review of {B}ayesian
  optimization.
\newblock {\em Proceedings of the IEEE}, 104(1):148--175.

\bibitem[Siivola et~al., 2021]{siivola2021preferential}
Siivola, E., Dhaka, A.~K., Andersen, M.~R., Gonz{\'a}lez, J., Moreno, P.~G.,
  and Vehtari, A. (2021).
\newblock Preferential batch {B}ayesian optimization.
\newblock In {\em 2021 IEEE 31st International Workshop on Machine Learning for
  Signal Processing}, pages 1--6. IEEE.

\bibitem[Snoek et~al., 2012]{snoek2012practical}
Snoek, J., Larochelle, H., and Adams, R.~P. (2012).
\newblock Practical {B}ayesian optimization of machine learning algorithms.
\newblock In Pereira, F., Burges, C. J.~C., Bottou, L., and Weinberger, K.~Q.,
  editors, {\em Advances in Neural Information Processing Systems}, pages
  2951--2959, Red Hook, New York. Curran Associates, Inc.

\bibitem[Srinivas et~al., 2012]{srinivas2012information}
Srinivas, N., Krause, A., Kakade, S.~M., and Seeger, M.~W. (2012).
\newblock Information-theoretic regret bounds for gaussian process optimization
  in the bandit setting.
\newblock {\em IEEE transactions on information theory}, 58(5):3250--3265.

\bibitem[Tucker et~al., 2020]{tucker2020preference}
Tucker, M., Novoseller, E., Kann, C., Sui, Y., Yue, Y., Burdick, J.~W., and
  Ames, A.~D. (2020).
\newblock Preference-based learning for exoskeleton gait optimization.
\newblock In {\em 2020 IEEE International Conference on Robotics and Automation
  (ICRA)}, pages 2351--2357. IEEE.

\bibitem[Viappiani and Boutilier, 2010]{viappiani2010optimal}
Viappiani, P. and Boutilier, C. (2010).
\newblock Optimal {B}ayesian recommendation sets and myopically optimal choice
  query sets.
\newblock {\em Advances in Neural Information Processing Systems}, 23.

\bibitem[Wang et~al., 2016]{wang2016parallel}
Wang, J., Clark, S.~C., Liu, E., and Frazier, P.~I. (2016).
\newblock Parallel {B}ayesian global optimization of expensive functions.
\newblock {\em arXiv preprint arXiv:1602.05149}.

\bibitem[Wu and Liu, 2016]{wu2016double}
Wu, H. and Liu, X. (2016).
\newblock Double thompson sampling for dueling bandits.
\newblock In Lee, D., Sugiyama, M., Luxburg, U., Guyon, I., and Garnett, R.,
  editors, {\em Advances in Neural Information Processing Systems}, volume~29.
  Curran Associates, Inc.

\bibitem[Wu and Frazier, 2016]{wu2016parallel}
Wu, J. and Frazier, P. (2016).
\newblock The parallel knowledge gradient method for batch {B}ayesian
  optimization.
\newblock {\em Advances in Neural Information Processing Systems}, 29.

\bibitem[Wu et~al., 2017]{wu2017bayesian}
Wu, J., Poloczek, M., Wilson, A.~G., and Frazier, P. (2017).
\newblock Bayesian optimization with gradients.
\newblock {\em Advances in Neural Information Processing Systems}, 30.

\bibitem[Yue et~al., 2012]{yue2012k}
Yue, Y., Broder, J., Kleinberg, R., and Joachims, T. (2012).
\newblock The k-armed dueling bandits problem.
\newblock {\em Journal of Computer and System Sciences}, 78(5):1538--1556.

\end{thebibliography}

\renewtheorem{theorem}{Theorem}[section]
\newtheorem{lemma}{Lemma}[section]
\appendix
\onecolumn

\section{PROOFS OF THEOREMS 1 AND 2}
\label{sec:thm1_2}
\subsection{Proof of Theorem 1}
\begin{theorem}[Theorem 1]
Suppose the DM's responses are noise-free. Then, $\argmax_{X\in\X^q}\eubo_n(X) \subseteq \argmax_{X\in\X^q}V_n(X)$.
\end{theorem}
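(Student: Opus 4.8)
The plan is to sandwich $V_n$ between two copies of $\eubo_n$: I would show $\eubo_n(X) \le V_n(X)$ pointwise (using the noise-free assumption) and $\max_{X} V_n(X) \le \max_{X} \eubo_n(X)$ (which holds generally), and then deduce the set inclusion. Throughout I write $R = r(X)$ for the random response to a candidate query $X = (x_1,\dots,x_q)$, and I let $\E_{n+1,i}$ denote the posterior expectation conditioned on $\D^{(n)}$, $X_{n+1}=X$, and $R=i$.

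First I would prove the pointwise lower bound $V_n(X) \ge \eubo_n(X)$. Since the maximum over $\X$ dominates the posterior mean at any particular point, I can lower-bound the inner term by the value at the \emph{chosen} alternative $x_R$:
\[
V_n(X) = \E_n\!\left[\max_{x\in\X}\E_{n+1}[f(x)] \,\middle|\, X_{n+1}=X\right] \ge \E_n\!\left[\E_{n+1}[f(x_R)] \,\middle|\, X_{n+1}=X\right].
\]
Decomposing the outer expectation over the $q$ possible responses and using $\prob_n(R=i)\,\E_{n+1,i}[f(x_i)] = \E_n[f(x_i)\mathbf{1}\{R=i\}]$, the right-hand side collapses (by iterated expectations) to $\E_n[f(x_R)]$. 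In the noise-free regime the DM always reports the truly best alternative, so $R = \argmax_i f(x_i)$ and hence $f(x_R) = \max\{f(x_1),\dots,f(x_q)\}$ almost surely; this gives $\E_n[f(x_R)] = \eubo_n(X)$, completing the bound.

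Next I would prove $\max_{X} V_n(X) \le \max_{X} \eubo_n(X)$ via a recommendation-set construction. Fix any query $X$ and, for each hypothetical response $i$, let $x_i^\star = \argmax_{x\in\X}\E_{n+1,i}[f(x)]$ be the optimal recommendation under that response; assemble these into the query $X' = (x_1^\star,\dots,x_q^\star)\in\X^q$, which is deterministic given $\D^{(n)}$ and $X$. Writing $V_n(X) = \sum_i \prob_n(R=i)\,\E_{n+1,i}[f(x_i^\star)] = \E_n[f(x_R^\star)]$ via the same decomposition, and bounding the selected value by the maximum, $f(x_R^\star) \le \max_i f(x_i^\star)$, yields $V_n(X) \le \E_n[\max_i f(x_i^\star)] = \eubo_n(X') \le \max_{X''}\eubo_n(X'')$. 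As $X$ was arbitrary, taking the supremum gives the claim; note this half does not use the noise-free assumption.

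Finally I would combine the two inequalities: the lower bound gives $\max_{X} V_n(X) \ge \max_{X} \eubo_n(X)$, so with the upper bound the two optimal values coincide. Then for any $X^\star \in \argmax_{X} \eubo_n(X)$ we have $V_n(X^\star) \ge \eubo_n(X^\star) = \max_{X} \eubo_n(X) = \max_{X} V_n(X)$, forcing $X^\star \in \argmax_{X} V_n(X)$ and establishing the inclusion. The main obstacle is executing the iterated-expectation collapse rigorously: both $x_R$ and $x_R^\star$ depend on the random response, so the step must be justified by explicitly conditioning on each value of $R$ rather than quoting the tower property for a fixed integrand. A secondary point is ensuring the relevant $\argmax$ sets are nonempty (e.g.\ via compactness of $\X$ and continuity of the posterior mean) so that $X'$ is well defined.
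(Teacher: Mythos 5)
Your proposal is correct and takes essentially the same route as the paper's proof: your two key steps---the pointwise bound $\eubo_n(X)\le V_n(X)$ via the noise-free identity $f(x_{r(X)})=\max_{i}f(x_i)$, and the recommendation-set query $X'=(x_1^\star,\ldots,x_q^\star)$ giving $V_n(X)\le\eubo_n(X')\le\max_{X''\in\X^q}\eubo_n(X'')$---are exactly the paper's two lemma-style inequalities, including your correct observation that the second one does not need noise-freeness. The only difference is presentational: you conclude by directly equating the two optimal values, whereas the paper packages the same comparison as a proof by contradiction.
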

\begin{proof}
 For any given $X\in\X^q$ and each $i\in\{1,\ldots, q\}$ let $x^+(X, i) \in \argmax_{x\in \X} 
\E_n\left[f(x) \mid (X, i)\right]$ and define $X^+(X) = (x^+(X, 1), \ldots, x^+(X, q))$. We claim that
\begin{equation}
\label{eq:lemma}
V_n(X) \leq \eubo_n(X^+(X)).
\end{equation}
To see this, note that 
\begin{align*}
V_n(X)
= & \sum_{i=1}^q \prob_n(r(X)=i) \E_n[f(x^+(X, i)) | (X, i)]\\
\le {} &
\sum_{i=1}^q \prob_n(r(X)=i) \E_n\left[\max_{i=1,\ldots, q}f(x^+(X, i)) | (X, i)\right] \\
= {} & \E_n[\max_{i=1,\ldots, q}f(x^+(X, i))\}]\\
= {} & \eubo_n(X^+(X)),
\end{align*}
as claimed.

On the other hand, for any given $X\in\X^q$ we have
\begin{align*}
    \E_n[f(x_{r(X)})\mid (X, r(X))] \leq \max_{x\in\X}\E_n[f(x)\mid (X, r(X))].
\end{align*}
Since $f(x_{r(X)}) = \max_{i=1,\ldots, q}f(x_i)$,  taking expectations over $r(X)$ on both sides we obtain
\begin{equation}
\begin{split}
\eubo_n(X)\leq V_n(X).
\end{split}
\label{eq:lemma2}
\end{equation}

Now, building on the arguments above, let $X^*\in\argmax_{X\in\X^q}\eubo_n(X)$ and suppose for the sake of contradiction that $X^* \notin \argmax_{X\in\X^q}V_n(X)$. Then, there exists $\widetilde{X}\in\X^q$ such that $V_n(\widetilde{X}) > V_n(X^*)$. By the arguments above we have
\begin{align*}
\eubo_n(X^+(\widetilde{X})) &\geq 
V_n(\widetilde{X})\\
& >  V_n(X^*)\\
& \ge \eubo_n(X^*)\\ 
& \ge \eubo_n(X^+(\widetilde{X})).
\end{align*}
The first inequality follows from \eqref{eq:lemma}. 
The second inequality is due to our supposition for contradiction.
The third inequality is due to \eqref{eq:lemma2}.
Finally, the fourth inequality holds since  $X^*\in\argmax_{X\in\X^q}\eubo_n(X)$.

This contradiction concludes the proof.
\end{proof}

\subsection{Proof of Theorem 2}
Before proving Theorem\ref{thm:2}, we introduce notation and prove several lemmas. 

Throughout this section we assume that
\begin{equation*}
    \prob(r(X)=i\mid f(X)) = \frac{\exp(f(x_i)/\lambda)}{\sum_{j=1}^q \exp(f(x_j)/\lambda)},
\end{equation*}
for $i = 1,\ldots, q$. We also define the functions
\begin{equation*}
    U_n^\lambda(X) = \E_n[f(x_{r(X)})],
\end{equation*}
and
\begin{equation*}
    V_n^\lambda(X) = \E_n\left[\max_{x\in\X}\E_n[f(x) \mid (X, r(X))]\right].
\end{equation*}
 We note that $V_n^\lambda$ makes the dependence of $V_n$ on $\lambda$ explicit. On the other hand, $U_n^\lambda$ generalizes the definition of $\eubo_n$, which is obtained as a special case for $\lambda=0$.

Our analysis is similar to the one pursued by \cite{viappiani2010optimal} to relate optimal recommendation sets and optimal query sets. In particular, we leverage the following result, which can be deduced from the proof of Theorem 3 in the supplement of \cite{viappiani2010optimal}.
\begin{lemma}
\label{lemma:a1}
For any $s_1,\ldots, s_q \in \R$,
\begin{align*}
    \sum_{i=1}^q \frac{\exp(s_i/\lambda)}{\sum_{j=1}^q\exp(s_j/\lambda)}s_i \geq \max_{i=1,\ldots, q} s_i - \lambda C,
\end{align*}
where $C = L_W((q-1)/e)$, where $L_W$ is the Lambert $W$ function \citep{corless1996lambertw}.
\end{lemma}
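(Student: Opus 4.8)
The plan is to read the left-hand side as the expectation of $s_{r}$ under the softmax (logistic) distribution $p_i = \exp(s_i/\lambda)/\sum_j \exp(s_j/\lambda)$, so that the claim asserts this softmax-weighted average lags behind $\max_i s_i$ by at most $\lambda C$. First I would strip out $\lambda$: writing $s_i = \lambda \sigma_i$ leaves the probabilities $p_i = \exp(\sigma_i)/\sum_j \exp(\sigma_j)$ unchanged and scales both $\max_i s_i$ and $\sum_i p_i s_i$ linearly in $\lambda$, so it suffices to prove the case $\lambda = 1$, i.e. $\max_i \sigma_i - \sum_i p_i \sigma_i \le C$. Since this gap is invariant under adding a common constant to all $\sigma_i$, I would normalize $\max_i \sigma_i = 0$ and substitute $t_i = -\sigma_i \ge 0$, so that exactly the feasible slice $\{t \ge 0 : \min_i t_i = 0\}$ is relevant. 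The goal then reduces to showing that $\phi(t) := \sum_i t_i e^{-t_i} / \sum_j e^{-t_j}$ has maximum equal to $C$ over this slice.

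The engine of the argument is a single first-order computation. Writing $S = \sum_j e^{-t_j}$ and $T = \sum_i t_i e^{-t_i}$, a direct differentiation gives $\partial \phi / \partial t_k = e^{-t_k}\bigl(1 + \phi(t) - t_k\bigr)/S$, so $\partial \phi/\partial t_k$ has the same sign as $1 + \phi(t) - t_k$. Consequently, at any maximizer each coordinate is either pinned at the boundary value $0$ or satisfies the stationarity relation $t_k = 1 + \phi(t)$; in the latter case all such coordinates share the common value $\tau := 1 + \phi(t)$. Moreover $\phi \ge 0$, so the derivative at $t_k = 0$ equals $e^{0}(1+\phi)/S > 0$, meaning a coordinate that is free to move cannot optimally sit at $0$. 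Restricting to the face $\{t_1 = 0\}$ (the remaining faces are symmetric), the maximizer therefore takes the ``one-against-the-rest'' form $t_1 = 0$ and $t_2 = \cdots = t_q = \tau$.

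Substituting this configuration and using $\phi = \tau - 1$ collapses the defining relation $(\tau-1)\bigl(1 + (q-1)e^{-\tau}\bigr) = (q-1)\tau e^{-\tau}$ to $(\tau - 1)e^{\tau} = q - 1$; setting $w = \tau - 1$ yields $w e^{w} = (q-1)/e$, hence $\phi = w = L_W((q-1)/e) = C$. To close the argument I would confirm the maximum is attained and equals this value: $\phi$ is bounded (for instance $\phi \le q/e$, using $t e^{-t} \le 1/e$ and $S \ge 1$ on the face), and if any free coordinate escapes to $+\infty$ its contributions to both $T$ and $S$ vanish, reducing $\phi$ to the analogous instance with fewer alternatives, whose optimum $L_W((q-2)/e)$ is strictly smaller because $L_W$ is increasing on $[0,\infty)$. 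Thus the global maximum of $\phi$ is exactly $C$, which is the claimed inequality.

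The main obstacle is pinning down the structure of the maximizer, i.e. showing the worst case is the symmetric configuration rather than some lopsided one. The clean resolution is the sign identity for $\partial\phi/\partial t_k$, which forces every interior coordinate to the identical value $1 + \phi$ and, combined with $\phi \ge 0$, prevents free coordinates from collapsing to $0$. The remaining work is bookkeeping: establishing existence of the maximizer over the non-compact feasible slice by the reduction of escaping coordinates to a lower-dimensional instance, and invoking monotonicity of $L_W$ to conclude that keeping all $q-1$ non-incumbent coordinates active is optimal.
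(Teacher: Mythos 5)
Your proof is correct, and it is genuinely more self-contained than the paper's. The paper performs exactly the same initial reduction you do --- normalize so the maximum is $s_q$, substitute $t_i = (s_q - s_i)/\lambda$ (which simultaneously strips out $\lambda$ and pins the incumbent at $0$), and reduce the claim to showing $\eta(t_1,\ldots,t_{q-1}) = \sum_{i} t_i e^{-t_i}/(1+\sum_j e^{-t_j}) \leq C$ on $[0,\infty)^{q-1}$ --- but at that point it stops and defers the bound on $\eta$ to the supplement of \cite{viappiani2010optimal}. You instead prove that bound directly: the sign identity $\partial\phi/\partial t_k = e^{-t_k}(1+\phi - t_k)/S$ is correct, it forces every non-pinned coordinate of a maximizer to the common value $\tau = 1+\phi$ (with $\phi \geq 0$ ruling out free coordinates at the boundary $0$), and the symmetric configuration collapses, as you compute, to $(\tau-1)e^{\tau} = q-1$, i.e.\ $\phi = L_W((q-1)/e)$. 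Your treatment of the non-compact domain is also sound and is the one piece most sketches omit: a maximizing sequence either stays bounded, in which case the KKT classification applies, or coordinates escape to $+\infty$, in which case $t_k e^{-t_k}$ and $e^{-t_k}$ both vanish and the problem reduces to the instance with fewer active alternatives, whose value $L_W((q-2)/e)$ is strictly smaller by monotonicity of $L_W$ on $[0,\infty)$; this is a clean induction on the number of active coordinates (with the trivial base case $L_W(0)=0$). In short: same reduction as the paper, but you supply the optimization argument the paper outsources, which is a real gain in self-containedness; what the paper's citation buys is brevity, nothing more.
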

\begin{proof}
 We may assume without loss of generality that $\max_{i=1,\ldots, q}s_i = s_q$. Let $t_i = (s_q - s_i)/\lambda$ for $i=1,\ldots, q-1$. After some algebra, we see that the inequality we want to show is equivalent to
\begin{align*}
\sum_{i=1}^{q-1}\frac{t_i\exp(-t_i)}{1 + \sum_{j=1}^{q-1}\exp(-t_j)}\leq  C.
\end{align*}
Thus, it suffices to show that the function $\eta:[0,\infty)^{q-1}\rightarrow\R$ given by 
\begin{align*}
\eta(t_1,\ldots, t_{q-1}) = \sum_{i=1}^{q-1}\frac{t_i\exp(-t_i)}{1 + \sum_{j=1}^{q-1}\exp(-t_j)}
\end{align*}
is bounded above by $C$. We refer the reader to the supplement of \cite{viappiani2010optimal} for a proof.
\end{proof}

\begin{lemma}
\label{lemma:a2}
$U_n^\lambda(X) \geq \eubo_n(X) - \lambda C$ for all $X\in\X^q$.
\end{lemma}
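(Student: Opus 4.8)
The plan is to reduce Lemma~\ref{lemma:a2} to the deterministic inequality of Lemma~\ref{lemma:a1} by conditioning on the realized latent values and then integrating over the posterior. The only genuine content is recognizing that, under the logistic likelihood, the conditional expected utility of the DM's chosen option is exactly the softmax-weighted average appearing on the left-hand side of Lemma~\ref{lemma:a1}.

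Concretely, I would first unpack the randomness in $U_n^\lambda(X) = \E_n[f(x_{r(X)})]$. There are two sources of randomness under $\E_n$: the posterior uncertainty in the utility vector $f(X) = (f(x_1),\ldots,f(x_q))$, and the stochastic response $r(X)$ whose law is determined by $f(X)$ through the logistic likelihood. I would split these via the tower property, writing $U_n^\lambda(X) = \E_n\bigl[\,\E[f(x_{r(X)}) \mid f(X)]\,\bigr]$, where the inner expectation is over $r(X)$ with $f(X)$ held fixed.

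Next I would evaluate the inner conditional expectation. Since $\prob(r(X)=i \mid f(X)) = \exp(f(x_i)/\lambda)\big/\sum_{j=1}^q \exp(f(x_j)/\lambda)$, we have
\begin{equation*}
\E[f(x_{r(X)}) \mid f(X)] = \sum_{i=1}^q \frac{\exp(f(x_i)/\lambda)}{\sum_{j=1}^q \exp(f(x_j)/\lambda)}\, f(x_i).
\end{equation*}
This is precisely the left-hand side of Lemma~\ref{lemma:a1} with $s_i = f(x_i)$, so I would apply that lemma directly to obtain, pointwise in the realization of $f(X)$,
\begin{equation*}
\E[f(x_{r(X)}) \mid f(X)] \geq \max_{i=1,\ldots,q} f(x_i) - \lambda C.
\end{equation*}

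Finally, I would take $\E_n$ of both sides. Because the inequality holds for every realization of $f(X)$, monotonicity of conditional expectation gives $U_n^\lambda(X) \geq \E_n[\max_{i} f(x_i)] - \lambda C$, and the constant $\lambda C$ passes through the expectation unchanged since it is deterministic; the right-hand side is exactly $\eubo_n(X) - \lambda C$ by definition of qEUBO. There is no serious obstacle here: the proof is essentially a one-line application of Lemma~\ref{lemma:a1} wrapped in the tower property. The only point requiring care is the clean separation of the two layers of randomness and the observation that the softmax weights conditional on $f(X)$ match the hypothesis of Lemma~\ref{lemma:a1} verbatim.
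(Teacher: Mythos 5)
Your proposal is correct and matches the paper's own proof essentially verbatim: both condition on $f(X)$ to identify $\E[f(x_{r(X)})\mid f(X)]$ as the softmax-weighted average, apply Lemma~\ref{lemma:a1} pointwise, and conclude by taking expectations via the tower property. No gaps and no meaningful difference in approach.
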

\begin{proof}
Note that
\begin{align*}
 \E[f(x_{r(X)})\mid f(X)] = \sum_{i=1}^q\frac{\exp(f(x_i)/\lambda)}{\sum_{j=1}^q\exp(f(x_j)/\lambda)}f(x_i).  
\end{align*}
Thus, Lemma~\ref{lemma:a1} implies that
\begin{equation*}
\E[f(x_{r(X)})\mid f(X)] \geq \max_{i=1,\ldots, q}f(x_i) - \lambda C.    
\end{equation*}
Taking expectations over both sides of the inequality yields the desired result.
\end{proof}

\begin{lemma}
\label{lemma:a3}
$V_n^\lambda(X) \geq U_n^\lambda(X)$ for all $X\in\X^q$.
\end{lemma}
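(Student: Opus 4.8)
The plan is to apply the tower property of conditional expectation, conditioning on the DM's response $r(X)$, and then to bound the posterior mean of the chosen alternative by the best achievable posterior mean over $\X$. This mirrors the derivation of inequality~\eqref{eq:lemma2} in the proof of Theorem~\ref{thm:1}, now carried out in the noisy setting rather than the noise-free one; indeed, in the noise-free case $f(x_{r(X)}) = \max_{i} f(x_i)$, so $U_n^0 = \eubo_n$ and the present lemma reduces exactly to \eqref{eq:lemma2}.

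First I would rewrite $U_n^\lambda(X) = \E_n[f(x_{r(X)})]$ using the tower property, conditioning on the augmented information $(X, r(X))$:
\[
U_n^\lambda(X) = \E_n\left[\E_n[f(x_{r(X)}) \mid (X, r(X))]\right].
\]
The key observation is that, once we condition on $(X, r(X))$, the index $r(X)$ and hence the alternative $x_{r(X)}$ is fixed, so it is a particular element of $\X$.

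Next, since $x_{r(X)} \in \X$, its conditional posterior mean is no larger than the maximum conditional posterior mean over all of $\X$:
\[
\E_n[f(x_{r(X)}) \mid (X, r(X))] \leq \max_{x\in\X}\E_n[f(x) \mid (X, r(X))].
\]
Taking the outer expectation $\E_n$ over the randomness in $r(X)$ on both sides yields exactly $U_n^\lambda(X) \leq V_n^\lambda(X)$, which is the claim.

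I expect this lemma to be essentially immediate; there is no genuine obstacle beyond the bookkeeping of the conditioning, and in particular the argument does not use the specific (logistic) form of the likelihood at all. The only point requiring care is recognizing that $x_{r(X)}$ is measurable with respect to the $\sigma$-algebra generated by $(X, r(X))$, so that it may be treated as a fixed point of $\X$ inside the inner expectation --- this is precisely what licenses comparing it against the maximum over $\X$. The real work of bridging $\eubo_n$ and $V_n^\lambda$ is instead carried by Lemmas~\ref{lemma:a1} and~\ref{lemma:a2}, which this lemma complements.
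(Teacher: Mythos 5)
Your proof is correct and is essentially identical to the paper's: both apply the tower property conditioning on $(X, r(X))$, use the pointwise bound $\E_n[f(x_{r(X)}) \mid (X, r(X))] \leq \max_{x\in\X}\E_n[f(x) \mid (X, r(X))]$ (licensed by the measurability of $x_{r(X)}$ with respect to the conditioning information), and take the outer expectation — the paper merely writes the same chain starting from $V_n^\lambda(X)$ and bounding below, rather than starting from $U_n^\lambda(X)$ and bounding above. Your side remarks, that the argument never uses the logistic form of the likelihood and that it reduces to inequality~\eqref{eq:lemma2} in the noise-free case, are both accurate.
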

\begin{proof}
Observe that
\begin{align*}
    V_n^\lambda(X) &= \E_n\left[\max_{x\in \X}\E[f(x) \mid (X, r(X))]\right]\\
    & \geq \E_n\left[\E[f(x_{r(X)}) \mid (X, r(X))]\right]\\
    &= \E_n[f(x_{r(X)})]\\
    &= U_n^\lambda(X),
\end{align*}
where the penultimate equality follows by the law of iterated expectation.
\end{proof}

\begin{theorem}[Theorem 2]
If $X^* \in \argmax_{X \in \X^q} \eubo_n(X)$, then $V_n^\lambda(X^*) \geq \max_{X\in\X^q}V_n^0(X) - \lambda C$.
\end{theorem}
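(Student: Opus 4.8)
The plan is to assemble the conclusion by chaining the three lemmas just proved together with two inequalities already established inside the proof of Theorem~1. Fix $X^* \in \argmax_{X\in\X^q}\eubo_n(X)$. Applying Lemma~\ref{lemma:a3} and then Lemma~\ref{lemma:a2} at the point $X^*$ produces the sandwich
\[
V_n^\lambda(X^*) \geq U_n^\lambda(X^*) \geq \eubo_n(X^*) - \lambda C,
\]
so the entire argument reduces to identifying $\eubo_n(X^*)$ with $\max_{X\in\X^q}V_n^0(X)$.

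Since $X^*$ maximizes $\eubo_n$, we have $\eubo_n(X^*) = \max_{X\in\X^q}\eubo_n(X)$, so it suffices to establish the noise-free identity $\max_{X\in\X^q}\eubo_n(X) = \max_{X\in\X^q}V_n^0(X)$. Here I would invoke the two displayed inequalities from the proof of Theorem~1, recalling that the $V_n$ appearing there is precisely the noise-free $V_n^0$. Inequality~\eqref{eq:lemma2}, namely $\eubo_n(X) \leq V_n^0(X)$ for every $X$, yields $\max_X \eubo_n(X) \leq \max_X V_n^0(X)$; and inequality~\eqref{eq:lemma}, namely $V_n^0(X) \leq \eubo_n(X^+(X))$ for every $X$, yields $\max_X V_n^0(X) \leq \max_X \eubo_n(X)$. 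Combining the two bounds gives the desired identity, and substituting it back into the sandwich above completes the proof.

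The step requiring the most care is exactly this noise-free identity $\max_X \eubo_n = \max_X V_n^0$. Although the \emph{statement} of Theorem~1 only asserts the inclusion of argmax sets, which by itself gives $V_n^0(X^*) = \max_X V_n^0(X)$ but not the value of $\eubo_n(X^*)$, its \emph{proof} in fact establishes the two pointwise inequalities \eqref{eq:lemma} and \eqref{eq:lemma2}, and these are precisely what sandwich the two optima into equality. I would therefore be careful to cite the inequalities themselves rather than the weaker argmax conclusion. Everything else is mechanical: Lemma~\ref{lemma:a2} transfers the additive $\lambda C$ penalty from the noisy functional $U_n^\lambda$ to $\eubo_n$, and Lemma~\ref{lemma:a3} lower-bounds the one-step value $V_n^\lambda$ by $U_n^\lambda$ via the law of iterated expectation, so no further estimates are needed.
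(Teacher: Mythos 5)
Your proof is correct and follows essentially the same route as the paper's: both chain $V_n^\lambda(X^*) \geq U_n^\lambda(X^*) \geq \eubo_n(X^*) - \lambda C$ via Lemmas~\ref{lemma:a3} and~\ref{lemma:a2}, then use inequality~\eqref{eq:lemma} from the proof of Theorem~1 to bound $\max_X V_n^0(X)$ by $\max_X \eubo_n(X) = \eubo_n(X^*)$. The only (immaterial) difference is presentational: the paper chains through the specific point $X^+(X^{**})$ with $X^{**}$ a maximizer of $V_n^0$, whereas you take maxima in \eqref{eq:lemma} and \eqref{eq:lemma2} to get the equality $\max_X \eubo_n(X) = \max_X V_n^0(X)$ --- of which only the ``$\geq$'' direction is actually needed.
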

\begin{proof}
Let $X^{**} \in \argmax_{X\in \X^q} V_n^0(X)$. We have the following chain of inequalities:
\begin{align*}
V_n^\lambda(X^*) &\geq  U_n^\lambda(X^*)\\
&\geq U_n^0(X^*) - \lambda C\\
& = \eubo_n(X^*) - \lambda C\\
& \geq \eubo_n(X^+(X^{**})) - \lambda C\\
& \geq V_n^0(X^{**}) - \lambda C\\
& = \max_{X\in \X^q} V_n^0(X) - \lambda C.
\end{align*}
The first inequality follows from Lemma~\ref{lemma:a3}. The second inequality follows from Lemma~\ref{lemma:a2}. The third line (first equality) follows from the definition of $U_n^0$. The fourth line (third inequality) follows from the definition of  $X^*$. The fifth line (fourth inequality) can be obtained as in the proof of Theorem 1. Finally, the last line (second equality) follows from the definition of $X^{**}$.
\end{proof}
\section{PROOFS OF THEOREMS 3 AND 4}
\label{sec:thm3_4}
In this section, we prove Theorems 3 and 4. In contrast with other sections, here we use super indices with parentheses to denote the iteration number, $n$. We denote the query presented to the used by $X^{(n)}$, and the corresponding user's response by $r(X^{(n)})$. We shall sometimes denote this response more compactly by $r^{(n)}$. We let $\D^{(n)} = \{(X^{(m)}, r^{(m)})\}_{m=1}^n$ denote the data collected  up to time $n$. Similarly, we denote the conditional expectation given $\D^{(n)}$ by $\E^{(n)}$.

Recall that Theorems~\ref{thm:eubo_conv} and ~\ref{thm:qEI} assume that $\X$ is finite and and $q=2$. We make these assumptions throughout this section without stating them explicitly. We also assume that $\E[\max_{x\in\X}|f(x)|] < \infty$, which guarantees that all expectations involved in our analysis are finite.

The statement of Theorems 3 and 4 rely on the following assumptions (called Assumptions 1-3 respectively):
\begin{enumerate}
    \item $\prob(f(x) = f(y))=0$ for any $x,y\in\X$ with $x\neq y$.
    \item There exists $a > 1/2$ such that $\prob(r(X) \in \argmax_{i=1,\ldots, 2}f(x_i) \mid f(X)) \geq a$ for any $X=(x_1, x_2) \in \X^2$ with $x_1\neq x_2$ almost surely under the prior on $f$.
    \item There exist $\Delta \geq \delta > 0$ such that for any $\D^{(n)}$ and any $x, y \in \X$ (potentially depending on $\D^{(n)}$),
\begin{equation*}
\delta \prob^{(n)}(f(x) > f(y)) \leq \E^{(n)}[\{f(x) - f(y)\}^+]   \leq  \Delta \prob^{(n)}(f(x) > f(y))
\end{equation*}
almost surely under the prior on $f$.
\end{enumerate}

Before we begin with the proofs of Theorems 3 and 4, we show that  Assumptions 1-3 hold in two general settings. This is summarized in the following two lemmas. 


First, we show that if the difference in utilities between items is bounded below (away from 0) and above almost surely under the prior, then Assumptions 1-3 hold for a broad class of likelihoods for the query response.
\begin{lemma}
\label{lemma:b1}
Suppose that there exist $\Delta \geq \delta > 0$ such that $\delta \leq |f(x) - f(y)| \leq \Delta$ almost surely whenever $x\neq y$.
Additionally, suppose that the likelihood function 
$L_i(f(X);\lambda) = \prob(r(X)=i\mid f(X))$ is 
bounded below by $a>1/2$ when $\delta \leq |f(x) - f(y)|$. 
Then, Assumptions 1-3 hold.
\end{lemma}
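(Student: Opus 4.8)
The plan is to verify Assumptions~1--3 in turn, observing that all three are consequences of the almost-sure gap condition $\delta \le |f(x)-f(y)| \le \Delta$ together with the likelihood lower bound, with essentially all the content concentrated in a single pathwise inequality for Assumption~3.

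First I would dispatch Assumptions~1 and 2, both of which are immediate. For Assumption~1, since $\delta \le |f(x)-f(y)|$ holds almost surely under the prior whenever $x\neq y$ and $\delta>0$, we have $f(x)\neq f(y)$ almost surely, so $\prob(f(x)=f(y))=0$. For Assumption~2, fix $X=(x_1,x_2)$ with $x_1\neq x_2$; the gap condition gives $\delta \le |f(x_1)-f(x_2)|$ almost surely, so the hypothesis that the likelihood of selecting the higher-utility item (the index $i$ achieving $\max\{f(x_1),f(x_2)\}$) is at least $a$ applies on a probability-one event. Hence $\prob(r(X)\in\argmax_{i=1,2}f(x_i)\mid f(X))\ge a$ almost surely, which is Assumption~2 with the same $a>1/2$.

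The substance is Assumption~3. Writing $\{f(x)-f(y)\}^+ = (f(x)-f(y))\,\mathbf{1}[f(x)>f(y)]$ and noting that on the event $\{f(x)>f(y)\}$ one has $f(x)-f(y)=|f(x)-f(y)|\in[\delta,\Delta]$ almost surely, I obtain the pathwise bound
\begin{equation*}
\delta\,\mathbf{1}[f(x)>f(y)] \;\le\; \{f(x)-f(y)\}^+ \;\le\; \Delta\,\mathbf{1}[f(x)>f(y)]
\end{equation*}
almost surely. Taking the conditional expectation $\E^{(n)}[\cdot]$ of all three quantities and using $\E^{(n)}[\mathbf{1}[f(x)>f(y)]]=\prob^{(n)}(f(x)>f(y))$ then yields the claimed two-sided inequality; the degenerate case $x=y$ is trivial, both sides being zero.

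The hard part is not analytic but measure-theoretic: I must ensure the prior-almost-sure gap bound transfers to the \emph{posterior} for every realization of $\D^{(n)}$, which is delicate because $x$ and $y$ are permitted to depend on $\D^{(n)}$. Here I would exploit the finiteness of $\X$: the gap condition holds simultaneously for all of the finitely many pairs $(x,y)\in\X^2$ on a single prior-probability-one event $A$. Since the posterior given any $\D^{(n)}$ is absolutely continuous with respect to the prior, $\prob^{(n)}(A)=1$ as well, so on $A$ the pathwise inequality above is valid for every pair at once, and in particular for any data-dependent choice of $x,y$. Taking conditional expectations on this event completes the argument.
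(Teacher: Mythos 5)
Your proposal is correct and follows essentially the same route as the paper's proof: Assumptions~1 and~2 fall out immediately from the gap condition and the likelihood lower bound, and Assumption~3 is obtained from the pathwise inequality $\delta\,\mathbf{1}\{f(x)>f(y)\}\leq\{f(x)-f(y)\}^+\leq\Delta\,\mathbf{1}\{f(x)>f(y)\}$ followed by taking posterior expectations, exactly as in the paper. Your additional measure-theoretic remark---using finiteness of $\X$ and absolute continuity of the posterior with respect to the prior to justify that the prior-almost-sure bound survives conditioning on $\D^{(n)}$ even for data-dependent $x,y$---is a sound elaboration of a step the paper handles with the brief parenthetical ``(and thus under the posterior at time $n$)''.
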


\begin{proof} Since $\delta \leq f(x) - f(y)|$  almost surely whenever $x\neq y$, Assumption 1 holds trivially. The assumed lower bound on the likelihood implies Assumption~2.

Now we show that Assumption 3 holds. Observe that
\begin{align*}
\{f(x) - f(y)\}^+ &= (f(x) - f(y))1\{f(x) > f(y)\}\\
&\geq \delta 1\{f(x) > f(y)\}    
\end{align*}
almost surely under the prior (and thus under the posterior at time $n$) on $f$. Thus,
\begin{align*}
\E^{(n)}\left[\{f(x) - f(y)\}^+\right] &\geq \E^{(n)}\left[\delta 1(f(x) > f(y)\right]\\
&= \delta \prob^{(n)}(f(x) > f(y)).
\end{align*}

We can show similarly that
\begin{equation*}
\E^{(n)}\left[\{f(x) - f(y)\}^+\right] < \Delta \prob^{(n)}(f(x) > f(y)).    
\end{equation*}
\end{proof}

This lemma's lower bound on the likelihood is satisfied for the logistic likelihood, the probit likelihood, or for any other likelihood in which $\prob(r(X)=1 \mid f(X))$ is a strictly increasing function of $f(x_1)-f(x_2)$ and equal to $1/2$ when $f(x_1) = f(x_2)$.

Our next result considers the situation where the difference in item utilities is not bounded.
In this situation, it shows that Assumptions 1-3 hold 
provided the DM's response likelihood satisfies a stronger condition than that assumed by Lemma~\ref{lemma:b1}: the DM's responses are correct with constant probability greater than $1/2$ whenever the items in the query have non-identical utility value. 
This result applies to general non-degenerate GP prior distributions; i.e., for GP prior distributions with a positive definite covariance function.
\begin{lemma}
\label{lemma:b2}
Suppose that Assumption 1 holds and there exists $a > 1/2$ such that $\prob(r(X) = \argmax_{i=1,\ldots, 2}f(x_i) \mid f(X)) = a$ whenever $f(x_1)\neq f(x_2)$. Then, Assumption 3 holds.
\end{lemma}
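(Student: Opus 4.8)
The plan is to exploit the fact that, under the constant-probability likelihood, a query response reveals only the \emph{sign} of the difference between the two latent values and nothing about its magnitude. Concretely, since $\prob(r(X^{(m)})=i\mid f)$ equals $a$ or $1-a$ according to whether the recorded response $r^{(m)}$ agrees with the larger of $f(x_1^{(m)}),f(x_2^{(m)})$, the likelihood of the whole history $\D^{(n)}$ is a function of $f$ (restricted to the finite set $\X$) that is \emph{constant} on each region $A_s=\{f:\mathrm{sign}(f(x_1^{(m)})-f(x_2^{(m)}))=s_m,\ m=1,\ldots,n\}$. Writing $\ell_s$ for the constant likelihood value on $A_s$, computing the Radon--Nikodym derivative of the posterior with respect to the prior then yields the mixture representation
$$\E^{(n)}[h]=\sum_s w_s\,\E_0[h\mid A_s],\qquad w_s=\frac{\ell_s\,\prob_0(A_s)}{\sum_{s'}\ell_{s'}\,\prob_0(A_{s'})},$$
valid for every integrable $h$, where $\E_0,\prob_0$ denote prior quantities. (Assumption~1 ensures the boundaries $\{f(x)=f(y)\}$ are prior-null, so the $A_s$ partition $\X$-space up to a null set.)

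Applying this with $h=\{f(x)-f(y)\}^+$ and $h=1\{f(x)>f(y)\}$, it suffices to establish the two-sided bound of Assumption~3 for the prior conditioned on each region $A_s$ separately; the bound for the posterior then follows by taking the nonnegative $w_s$-weighted sum of those inequalities. So the remaining task is to find $\delta,\Delta$ with $\delta\,\prob_0(f(x)>f(y)\mid A_s)\le \E_0[\{f(x)-f(y)\}^+\mid A_s]\le \Delta\,\prob_0(f(x)>f(y)\mid A_s)$ uniformly over all $A_s$ and all pairs $x\neq y$. The key observation is that only finitely many regions can ever arise: each $A_s$ is an intersection of half-spaces cut out by the hyperplanes $\{f(x)=f(y)\}$, and there are at most $\binom{|\X|}{2}$ such hyperplanes \emph{regardless of $n$}, so as $\D^{(n)}$ ranges over all histories the $A_s$ range over a fixed finite collection. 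Fixing one region $A$ and a pair $x\neq y$ and setting $B=A\cap\{f(x)>f(y)\}$: if $\prob_0(B)=0$ both sides vanish; otherwise $f(x)-f(y)>0$ almost surely on $B$, so $\E_0[f(x)-f(y)\mid B]$ is strictly positive, and it is finite because $f(x)-f(y)$ is integrable (using $\E[\max_x|f(x)|]<\infty$) and $\prob_0(B)>0$. Since $\E_0[\{f(x)-f(y)\}^+\mid A]=\E_0[f(x)-f(y)\mid B]\,\prob_0(f(x)>f(y)\mid A)$, taking $\delta$ and $\Delta$ to be the minimum and maximum of $\E_0[f(x)-f(y)\mid B]$ over the finitely many admissible $(A,x,y)$ with $\prob_0(B)>0$ produces a single $\delta>0$ and $\Delta<\infty$ that work for all regions at once.

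The step I expect to be most delicate is precisely justifying this finiteness of the collection of regions, and hence the \emph{uniformity of $\delta,\Delta$ in $n$}: this is where the constant-probability hypothesis is indispensable. Under a magnitude-dependent likelihood (e.g.\ the logistic likelihood with unbounded utility differences) the posterior would not reduce to a mixture of the prior conditioned on these finitely many cones, and it could concentrate so as to drive $\E_0[f(x)-f(y)\mid B]$ toward $0$ or $\infty$, destroying the uniform bounds. This is exactly why Lemma~\ref{lemma:b2} requires a stronger likelihood condition than Lemma~\ref{lemma:b1}, where the almost-sure bound $\delta\le|f(x)-f(y)|\le\Delta$ made the argument immediate.
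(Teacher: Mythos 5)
Your proof is correct, and its engine is the same as the paper's: under the constant-$a$ likelihood the data carry only ordinal information about $f$, so the posterior is a mixture of prior conditionals over order-type cells, and finiteness of the collection of cells yields $\delta,\Delta$ uniform in $n$. The difference lies in the choice of partition. The paper conditions on the \emph{full rankings} $A_\pi = \{f(x_{\pi(1)}) < \cdots < f(x_{\pi(|\X|)})\}$, a fixed, history-independent family of $|\X|!$ cells, and observes that the law of $f$ given $\D^{(n)}$ and $A_\pi$ coincides with its law given $A_\pi$ alone; since every indicator $1\{f(x)>f(y)\}$ is constant on each $A_\pi$, the bound $\E^{(n)}[\{f(x)-f(y)\}^+] \geq \delta\,\prob^{(n)}(f(x)>f(y))$ drops out immediately with $\delta = \min\{\gamma(x,y,\pi) : \gamma(x,y,\pi)>0\}$, where $\gamma(x,y,\pi)=\E[\{f(x)-f(y)\}^+\mid A_\pi]$, and symmetrically for $\Delta$. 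You instead use the coarser, history-dependent partition into sign cells of the \emph{queried} pairs, which forces two extra steps the paper's finer partition avoids: the factorization $\E_0[\{f(x)-f(y)\}^+\mid A] = \E_0[f(x)-f(y)\mid B]\,\prob_0(f(x)>f(y)\mid A)$ with $B=A\cap\{f(x)>f(y)\}$, needed because the sign of $f(x)-f(y)$ need not be constant on a cell $A$ when $(x,y)$ was never queried, and the observation that only finitely many sign cells can arise across all histories (at most $3^{\binom{|\X|}{2}}$, since repeated queries of a pair reuse the same hyperplane and only the cell's likelihood weight, not the cell, changes with $n$). You handle both correctly, so the two proofs are interchangeable; the paper's finest-partition choice buys a shorter argument with no uniformity-in-$n$ discussion at all, while yours makes explicit the Radon--Nikodym/mixture mechanism that the paper only states verbally, and your closing remark correctly isolates why a magnitude-dependent likelihood (e.g.\ logistic with unbounded utility gaps) would destroy the mixture representation --- which is precisely why this lemma needs the constant-probability hypothesis that Lemma~\ref{lemma:b1} does not.
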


\begin{proof}
We first make a fundamental observation. For a permutation $\pi: \{1, \ldots, |\X|\} \rightarrow \{1, \ldots, |\X|\}$, let $A_\pi$ denote the event $\{f(x_{\pi(1)}) < \cdots < f(x_{\pi(|\X|)})\}$. We note that the distribution of $f$ given $\D^{(n)}$ and $A_\pi$ coincides with the distribution of $f$ given $A_\pi$. This observation may be surprising at first glance, but it has a very intuitive interpretation. Since the DM's responses depend exclusively on the relative order of the utility values, they only provide ordinal information about the utility values. Thus, if the relative order of the utilities of all points is known, we learn nothing from the DM's responses; i.e., the conditional distribution given $A_\pi$ remains unchanged if we observe $\D^{(n)}$.  This property relies heavily on the choice of the likelihood and does not hold in general for other likelihoods such as the logistic likelihood. We will make use of this observation in the proof of this lemma.

We first prove the existence of $\delta$. Let $\Pi$ be the set of all permutations over $\{1,\ldots, |\X|\}$ and let $\pi$ be an arbitrary element of $\Pi$. Define
\begin{equation*}
\gamma(x,y, \pi) = \E[\{f(x) - f(y)\}^+ | A_\pi].  
\end{equation*}
We note that $\E[\{f(x) - f(y)\}^+ | A_\pi] > 0$ if and only if $f(x) > f(y)$ under $A_\pi$. 

Let
\begin{equation*}
 \delta = \min_{x,y\in\X, \pi\in\Pi}\{\gamma(x,y, \pi) : \gamma(x,y, \pi) > 0\},
\end{equation*}
We have
\begin{align*}
\E^{(n)}[\{f(x) - f(y)\}^+] &= \sum_{\pi\in\Pi}\prob^{(n)}(A_\pi)\E^{(n)}[\{f(x) - f(y)\}^+ \mid A_\pi]\\
&= \sum_{\pi\in\Pi}\prob^{(n)}(A_\pi)\E[\{f(x) - f(y)\}^+ \mid A_\pi]\\
&\geq \sum_{\pi\in\Pi}\prob^{(n)}(A_\pi) 1\{f(x) > f(y) \mid A_\pi\} \delta\\
&= \delta\prob^{(n)}(f(x) > f(y)),
\end{align*}
where the second equation follows from the observation earlier that $f$ and the DM's responses are conditionally independent given $A_\pi$ and the inequality follows from the observation above that $\E[\{f(x) - f(y)\}^+ | A_\pi] > 0$ if and only if $f(x) > f(y)$ under $A_\pi$.

The existence of $\Delta$ can be proved similarly by taking
\begin{equation*}
\Delta = \max_{x,y\in\X, \pi\in\Pi}\{\delta(x,y, \pi) : \delta(x,y, \pi) > 0\}.    
\end{equation*}
\end{proof}

\subsection{Proof of Theorem 3}
The proof of Theorem~\ref{thm:eubo_conv} is achieved via a series of lemmas. We assume that $$X^{(n+1)} = (x_1^{(n)}, x_2^{(n)}) \in\argmax_{X\in \X^2}\eubo^{(n)}(X)$$ for all $n$ throughout the proofs of these lemmas. We  define $x^* = \argmax_{x\in\X}f(x)$, and $x_*^{(n)} = \argmax_{x\in\X}\E^{(n)}[f(x)]$. 

\begin{lemma}
\label{lemma:b3}
Let $p_{**}^{(n)} = \prob^{(n)}(x_*^{(n)}\neq x_*)$. Then, 
\begin{equation*}
    \max_{X\in \X^2}\eubo^{(n)}(X)  \geq \max_{x \in \X}\E^{(n)}[f(x)] + \frac{\delta}{|\X|-1}p_{**}^{(n)}.
\end{equation*}
\end{lemma}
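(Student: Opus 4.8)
The plan is to lower-bound the maximum qEUBO value by evaluating qEUBO at a single, cleverly chosen query: one that pairs the current incumbent $x_*^{(n)}$ with a well-chosen competitor $y\in\X$. Since $q=2$ and $\max\{a,b\}=a+\{b-a\}^+$, for any such query
\begin{equation*}
\eubo^{(n)}(x_*^{(n)}, y) = \E^{(n)}\!\left[\max\{f(x_*^{(n)}), f(y)\}\right] = \E^{(n)}[f(x_*^{(n)})] + \E^{(n)}\!\left[\{f(y) - f(x_*^{(n)})\}^+\right].
\end{equation*}
The first term equals $\max_{x\in\X}\E^{(n)}[f(x)]$ by definition of the incumbent, so the whole problem reduces to bounding the second term from below by $\tfrac{\delta}{|\X|-1}p_{**}^{(n)}$.

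For the second term I would invoke Assumption 3 in its posterior (time-$n$) form, which the assumption statement explicitly permits, giving $\E^{(n)}[\{f(y)-f(x_*^{(n)})\}^+] \geq \delta\,\prob^{(n)}(f(y) > f(x_*^{(n)}))$. Hence for every $y$,
\begin{equation*}
\max_{X\in\X^2}\eubo^{(n)}(X) \geq \eubo^{(n)}(x_*^{(n)}, y) \geq \max_{x\in\X}\E^{(n)}[f(x)] + \delta\,\prob^{(n)}(f(y) > f(x_*^{(n)})),
\end{equation*}
and it suffices to exhibit a single competitor $y$ with $\prob^{(n)}(f(y) > f(x_*^{(n)})) \geq p_{**}^{(n)}/(|\X|-1)$.

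This last step is the combinatorial heart of the argument. By Assumption 1 the maximizer $x^*$ is almost surely unique, so the event $\{x_*^{(n)} \neq x^*\}$ coincides with $\{f(x^*) > f(x_*^{(n)})\}$, which equals the union $\bigcup_{y\neq x_*^{(n)}}\{f(y) > f(x_*^{(n)})\}$. Applying the union bound over the $|\X|-1$ competitors,
\begin{equation*}
p_{**}^{(n)} = \prob^{(n)}\!\left(\bigcup_{y\neq x_*^{(n)}}\{f(y) > f(x_*^{(n)})\}\right) \leq \sum_{y\neq x_*^{(n)}}\prob^{(n)}(f(y) > f(x_*^{(n)})).
\end{equation*}
Since the sum has $|\X|-1$ terms, an averaging (pigeonhole) argument yields some $y\neq x_*^{(n)}$ with $\prob^{(n)}(f(y) > f(x_*^{(n)})) \geq p_{**}^{(n)}/(|\X|-1)$; substituting this $y$ into the previous display gives the claim.

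I anticipate no serious obstacle: the proof is a short chain of elementary steps. The only genuinely substantive move is the idea of forcing the query to contain the incumbent, which converts the qEUBO advantage into an improvement \emph{probability} that can be controlled; the $1/(|\X|-1)$ factor then emerges precisely from the pigeonhole argument over the competitors. The one point requiring care is to apply Assumption 3 with $x,y$ possibly depending on $\D^{(n)}$ (here $x_*^{(n)}$ does), which is exactly the generality in which that assumption is stated.
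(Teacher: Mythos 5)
Your proposal is correct and follows essentially the same route as the paper's proof: both evaluate $\eubo^{(n)}$ at queries pairing the incumbent $x_*^{(n)}$ with a competitor, decompose $\max\{a,b\}=a+\{b-a\}^+$, apply the $\delta$ lower bound of Assumption~3 (in its posterior form, with data-dependent arguments, exactly as you note), and obtain the $1/(|\X|-1)$ factor from a union bound over competitors combined with an averaging/pigeonhole step. The only cosmetic difference is that the paper keeps $\max_{x\in\X}$ inside the chain of inequalities rather than exhibiting a single witness $y$, which is the same argument phrased differently.
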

\begin{proof}
Observe that 
\begin{align*}
\max_{X\in \X^2}\eubo^{(n)}(X) &\geq \max_{x\in\X}\eubo^{(n)}(x, x_*^{(n)})\\
&= \max_{x\in\X}\E^{(n)}[\max\{f(x), f(x_*^{(n)})\}]\\
&= \max_{x\in\X}\E^{(n)}[f(x_*^{(n)}) + \{f(x) - f(x_*^{(n)})\}^+]\\
&= \E^{(n)}[f(x_*^{(n)})] + \max_{x\in\X}\E^{(n)}[\{f(x) - f(x_*^{(n)})\}^+]\\
&\geq \max_{x \in \X}\E^{(n)}[f(x)] + \max_{x\in\X} \delta\prob^{(n)}(f(x) > f(x_*^{(n)}))\\
&\geq \max_{x \in \X}\E^{(n)}[f(x)] + \frac{\delta}{|\X|-1}\sum_{x\in\X}\prob^{(n)}(f(x) > f(x_*^{(n)}))\\
&\geq \max_{x \in \X}\E^{(n)}[f(x)] + \frac{\delta}{|\X|-1}\prob^{(n)}(\cup_{x\in\X}\{f(x) > f(x_*^{(n)})\})\\
&= \max_{x \in \X}\E^{(n)}[f(x)] + \frac{\delta}{|\X|-1}\prob^{(n)}(x_*^{(n)}\neq x_*),
\end{align*}
where the second inequality follows from Assumption 2 and the last inequality follows from the union bound.
\end{proof}

\begin{lemma}
\label{lemma:b4}
Let $p_1^{(n)} = \prob^{(n)}(f(x_1^{(n+1)}) > f(x_2^{(n+1)}))$ and $p_2^{(n)} = \prob^{(n)}(f(x_2^{(n+1)}) > f(x_1^{(n+1)}))$. Then,
\begin{equation*}
p_1^{(n)}, \ p_2^{(n)} \geq C p_{**}^{(n)},   
\end{equation*}
where $C= \frac{\delta}{(|\X|-1)\Delta}$.
\end{lemma}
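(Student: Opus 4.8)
The plan is to sandwich the optimal qEUBO value between two quantities: a lower bound coming from Lemma~\ref{lemma:b3}, and an upper bound written in terms of $p_1^{(n)}$ (respectively $p_2^{(n)}$). Since $X^{(n+1)} = (x_1^{(n)}, x_2^{(n)})$ maximizes $\eubo^{(n)}$, Lemma~\ref{lemma:b3} immediately gives
\begin{equation*}
\eubo^{(n)}(X^{(n+1)}) = \E^{(n)}\left[\max\{f(x_1^{(n)}), f(x_2^{(n)})\}\right] \geq \max_{x\in\X}\E^{(n)}[f(x)] + \frac{\delta}{|\X|-1}p_{**}^{(n)}.
\end{equation*}
This is the lower half of the sandwich, and it is where all the dependence on $p_{**}^{(n)}$ enters.

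For the upper half, the key step I would use is the elementary identity $\max\{a,b\} = b + \{a-b\}^+$. Applying it inside the expectation gives
\begin{equation*}
\eubo^{(n)}(X^{(n+1)}) = \E^{(n)}[f(x_2^{(n)})] + \E^{(n)}\left[\{f(x_1^{(n)}) - f(x_2^{(n)})\}^+\right].
\end{equation*}
I would then bound the first term by $\max_{x\in\X}\E^{(n)}[f(x)]$, and invoke the right-hand inequality of Assumption~3 to control the positive-part term by $\Delta\,\prob^{(n)}(f(x_1^{(n)}) > f(x_2^{(n)})) = \Delta\, p_1^{(n)}$. This yields
\begin{equation*}
\eubo^{(n)}(X^{(n+1)}) \leq \max_{x\in\X}\E^{(n)}[f(x)] + \Delta\, p_1^{(n)}.
\end{equation*}

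Chaining the lower and upper bounds, the common term $\max_{x\in\X}\E^{(n)}[f(x)]$ cancels, leaving $\frac{\delta}{|\X|-1}p_{**}^{(n)} \leq \Delta\, p_1^{(n)}$, which rearranges to $p_1^{(n)} \geq C\, p_{**}^{(n)}$ with $C = \frac{\delta}{(|\X|-1)\Delta}$. The bound for $p_2^{(n)}$ follows by the symmetric decomposition $\max\{a,b\} = a + \{b-a\}^+$, which produces the identical chain with the roles of the two query points swapped. I do not anticipate a genuine obstacle here, as the argument is a short chain of inequalities; the only thing to be careful about is correctly pairing each decomposition of the maximum with the matching probability ($p_1^{(n)}$ versus $p_2^{(n)}$) so that Assumption~3 is applied to the right difference.
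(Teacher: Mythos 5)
Your proposal is correct and follows essentially the same route as the paper's proof: the identical sandwich argument, combining the lower bound from Lemma~\ref{lemma:b3} (via optimality of $X^{(n+1)}$) with the decomposition $\max\{a,b\} = b + \{a-b\}^+$, bounding the positive-part term by $\Delta$ times the corresponding probability using Assumption~3, and obtaining the second inequality by symmetry. The only difference is cosmetic: the paper writes the decomposition as $f(x_1^{(n+1)}) + \{f(x_2^{(n+1)}) - f(x_1^{(n+1)})\}^+$ and so bounds $p_2^{(n)}$ first, whereas you bound $p_1^{(n)}$ first.
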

\begin{proof}
We have
\begin{align*}
\eubo^{(n)}(X^{(n+1)}) &=  \E^{(n)}[f(x_1^{(n+1)})] + \E^{(n)}[\{f(x^{(n+1)}_2) - f(x^{(n+1)}_1)\}^+] \\
&\leq \E^{(n)}[f(x_1^{(n+1)})] +  \Delta\prob^{(n)}(f(x_2^{(n+1)}) > f(x_1^{(n+1)}))\\
&\leq \max_{x \in \X}\E^{(n)}[f(x)] + \Delta\prob^{(n)}(f(x_2^{(n+1)}) > f(x_1^{(n+1)})),
\end{align*}
where the first inequality is again due to Assumption 2. Combining this with Lemma~\ref{lemma:b2}., we get
\begin{equation*}
    \max_{x \in \X}\E^{(n)}[f(x)] + \Delta\prob^{(n)}(f(x_2^{(n+1)}) > f(x_1^{(n+1)})) \geq \max_{x \in \X}\E^{(n)}[f(x)] + \frac{\delta}{|\X|-1}p_{**}^{(n)};
\end{equation*}
i.e.,
\begin{equation*}
    \prob^{(n)}(f(x_2^{(n+1)}) >  f(x_1^{(n+1)}) \geq \frac{\delta}{(|\X|-1)\Delta}p_{**}^{(n)}.
\end{equation*}
Finally, it follows by symmetry that
\begin{equation*}
    \prob^{(n)}(f(x_1^{(n+1)}) >  f(x_2^{(n+1)}) \geq \frac{\delta}{(|\X|-1)\Delta}p_{**}^{(n)},
\end{equation*}
which finishes the proof.
\end{proof}


\begin{lemma}
\label{lemma:b5}
Let $X$ denote a generic discrete random vector and $Y$ denote a Bernoulli random variable correlated with $X$. Let $p_X$ and $p_Y$ denote the marginal distributions of $X$ and $Y$, respectively, and $p_{X|Y}$ denote the conditional distribution of $X$ given $Y$. 
Let $\entropy(X)$ and $\entropy(X|Y=y)$ denote the entropy of $X$ and the conditional entropy of $X$ given $Y=y$; i.e., 
\begin{equation*}
 \entropy(X) = -\sum_x p_X(x)\, \log(p_X(x)),   
\end{equation*}
and
\begin{equation*}
 \entropy(X|Y=y) = -\sum_x p_{X|Y}(x|y)\, \log(p_{X|Y}(x|y)).   
\end{equation*}

Define $q(x) = \prob(Y=0|X=x)$. Then,
\begin{equation*}
-\sum_{y=0}^1 p_Y(y) \entropy(X|Y=y) = \entropy(X) - \left[h\left(\sum_x p_X(x) q(x)\right) - \sum_x p_X(x) h(q(x))\right],
\end{equation*}
where $h(q) = -q\log q - (1-q) \log(1-q)$ is the binary entropy function.
\end{lemma}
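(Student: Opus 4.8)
The plan is to recognize the claimed identity as the chain rule for entropy combined with the symmetry of mutual information. Writing $\entropy(X\mid Y) = \sum_y p_Y(y)\,\entropy(X\mid Y=y)$ for the conditional entropy, the difference $\entropy(X) - \entropy(X\mid Y)$ is by definition the mutual information $I(X;Y)$; and the bracketed term on the right-hand side, $h(\sum_x p_X(x) q(x)) - \sum_x p_X(x) h(q(x))$, is exactly $I(X;Y)$ written from the $Y$-side as $\entropy(Y) - \entropy(Y\mid X)$. Thus the entire content of the lemma is the identity $\entropy(X) - \entropy(X\mid Y) = \entropy(Y) - \entropy(Y\mid X)$ together with the explicit evaluation of $\entropy(Y)$ and $\entropy(Y\mid X)$.

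To carry this out, I would first compute the marginal of $Y$ by the law of total probability: since $q(x) = \prob(Y=0\mid X=x)$, we have $p_Y(0) = \sum_x p_X(x)\,q(x)$, and therefore $\entropy(Y) = h(p_Y(0)) = h\!\left(\sum_x p_X(x) q(x)\right)$. Next, conditionally on $X=x$ the variable $Y$ is Bernoulli with parameter $q(x)$, so $\entropy(Y\mid X=x) = h(q(x))$; averaging over the marginal of $X$ gives $\entropy(Y\mid X) = \sum_x p_X(x)\,h(q(x))$. Subtracting yields $I(X;Y) = h(\sum_x p_X(x) q(x)) - \sum_x p_X(x) h(q(x))$, which is precisely the bracketed expression, and rearranging $\entropy(X\mid Y) = \entropy(X) - I(X;Y)$ gives the claim.

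If one prefers a self-contained derivation that avoids invoking the mutual-information identity, I would instead substitute Bayes' rule $p_{X\mid Y}(x\mid y) = p_X(x)\,p_{Y\mid X}(y\mid x)/p_Y(y)$ into $\entropy(X\mid Y=y) = -\sum_x p_{X\mid Y}(x\mid y)\log p_{X\mid Y}(x\mid y)$, expand the logarithm as $\log p_X(x) + \log p_{Y\mid X}(y\mid x) - \log p_Y(y)$, and sum against the joint $p_{X,Y}(x,y)$ over both $x$ and $y$. The three resulting groups of terms assemble into $\entropy(X) + \sum_x p_X(x) h(q(x)) - h(\sum_x p_X(x) q(x))$, i.e.\ $\entropy(X)$ minus the bracketed expression; this reproduces the relation directly and makes transparent where each piece of the right-hand side originates.

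There is no analytic difficulty here: once the marginal $p_Y(0) = \sum_x p_X(x) q(x)$ and the Bernoulli conditional entropies $\entropy(Y\mid X=x) = h(q(x))$ are identified, the result is immediate. The only delicate point, and the one I would check most carefully, is the sign bookkeeping, since conditional entropy, mutual information, and the binary entropy $h$ must be combined with consistent orientation; in particular one should verify the overall sign of the final display so that it correctly encodes $\entropy(X\mid Y) = \entropy(X) - I(X;Y)$.
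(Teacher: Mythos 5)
Your proof follows essentially the same route as the paper's: invoke the symmetry of mutual information, $\entropy(X)-\entropy(X\mid Y)=\entropy(Y)-\entropy(Y\mid X)$, then evaluate $\entropy(Y)=h\bigl(\sum_x p_X(x)q(x)\bigr)$ via the law of total probability and $\entropy(Y\mid X)=\sum_x p_X(x)h(q(x))$ from the Bernoulli conditionals, exactly as the paper does (citing Cover and Thomas for the identity). Your caution about sign bookkeeping is well placed: the leading minus sign on the lemma's left-hand side appears to be a typo in the paper, since both the paper's proof and its subsequent use in Lemma~\ref{lemma:b6} treat that side as the (positive) conditional entropy $\sum_y p_Y(y)\,\entropy(X\mid Y=y)$, which is what your derivation correctly establishes.
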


\begin{proof}
Observe that $-\sum_y p_Y(y) \entropy(X|Y=y)$ is the conditional entropy of $X$ given $Y$ (see definition 2.10 in \cite{cover1999elements}). By basic information theory results (see equations 2.43 and 2.44 in \cite{cover1999elements}) we know that
\begin{equation*}
\entropy(X) - \entropy(X|Y) =  \entropy(Y) - \entropy(Y|X)    
\end{equation*}

Rearranging terms, we obtain
\begin{equation*}
 \entropy(X|Y) = \entropy(X) - (\entropy(Y) - \entropy(Y|X))   
\end{equation*}

Consider the term $\entropy(Y) - \entropy(Y|X)$. We have 
\begin{align*}
\prob(Y=0) &= \sum_x p_X(x)\prob(Y=0|X=x)\\
&= \sum_x p_X(x)q(x).
\end{align*}
Moreover, since $Y$ is a Bernoulli random variable, $\entropy(Y) = h(\prob(Y=0))$. 

The second term is
\begin{align*}
 \entropy(Y|X) &= \sum_x p_X(x) \entropy(Y|X=x)\\
 &= \sum_x p_X(x) h(q(x)).
\end{align*}

Hence,
\begin{equation*}
\entropy(Y) - \entropy(Y|X) = h(\prob(Y=0)) - \sum_x p_X(x) h(q(x)),    
\end{equation*}
which concludes the proof.

\end{proof}


\begin{lemma} 
\label{lemma:b6}
Enumerate the elements of $\X$ as $x_1,\ldots, x_{|\X|}$ and let $\pi_f$ be the random permutation satisfying $f(x_{\pi_f(1)}) < \cdots < f(x_{\pi_f(|\X|)})$. Let $p_{\pi_f}^{(n)}$ denote the posterior on $\pi$ given $\D^{(n)}$. Then,
\begin{equation*}
\E^{(n)}[\entropy(p_{\pi_f}^{(n+1)})]   \leq \entropy(p_{\pi_f}^{(n)}) -\varphi(p_1^{(n)})    
\end{equation*}
where the expectation on the left-hand side is over $r^{(n+1)}$, and $\varphi(u) = h(au + (1-a)(1-u)) - h(a)$.
\end{lemma}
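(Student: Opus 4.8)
The plan is to read the left-hand side information-theoretically and reduce the whole claim to a one-line lower bound on a binary mutual information. First I would note that, conditioned on $\D^{(n)}$, the query $X^{(n+1)}$ is deterministic (it maximizes $\eubo^{(n)}$), so the only fresh randomness entering $\D^{(n+1)}$ is the response $r^{(n+1)}$. Hence $\entropy(p_{\pi_f}^{(n+1)})$ is the entropy of the conditional law of $\pi_f$ given $\D^{(n)}$ and the realized response, and averaging over $r^{(n+1)}$ turns the left-hand side into the conditional entropy of $\pi_f$ given $r^{(n+1)}$ under the time-$n$ posterior. Applying Lemma~\ref{lemma:b5} with $X=\pi_f$ and $Y=r^{(n+1)}$ (relabeled so that $Y=0$ encodes $r^{(n+1)}=1$) then gives
\[
\E^{(n)}[\entropy(p_{\pi_f}^{(n+1)})] = \entropy(p_{\pi_f}^{(n)}) - I,
\]
where $I = h(\bar q) - \sum_\pi p_{\pi_f}^{(n)}(\pi)\,h(q(\pi))$ is the mutual information $I(\pi_f; r^{(n+1)})$, with $q(\pi)=\prob^{(n)}(r^{(n+1)}=1\mid\pi_f=\pi)$ and $\bar q=\sum_\pi p_{\pi_f}^{(n)}(\pi)q(\pi)$. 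The lemma thus reduces to showing $I \geq \varphi(p_1^{(n)})$.

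Next I would coarsen $\pi_f$ to the single bit $B = \I\{f(x_1^{(n+1)}) > f(x_2^{(n+1)})\}$, a deterministic function of $\pi_f$ with $\prob^{(n)}(B=1)=p_1^{(n)}$; write $p_1 = p_1^{(n)}$ and $p_2 = 1-p_1$. Since $B$ is a function of $\pi_f$, grouping the permutations according to $B$ and applying Jensen's inequality to the concave function $h$ on each group replaces the subtracted sum $\sum_\pi p(\pi)h(q(\pi))$ by the no-smaller quantity $p_1 h(\alpha_1) + p_2 h(\beta)$, where $\alpha_1=\prob^{(n)}(r^{(n+1)}=1\mid B=1)$ and $\beta=\prob^{(n)}(r^{(n+1)}=1\mid B=0)$; equivalently this is the data-processing bound $I(\pi_f;r^{(n+1)}) \geq I(B;r^{(n+1)})$. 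Assumption~2 forces $\alpha_1 \geq a$ and $\beta \leq 1-a$, and one checks $\bar q = p_1\alpha_1 + p_2\beta$. It therefore suffices to prove
\[
g(\alpha_1,\beta) := h(p_1\alpha_1 + p_2\beta) - p_1 h(\alpha_1) - p_2 h(\beta) \geq \varphi(p_1)
\]
for all $\alpha_1\in[a,1]$ and $\beta\in[0,1-a]$.

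The main obstacle, and the only genuine computation, is this last minimization. I would argue it by monotonicity: with $h'(u)=\log\frac{1-u}{u}$ decreasing, one has $\partial g/\partial\alpha_1 = p_1\bigl(h'(\bar q) - h'(\alpha_1)\bigr)$ and $\partial g/\partial\beta = p_2\bigl(h'(\bar q) - h'(\beta)\bigr)$. Since $\bar q$ is a convex combination of $\alpha_1$ and $\beta$ with $\beta \leq 1-a < a \leq \alpha_1$, it is sandwiched as $\beta \leq \bar q \leq \alpha_1$, so $g$ is nondecreasing in $\alpha_1$ and nonincreasing in $\beta$. Its minimum over the rectangle is thus attained at the corner $(\alpha_1,\beta)=(a,1-a)$, where, using $h(1-a)=h(a)$, we get $g(a,1-a)=h(p_1 a + p_2(1-a)) - h(a) = \varphi(p_1)$. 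Combining with the reduction above yields $I(\pi_f; r^{(n+1)}) \geq I(B; r^{(n+1)}) \geq \varphi(p_1^{(n)})$, which is the claim. As an alternative to the calculus, one can observe that the symmetric channel with correctness probability $a$ is a stochastic degradation of the true response channel and invoke the data-processing inequality directly; the explicit degrading kernel is elementary to write down and is a valid stochastic matrix precisely because $a>1/2$.
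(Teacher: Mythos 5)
Your proposal is correct and follows essentially the same route as the paper's proof: the reduction via Lemma~\ref{lemma:b5} to a mutual-information lower bound, the coarsening of $\pi_f$ to the binary event $B=\I\{f(x_1^{(n+1)})>f(x_2^{(n+1)})\}$ via Jensen's inequality on $h$ (the paper's $q_1,q_2$ are your $\alpha_1,\beta$), the bounds $q_1\geq a$, $q_2\leq 1-a$ from Assumption~2, and the monotonicity argument locating the minimum at the corner $(a,1-a)$. Your write-up is in fact slightly more careful than the paper's at the corner-minimization step, where you make the sandwich $\beta \leq \bar q \leq \alpha_1$ and the sign of each partial derivative explicit.
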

\begin{proof}
Observe that $\E^{(n)}[\entropy(p^{(n+1)})] = \entropy(p^{(n)}\mid r^{(n+1)})$. Thus, from Lemma~\ref{lemma:b5} it follows that
\begin{equation*}
\E^{(n)}[\entropy(p^{(n+1)})] = \entropy(p^{(n)}) - \left[h\left(\sum_\pi p_{\pi_f}^{(n)}(\pi) q(\pi)\right) - \sum_\pi p_{\pi_f}^{(n)}(\pi) h(q(\pi))\right],   
\end{equation*}
where $q(\pi) = \prob^{(n)}(r^{(n+1)}=1 \mid \pi)$. Thus, it suffices to show that
\begin{equation*}
 h\left(\sum_\pi p_{\pi_f}^{(n)}(\pi) q(\pi)\right) - \sum_\pi p_{\pi_f}^{(n)}(\pi) h(q(\pi)) \geq \varphi(p_1^{(n)}).  
\end{equation*}

Let $\Pi_1$ and $\Pi_2$ be the set of permutations such that $f(x_1^{(n)}) < f(x_2^{(n)})$ and $f(x_1^{(n)}) > f(x_2^{(n)})$, respectively. Define
\begin{equation*}
q_i = \left(\sum_{\pi\in\Pi_i}  p_{\pi_f}^{(n)}(\pi) q(\pi)  \right)/p_i^{(n)}
\end{equation*}
for $i=1,2$. After some algebra we see that
\begin{align*}
h\left(\sum_\pi p_{\pi_f}^{(n)}(\pi) q(\pi)\right) - \sum_\pi p_{\pi_f}^{(n)}(\pi) h(q(\pi)) = h(p_1^{(n)} q_1 + p_2^{(n)} q_2) - p_1^{(n)}h(q_1) - p_2^{(n)}h(q_2) + p_1^{(n)}\psi_1 + p_2^{(n)}\psi_2,
\end{align*}
where
\begin{align*}
    \psi_i = h(q_i) \sum_{\pi\in\Pi_i}p_{\pi_f}^{(n)}(\pi)h(q(\pi))/p_1^{(n)}
\end{align*}
for $i=1,2$. Moreover, since $h$ is concave, $\psi_i \geq 0$ by Jensen's inequality. Thus,
\begin{align*}
h\left(\sum_\pi p_{\pi_f}^{(n)}(\pi) q(\pi)\right) - \sum_\pi p_{\pi_f}^{(n)}(\pi) h(q(\pi)) \geq h(p_1^{(n)} q_1 + p_2^{(n)} q_2) - p_1^{(n)}h(q_1) - p_2^{(n)}h(q_2) 
\end{align*}

Recall that $\prob(r(X)=\argmax_{i=1,2}f(x_i)\mid f(X))\geq a$ whenever $x_1 \neq x_2$ almost surely by Assumption 3. Also recall that $X^{(n+1)} = (x_1^{(n+1)}, x_2^{(n+1)}) \in \argmax_{X\in\X^2} \eubo_n(X)$. It is not hard to see that we can always choose $x_1^{(n+1)}$ and $x_2^{(n+1)}$ such that $x_1^{(n+1)}\neq x_2^{(n+1)}$. It follows from this that $q(\pi) \geq a$ for $\pi \in \Pi_1$ and $q(\pi) \leq 1 - a$ for $\pi \in \Pi_2$. From the definition of $q_i$ for $i=1,2$, this in turn implies that $q_1 \geq a$ and  $q_2 \leq 1 - a$.

Taking the derivative of  $h(p_1^{(n)} q_1 + p_2^{(n)} q_2) - p_1^{(n)}h(q_1) - p_2^{(n)}h(q_2)$ with respect to $q_1$ and recalling that the derivaitve of $h$ is decreasing since $h$ is concave, we can see that $h(p_1^{(n)} q_1 + p_2^{(n)} q_2) - p_1^{(n)}h(q_1) - p_2^{(n)}h(q_2)$ is minimal when $q_1 = a$ under the constraint $q_1 \geq a$. Similarly, we can see that $h(p_1^{(n)} q_1 + p_2^{(n)} q_2) - p_1^{(n)}h(q_1) - p_2^{(n)}h(q_2)$ is minimal when $q_2 = 1 - a$ under the constraint $q_2 \leq 1 - a$. Hence,
\begin{align*}
h\left(\sum_\pi p_{\pi_f}^{(n)}(\pi) q(\pi)\right) - \sum_\pi p_{\pi_f}^{(n)}(\pi) h(q(\pi)) &\geq    h(p_1^{(n)} a + p_2^{(n)} (1-a)) - p_1^{(n)}h(a) - p_2^{(n)}h(1-a) \\
&\varphi(p_1^{(n)}),
\end{align*}
where the last equation holds because $p_2^{(n)} = 1 - p_1^{(n)}$ and $h(a) = h(1-a)$. This concludes the proof.
\end{proof}

\begin{lemma}
\label{lemma:b7}
$\varphi(u) \geq 2(h(1/2) - h(a))u$ for $u\in[0,1/2]$.
\end{lemma}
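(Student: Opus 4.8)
The plan is to recognize that the desired inequality is nothing more than the statement that a concave function lies above the chord joining two of its points, so the entire argument reduces to (i) checking that $\varphi$ is concave and (ii) verifying that the right-hand side is exactly the relevant chord.

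First I would rewrite the argument of $h$ inside $\varphi$ in a more transparent form. Since $au + (1-a)(1-u) = (1-a) + (2a-1)u$, we have
\begin{equation*}
\varphi(u) = h\bigl((1-a) + (2a-1)u\bigr) - h(a).
\end{equation*}
The map $u \mapsto (1-a) + (2a-1)u$ is affine, and the binary entropy $h$ is concave on $[0,1]$ (indeed $h''(q) = -1/q - 1/(1-q) < 0$). A concave function composed with an affine map is concave, and subtracting the constant $h(a)$ preserves concavity; hence $\varphi$ is concave on $[0,1/2]$.

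Next I would evaluate $\varphi$ at the two endpoints of the interval. At $u=0$ the argument of $h$ equals $1-a$, so using the symmetry $h(1-a)=h(a)$ we get $\varphi(0) = 0$. At $u=1/2$ the argument equals $(1-a) + (2a-1)/2 = 1/2$, so $\varphi(1/2) = h(1/2) - h(a)$. The linear function $\ell(u) = 2\bigl(h(1/2)-h(a)\bigr)u$ on the right-hand side therefore satisfies $\ell(0) = 0 = \varphi(0)$ and $\ell(1/2) = h(1/2)-h(a) = \varphi(1/2)$; that is, $\ell$ is precisely the chord of $\varphi$ over the interval $[0,1/2]$.

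Finally, because $\varphi$ is concave, it lies above any of its chords, so $\varphi(u) \ge \ell(u)$ for all $u \in [0,1/2]$, which is exactly the claim. I do not anticipate a genuine obstacle here; the only delicate point is the endpoint bookkeeping (exploiting $h(1-a)=h(a)$ and checking that the affine argument collapses to $1/2$ at $u=1/2$), after which concavity does all the work.
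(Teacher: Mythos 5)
Your proposal is correct and is essentially the paper's own argument: the paper also notes that $\varphi$ is concave and applies Jensen's inequality to the convex combination $u = (1-2u)\cdot 0 + (2u)\cdot\tfrac{1}{2}$, which is precisely your chord inequality with endpoints $\varphi(0)=0$ and $\varphi(1/2)=h(1/2)-h(a)$. Your write-up merely makes explicit two details the paper leaves implicit (the concavity check via composition of $h$ with an affine map, and the endpoint evaluations using $h(1-a)=h(a)$), which is a fine level of rigor but not a different route.
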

\begin{proof}
Note that $\varphi$ is concave in $[0,1]$. Applying Jensen's inequality we obtain
\begin{align*}
    \varphi((1-2u)0 + (2u)(1/2) ) \geq (1-2u)\varphi(0) + 2u\varphi(1/2);
\end{align*}
i.e., 
\begin{align*}
    \varphi(u) \geq 2(h(1/2) - h(a))u.
\end{align*}
\end{proof}

\begin{lemma}
\label{lemma:b8}
Let $R^{(n)} = 2(h(1/2) - h(a))p_{**}^{(n)}C$. Then,
\begin{equation*}
\E^{(n)}[\entropy(p^{(n+1)})]  \leq \entropy(p^{(n)}) -  R^{(n)}.
\end{equation*}
\end{lemma}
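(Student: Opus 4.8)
The plan is to chain together Lemmas~\ref{lemma:b6}, \ref{lemma:b7}, and \ref{lemma:b4}. Lemma~\ref{lemma:b6} already delivers the one-step entropy decrease expressed through $\varphi$, namely $\E^{(n)}[\entropy(p^{(n+1)})] \le \entropy(p^{(n)}) - \varphi(p_1^{(n)})$, so it suffices to prove the lower bound $\varphi(p_1^{(n)}) \ge R^{(n)} = 2(h(1/2) - h(a))\, p_{**}^{(n)} C$ and substitute it.

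First I would record the symmetry $\varphi(u) = \varphi(1-u)$ for all $u\in[0,1]$. This follows from the definition $\varphi(u) = h(au + (1-a)(1-u)) - h(a)$ together with $h(p)=h(1-p)$: a one-line computation gives $a(1-u) + (1-a)u = 1 - (au + (1-a)(1-u))$, so the argument of $h$ appearing in $\varphi(1-u)$ is the complement of the one in $\varphi(u)$ and $\varphi$ is left unchanged. Since Assumption~1 forbids ties, $p_1^{(n)} + p_2^{(n)} = 1$, hence $p_2^{(n)} = 1 - p_1^{(n)}$ and $\varphi(p_1^{(n)}) = \varphi(p_2^{(n)})$. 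Writing $m = \min\{p_1^{(n)}, p_2^{(n)}\}$, we therefore have $\varphi(p_1^{(n)}) = \varphi(m)$ with $m \in [0,1/2]$.

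Now the hypothesis of Lemma~\ref{lemma:b7} is met at $m$, giving $\varphi(m) \ge 2(h(1/2) - h(a))\, m$. Lemma~\ref{lemma:b4} bounds both $p_1^{(n)}$ and $p_2^{(n)}$ below by $C p_{**}^{(n)}$, so in particular $m \ge C p_{**}^{(n)}$. Combining these yields $\varphi(p_1^{(n)}) = \varphi(m) \ge 2(h(1/2) - h(a))\, C p_{**}^{(n)} = R^{(n)}$, and plugging this into the estimate of Lemma~\ref{lemma:b6} finishes the argument.

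There is no serious obstacle here; the only delicate point is that $p_1^{(n)}$ may exceed $1/2$, which would fall outside the interval $[0,1/2]$ on which Lemma~\ref{lemma:b7} is stated. The symmetry $\varphi(u) = \varphi(1-u)$ is precisely what lets me pass to the smaller of the two probabilities, so that Lemma~\ref{lemma:b7} and the lower bound of Lemma~\ref{lemma:b4} can be applied simultaneously regardless of which of $p_1^{(n)}, p_2^{(n)}$ is larger.
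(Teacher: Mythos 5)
Your proof is correct and follows essentially the same route as the paper's: chain Lemma~\ref{lemma:b6} with the lower bound $\varphi(p_1^{(n)}) \geq 2(h(1/2)-h(a))\,C p_{**}^{(n)}$ obtained from the symmetry of $\varphi$ about $1/2$ together with Lemmas~\ref{lemma:b4} and~\ref{lemma:b7}. The only (harmless) difference is that the paper passes from $\min\{p_1^{(n)},p_2^{(n)}\}$ down to $Cp_{**}^{(n)}$ using monotonicity of $\varphi$ on $[0,1/2]$ before invoking Lemma~\ref{lemma:b7}, whereas you apply Lemma~\ref{lemma:b7} at $\min\{p_1^{(n)},p_2^{(n)}\}$ and use nonnegativity of the slope $2(h(1/2)-h(a))$, which sidesteps the monotonicity claim entirely.
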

\begin{proof}
From Lemma~\ref{lemma:b4} we know that $p_1^{(n)}, p_2^{(n)} \geq  p_{**}^{(n)}C$. Since $p_1^{(n)} + p_2^{(n)} = 1$, it follows that
\begin{equation*}
    0 \leq p_{**}^{(n)}C \leq \min\{p_1^{(n)}, p_2^{(n)}\} \leq 1/2.
\end{equation*}
Now observe that the function $\varphi$ is increasing in $[0,1/2]$ and symmetric around 1/2. Thus,
\begin{equation*}
 \varphi(p_{**}^{(n)}C) \leq  \varphi(\min\{p_1^{(n)}, p_2^{(n)}\}) = \varphi(p_1^{(n)}).
\end{equation*}

Finally,
\begin{align*}
    \E^{(n)}[\entropy(p^{(n+1)})]  &\leq   \entropy(p^{(n)}) -\varphi(p_1^{(n)})\\
    &\leq \entropy(p^{(n)}) -\varphi(p_{**}^{(n)}C)\\
    &\leq \entropy(p^{(n)}) -2(h(1/2) - h(a)) p_{**}^{(n)}C,
\end{align*}
where the first line comes from Lemma~\ref{lemma:b6}, the second line comes from the above analysis, and the third line is a consequence of Lemma~\ref{lemma:b7}.
\end{proof}

We are now in position to prove Theorem~\ref{thm:eubo_conv}.
\begin{theorem}[Theorem 3]
Suppose that Assumptions 1-3 are satisfied and $X^{(n+1)}\in \argmax_{X\in\X^q}\eubo^{(n)}(X)$ for all $n$. Then, $\E[f(x^*) - f(x_*^{(n)})] = o(1/n)$.
\end{theorem}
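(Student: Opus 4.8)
The plan is to convert the per-step entropy decrease of Lemma~\ref{lemma:b8} into summability of the expected simple regret, and then to sharpen summability into the $o(1/n)$ rate by means of a monotonicity argument. First, I would telescope Lemma~\ref{lemma:b8}. Taking total expectations there and writing $\kappa = 2(h(1/2)-h(a))C$, which is strictly positive since $a>1/2$ forces $h(a)<h(1/2)$ and $C>0$, one gets $\E[\entropy(p^{(n+1)})] \le \E[\entropy(p^{(n)})] - \kappa\,\E[p_{**}^{(n)}]$. Summing over $n=0,\ldots,N-1$ and using $\entropy(p^{(N)})\ge 0$ yields $\kappa\sum_{n=0}^{N-1}\E[p_{**}^{(n)}] \le \entropy(p^{(0)}) \le \log(|\X|!) < \infty$, the last bound holding because $p^{(0)}$ is a distribution on the finitely many permutations of $\X$. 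Letting $N\to\infty$ gives $\sum_{n=0}^{\infty}\E[p_{**}^{(n)}] < \infty$.

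Next, I would control the expected simple regret $r_n := \E[f(x^*)-f(x_*^{(n)})]$ by $\E[p_{**}^{(n)}]$. Since $x^*$ attains $\max_x f(x)$, we have $f(x^*)-f(x_*^{(n)}) = \max_{x\in\X}\{f(x)-f(x_*^{(n)})\}^+ \le \sum_{x\in\X}\{f(x)-f(x_*^{(n)})\}^+$. Applying $\E^{(n)}$, then the upper bound used in the proof of Lemma~\ref{lemma:b4} (namely $\E^{(n)}[\{f(x)-f(y)\}^+]\le \Delta\,\prob^{(n)}(f(x)>f(y))$, which is legitimate since $x_*^{(n)}$ is $\D^{(n)}$-measurable), and finally $\prob^{(n)}(f(x)>f(x_*^{(n)}))\le \prob^{(n)}(x_*^{(n)}\ne x^*)=p_{**}^{(n)}$ on each of the at most $|\X|-1$ nonzero terms (as that event forces $x_*^{(n)}$ not to maximize $f$), gives $\E^{(n)}[f(x^*)-f(x_*^{(n)})]\le \Delta(|\X|-1)\,p_{**}^{(n)}$. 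Taking total expectations and combining with the previous step shows $\sum_n r_n < \infty$.

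Then I would establish that $r_n$ is nonincreasing, which summability alone does not provide. Because $x_*^{(n)}$ is $\D^{(n)}$-measurable, $\E[f(x_*^{(n)})]=\E[\max_x \E^{(n)}[f(x)]]$; and by the tower property $\E^{(n)}[f(x)]=\E[\E^{(n+1)}[f(x)]\mid\D^{(n)}]$, so convexity of the maximum (conditional Jensen) yields $\max_x\E^{(n)}[f(x)]\le \E[\max_x\E^{(n+1)}[f(x)]\mid\D^{(n)}]$. Taking expectations shows $n\mapsto\E[\max_x\E^{(n)}[f(x)]]$ is nondecreasing, hence $r_n=\E[f(x^*)]-\E[\max_x\E^{(n)}[f(x)]]$ is nonincreasing. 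Finally I would invoke the elementary fact that a nonnegative, nonincreasing, summable sequence $(r_n)$ obeys $n\,r_n\to 0$: as $r_k\ge r_{2n}$ for $k\le 2n$, we have $n\,r_{2n}\le\sum_{k=n+1}^{2n}r_k$, a tail of a convergent series, which vanishes; the analogous bound at odd indices gives $n\,r_n\to 0$, i.e.\ $r_n=o(1/n)$, as claimed.

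The step I expect to be the main obstacle is this final upgrade. Lemmas~\ref{lemma:b3}--\ref{lemma:b8} only control $\sum_n\E[p_{**}^{(n)}]$, and summability by itself does not preclude $r_n$ from spiking along a sparse set of indices, so it delivers mere consistency $r_n\to 0$ rather than a rate. The monotonicity argument is what rules out such spikes and turns summability into the sharp $o(1/n)$ bound; identifying that $\E[\max_x\E^{(n)}[f(x)]]$ is nondecreasing (a submartingale-type property of the posterior-mean optimum) is the key ingredient that makes the rate, rather than just convergence, attainable.
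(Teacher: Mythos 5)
Your proof is correct, and it departs from the paper's at exactly the two places worth flagging. For summability, the paper wraps Lemma~\ref{lemma:b8} into the non-negative supermartingale $Z^{(n)} = \entropy(p^{(n)}) + \sum_{m=0}^{n-1} R^{(m)}$ and invokes Doob's convergence theorem to get $\sum_{n}\E[p_{**}^{(n)}] < \infty$; your direct telescoping of the expected entropy against the a priori bound $\entropy(p^{(0)}) \leq \log(|\X|!)$ reaches the same conclusion more elementarily, and even yields the explicit bound $\sum_n \E[p_{**}^{(n)}] \leq \log(|\X|!)/\kappa$. The more substantive difference is the final upgrade: the paper deduces $\prob(x_*^{(n)} \neq x^*) = o(1/n)$ directly from summability, which as written is a gap --- a summable nonnegative sequence need not be $o(1/n)$ without monotonicity (e.g.\ a sequence equal to $1/n$ at perfect squares and $2^{-n}$ elsewhere is summable), and the misidentification probability $\prob(x_*^{(n)} \neq x^*)$ is not obviously monotone. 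Your repair is the right one, and you correctly apply it to the quantity that \emph{is} monotone: by the tower property and conditional Jensen, $\E[\max_{x\in\X}\E^{(n)}[f(x)]]$ is nondecreasing, so the Bayesian simple regret $r_n$ is nonincreasing, and then $n\,r_{2n} \leq \sum_{k=n+1}^{2n} r_k \to 0$ gives $r_n = o(1/n)$. A further minor merit: your chain $\E^{(n)}[f(x^*) - f(x_*^{(n)})] \leq \sum_{x\in\X}\E^{(n)}[\{f(x)-f(x_*^{(n)})\}^+] \leq \Delta(|\X|-1)\,p_{**}^{(n)}$ applies Assumption~3 only to pairs $(x, x_*^{(n)})$ with $x$ fixed and $x_*^{(n)}$ being $\D^{(n)}$-measurable, sidestepping the measurability wrinkle in the paper's final display, which invokes the $\Delta$-bound with the random, non-$\D^{(n)}$-measurable argument $x^*$; the extra factor $|\X|-1$ is harmless since it is constant. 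In short: same skeleton (entropy decrease $\Rightarrow$ summability $\Rightarrow$ rate), but your route replaces the martingale machinery with telescoping and, more importantly, supplies the monotonicity argument the paper's $o(1/n)$ step tacitly requires.
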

\begin{proof}
Consider the stochastic processes $\{Z^{(n)}\}_{n=0}^\infty$ defined by
\begin{equation*}
Z^{(n)} = \entropy(p^{(n)}) + \sum_{m=0}^{n-1}R^{(m)}, \ n\geq 0.  
\end{equation*}
Observe that $Z^{(n)}$ is non-negative for all $n$. Moreover,
\begin{align*}
    \E^{(n)}[Z^{(n+1)}] &=  \E^{(n)}[\entropy(p^{(n+1)})] + \sum_{m=0}^nR^{(n)}\\
    &\leq \entropy(p^{(n)}) - R^{(n)} + \sum_{m=0}^{n}R^{(m)}\\
    &= \entropy(p^{(n)}) + \sum_{m=0}^{n-1}R^{(m)}\\
    &= Z^{(n)},
\end{align*}
where the inequality above follows from Lemma~\ref{lemma:b8}. Thus, $\{Z^{(n)}\}_{n=0}^\infty$ is a non-negative supermartingale. By Doob's martingale convergence theorem, $\{Z^{(n)}\}_{n=0}^\infty$  converges almost surely to a random variable with finite expectation. This in turn implies that $\sum_{n=0}^\infty\E[ R^{(n)}] < \infty$. Since $R^{(n)} = 2(h(1/2) - h(a))p_{**}^{(n)}C$, it follows that $\sum_{n=0}^\infty\E[ p_{**}^{(n)}] <\infty$. Recall that $p_{**}^{(n)} =  \prob^{(n)}(x_*^{(n)}\neq x_*)$. By the law of the iterated expectation we obtain $\E[ p_{**}^{(n)}] = \prob(x_*^{(n)}\neq x_*)$. Hence, we have shown that $\sum_{n=0}^\infty \prob(x_*^{(n)}\neq x_*) < \infty$. We deduce from this that $\prob(x_*^{(n)}\neq x_*) = o(1/n)$.

Finally,
\begin{align*}
    \E[f(x_*) - f(x_*^{(n)})] &=  \E[\{f(x_*) - f(x_*^{(n)})\}^+]\\
    &\leq \Delta \prob(f(x_*) > f(x_*^{(n)}))\\
     &= \Delta \prob(x_*^{(n)}\neq x_*)\\
    &= o(1/n),
\end{align*}
where the first and third lines hold by definition of $x_*$, and the second line follows from Assumption 2.
\end{proof}


\subsection{Proof of Theorem 4}

\begin{theorem}[Theorem 4]
There exists a problem instance (i.e., $\X$ and Bayesian prior distribution over $f$) satisfying Assumptions 1-3 such that if $X^{(n+1)}\in \argmax_{X\in\X^q}\qei^{(n)}(X)$ for all $n$, then $\E[f(x^*) - f(\widehat{x}_*^{(n)})] \geq R$ for all $n$, for a constant $R > 0$.
\end{theorem}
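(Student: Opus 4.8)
The plan is to exhibit a concrete finite instance embodying the mechanism described in Section~\ref{sec:qEI}: an incumbent whose value is known exactly and exceeds every competitor's posterior mean, so that qEI refuses to ever query it and hence never learns whether any competitor actually beats it.

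\textbf{The construction.} I would take $\X = \{x_0, x_1, x_2, x_3\}$ with the likelihood of Lemma~\ref{lemma:b2} (the DM picks the better item with constant probability $a > 1/2$ whenever the utilities differ). Fix constants $c > c'$ and $\eta > 0$, and let the prior set $f(x_0) = c$ and $f(x_3) = c'$ deterministically, with $f(x_i) = \mu + \epsilon_i$ for $i=1,2$, where $\mu$ is a latent ``level'' independent of $(\epsilon_1,\epsilon_2)$, the $\epsilon_i$ are i.i.d.\ continuous on $[-\eta,\eta]$, $\E[\mu] + \eta < c$, and $\prob(\mu > c + \eta) > 0$. The key feature is that $x_1$ and $x_2$ share the \emph{same} unknown level $\mu$, so a comparison between them reveals only the sign of $\epsilon_1 - \epsilon_2$ and nothing about $\mu$, whereas whether $x_1$ or $x_2$ beats the incumbent is governed entirely by $\mu$. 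I would seed the data with the single, deliberately uninformative query $(x_0,x_3)$: since $f(x_0) > f(x_3)$ with probability one, its response has constant likelihood and leaves the posterior equal to the prior, while placing $x_0$ in the queried set so that $I_n = c$.

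\textbf{Verifying Assumptions 1--3.} Continuity of $\mu$ or the $\epsilon_i$ gives Assumption~1; the constant-$a$ likelihood gives Assumption~2; and Assumption~3 then follows directly from Lemma~\ref{lemma:b2}. In particular the instance also satisfies the hypotheses of Theorem~\ref{thm:eubo_conv}, so qEUBO \emph{would} converge here --- exactly the desired contrast.

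\textbf{qEI freezes the posterior on $\mu$.} With $I_n = c$, every query containing a known point ($x_0$ or $x_3$) or a repeated point reduces to $\E^{(n)}[(f(x_i) - c)^+]$ (or $0$), whereas $\qei^{(n)}(x_1,x_2) = \E^{(n)}[(\max\{f(x_1),f(x_2)\} - c)^+]$. Since $\max\{f(x_1),f(x_2)\} \ge f(x_i)$, the latter dominates all others, and the gap is strictly positive at every step because $\{f(x_2) > \max\{c, f(x_1)\}\}$ has positive posterior probability: its defining events $\{\epsilon_2 > \epsilon_1\}$ and $\{\mu + \epsilon_2 > c\}$ each retain positive posterior mass, the former because the constant-$a$ likelihood never yields certainty and the latter because $\mu$ keeps its prior law. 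Hence qEI queries $(x_1,x_2)$ at every step. Consequently the entire data stream is a function of $(\epsilon_1,\epsilon_2)$ and exogenous response noise, all independent of $\mu$; therefore $\mu$ stays independent of the data, its posterior equals its prior for every $n$, and $\E^{(n)}[f(x_i)] \le \E[\mu] + \eta < c$, so $\widehat{x}_*^{(n)} = x_0$ for every $n$ and every realization.

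\textbf{Conclusion and main obstacle.} Because $f(\widehat{x}_*^{(n)}) = c$ always, the Bayesian simple regret equals the constant $\E[f(x^*) - c] = \E[(\max\{f(x_1),f(x_2)\} - c)^+] =: R$, which is strictly positive since $\{\mu > c + \eta\}$ has positive probability and forces both $f(x_i) > c$; thus $\E[f(x^*) - f(\widehat{x}_*^{(n)})] = R > 0$ for all $n$. The step requiring the most care is establishing that the posterior on $\mu$ never moves, which rests jointly on (i) the shared-level design making $x_1$-vs-$x_2$ comparisons independent of $\mu$, (ii) the uninformative seed $(x_0,x_3)$ that introduces $x_0$ without coupling $\mu$ to the $\epsilon_i$ (a naive seed comparing $x_0$ with $x_1$ would correlate them and break the argument), and (iii) the strict qEI preference for $(x_1,x_2)$, which must be verified to hold uniformly in $n$ so that qEI never escapes by querying the incumbent.
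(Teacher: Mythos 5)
Your proposal is correct and takes essentially the same route as the paper's proof: the paper's instance is precisely your construction with $c=0$, $c'=-1$, a two-valued shared level $\mu\in\{3/4,-3/4\}$ independent of a random sign giving $(\epsilon_1,\epsilon_2)=\pm(1/4,-1/4)$, the same deliberately uninformative seed comparison between the two known points, and the same lock-in argument that qEI perpetually queries the uncertain pair whose internal comparisons carry no information about whether either point beats the incumbent --- the only difference being that the paper verifies the frozen ``level'' posterior by explicit induction on the four atoms $p_i^{(n)}$ (showing the ratios $p_1^{(n)}/p_3^{(n)}$ and $p_2^{(n)}/p_4^{(n)}$ are preserved), where you argue via independence of $\mu$ from the data stream. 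Both arguments yield a constant Bayesian simple regret ($p$ in the paper, $\E[\{\max_i f(x_i)-c\}^+]$ in yours), and your continuous $\epsilon_i$ variant is, if anything, slightly cleaner with respect to Assumption~1, since the paper's discrete instance allows ties such as $f_4(3)=f_4(1)=-1$ with positive probability.
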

\begin{proof}
Let $\X = \{1, 2, 3, 4\}$ and consider the functions $f_i:\X\rightarrow\R$, for $i=1,2,3,4$, given by $f_i(1) = -1$ and $f_i(2) = 0$ for all $i$, and
\begin{align*}
    f_1(x) = \begin{cases}
    1, &\ x=3\\
    \frac{1}{2}, &\ x=4
    \end{cases},
\hspace{0.5cm}
f_2(x) = \begin{cases}
    \frac{1}{2}, &\ x=3\\
    1, &\ x=4
    \end{cases},
\hspace{0.5cm}
f_3(x) = \begin{cases}
    -\frac{1}{2}, &\ x=3\\
    -1, &\ x=4
    \end{cases},
\hspace{0.5cm}
f_4(x) = \begin{cases}
    -1, &\ x=3\\
    -\frac{1}{2}, &\ x=4
    \end{cases}.
\end{align*}

Let $p$ be a number with $0 < p < 1/3$ and set $q=1-p$. We consider a prior distribution on $f$ with support $\{f_i\}_{i=1}^4$ such that
\begin{align*}
 p_i = \prob(f=f_i) =   \begin{cases}
                        p/2, \  i =1,2,\\
                        q/2, \ i=3,4.
                        \end{cases}
\end{align*}
We also assume the DM's response likelihood is given by $\prob(r(X)=1\mid f(x_1) > f(x_2)) = a$ for some $a$ such that $1/2 < a < 1$,

Let $\D^{(n)}$ denote the set of observations up to time $n$ and let $p_i^{(n)} = \prob(f=f_i \mid \D^{(n)})$ for $i=1,2,3,4$. We let the initial data set be $\D^{(0)} = \{(X^{(0)}, r^{(0)})\}$, where $X^{(0)}= (1,2)$. We will prove that the following statements are true for all $n\geq 0$.
\begin{enumerate}
    \item $p_i^{(n)} > 0$ for $i=1,2,3,4$.
    \item $p_1^{(n)} < \frac{1}{2}p_3^{(n)}$ and $p_2^{(n)} < \frac{1}{2}p_4^{(n)}$.
    \item $\argmax_{x\in\X}\E^{(n)}[f(x)]=\{2\}$.
    \item $\argmax_{X\in\X^2}\qei^{(n)}(X) = \{(3, 4)\}$.
\end{enumerate}
We prove this by induction over $n$. We begin by proving this for $n=0$. Since $f_i(1) < f_i(2)$ for all $i$, the posterior distribution on $f$ given $\D^{(0)}$ remains the same as the prior; i.e., $p_i^{(0)} = p_i$ for $i=1,2,3,4$. Using this, statements 1 and 2 can be easily verified. Now note that $\E^{(0)}[f(1)]=-1$, $\E^{(0)}[f(2)]=0$, and $\E^{(0)}[f(3)] = \E^{(0)}[f(4)] = \frac{3}{2}(p - q)$. Since $p < q$, it follows that $\argmax_{x\in\X}\E^{(n)}[f(x)]=\{2\}$; i.e., statement 3 holds. Finally, since $\max_{x\in\{1,2\}}\E^{(0)}[f(x)] = 0$, the qEI acquisition function at time $n=0$ is given by $\qei^{(0)}(X) = \E^{(0)}[\{\max\{f(x_1), f(x_2)\}\}^+]$. A direct calculation can now be performed to verify that statement 4 holds. This completes the base case.

Now suppose statements 1-4 hold for some $n\geq 0$. Since $X^{(n+1)} = (3, 4)$, the posterior distribution on $f$ given $\D^{(n+1)}$ is given by
\begin{align*}
p_i^{(n+1)} \propto \begin{cases}
                        p_i^{(n)}\ell, \ i=1,3,\\
                         p_i^{(n)} (1 - \ell), \ i=2,4,
                        \end{cases}
\end{align*}
where
\begin{equation*}
\ell = a \I\{r^{(n+1)} = 1\} + (1-a)\I\{r^{(n+1)} = 2\}.    
\end{equation*}
Observe that $0< \ell < 1$ since $0 < a < 1$. Thus, $\ell > 0$ and $1-\ell > 0$. Since $p_i^{(n)} > 0$ by the induction hypothesis, it follows from this that $p_i^{(n+1)} > 0$ for $i=1,2,3,4$. Moreover, since $p_i^{(n+1)} \propto p_i^{(n)}\ell$ for $i=1,3$ and $p_1^{(n)} < \frac{1}{2}p_3^{(n)}$ by the induction hypothesis, it follows that $p_1^{(n+1)} < \frac{1}{2}p_3^{(n+1)}$. Similarly, $p_2^{(n+1)} < \frac{1}{2}p_4^{(n+1)}$. Thus, statements 1 and 2 hold at time $n+1$.

Now observe that
\begin{align*}
    \E^{(n+1)}[f(3)] &= p_1^{(n+1)} + \frac{1}{2}p_2^{(n+1)} - \frac{1}{2}p_3^{(n+1)} - p_4^{(n+1)}\\
    &= \left(p_1^{(n+1)} - \frac{1}{2}p_3^{(n+1)}\right) + \left(\frac{1}{2}p_2^{(n+1)} - p_4^{(n+1)}\right)\\
    &\leq \left(p_1^{(n+1)} - \frac{1}{2}p_3^{(n+1)}\right) + \left(p_2^{(n+1)} - \frac{1}{2}p_4^{(n+1)}\right)\\
    &\leq 0,
\end{align*}
where the last inequality holds since $p_1^{(n+1)} < \frac{1}{2}p_3^{(n+1)}$ and $p_2^{(n+1)} < \frac{1}{2}p_4^{(n+1)}$. Similarly, we see that $\E^{(n+1)}[f(4)] \leq 0$. Since $\E^{(n+1)}[f(1)]=-1$ and $\E^{(n+1)}[f(2)]=0$, it follows that $\argmax_{x\in\X}\E^{(n+1)}[f(x)]=\{2\}$; i.e., statement 3 holds at time $n+1$.

Since $\max_{x\in\X}\E^{(0)}[f(x)] = 0$, the qEI acquisition function at time $n+1$ is given by $\qei^{(n+1)}(X) = \E^{(n+1)}[\{\max\{f(x_1), f(x_2)\}\}^+]$. Since $f(1) \leq f(x)$ almost surely under the prior for all $x\in\X$, there is always a maximizer of qEI that does not contain $1$.  Thus, to find the maximizer of qEI, it suffices to analyse its value at the pairs $(2, 3)$, $(3,4)$ and $(4,2)$. We have
\begin{equation*}
\qei^{(n+1)}(2, 3) = p_1^{(n+1)} + 1/2 p_2^{(n+1)},    
\end{equation*}
\begin{equation*}
\qei^{(n+1)}(3, 4) = p_1^{(n+1)} + p_2^{(n+1)}  
\end{equation*}
and
\begin{equation*}
\qei^{(n+1)}(4, 2) = 1/2p_1^{(n+1)} + p_2^{(n+1)}.   
\end{equation*}
Since $p_1^{(n+1)} > 0$ and $p_2^{(n+1)} > 0$, it follows that $\argmax_{X\in\X^2}\qei^{(n+1)}(X) = \{(3, 4)\}$, which concludes the proof by induction.

Finally, since $\argmax_{x\in\X}\E^{(n)}[f(x)]=\{2\}$ for all $n$, the Bayesian simple regret of qEI is given by
\begin{align*}
    \E\left[f(x^*) - f(2)\right] &= \sum_{i=1}p_i\left(\max_{x\in\X}f_i(x) - f_i(2)\right)\\
    &= p
\end{align*}
for all $n$.
\end{proof}

\section{ADDITIONAL EMPIRICAL RESULTS}
\label{sec:addit}
Besides the experiments we present in the main paper, we additionally present experimental results with $q=4$ vs $q=6$ and investigate different acquisition functions' performance under various noise levels for $q=2$ case.

\subsection{Results for $q=4$ vs. $q=6$}
Figure~\ref{fig:q3_results} shows the experiment results for the best performing three acquisition (qTS, qEI, and qEUBO) functions with $q=4$ and $q=6$ over 150 queries.
As discussed in the paper, $q=6$ offers little improvement over $q=4$ for all acquisition functions.
\begin{figure*}[ht]
\centering
\begin{tabular}[b]{c}%
\includegraphics[width=0.317\textwidth]{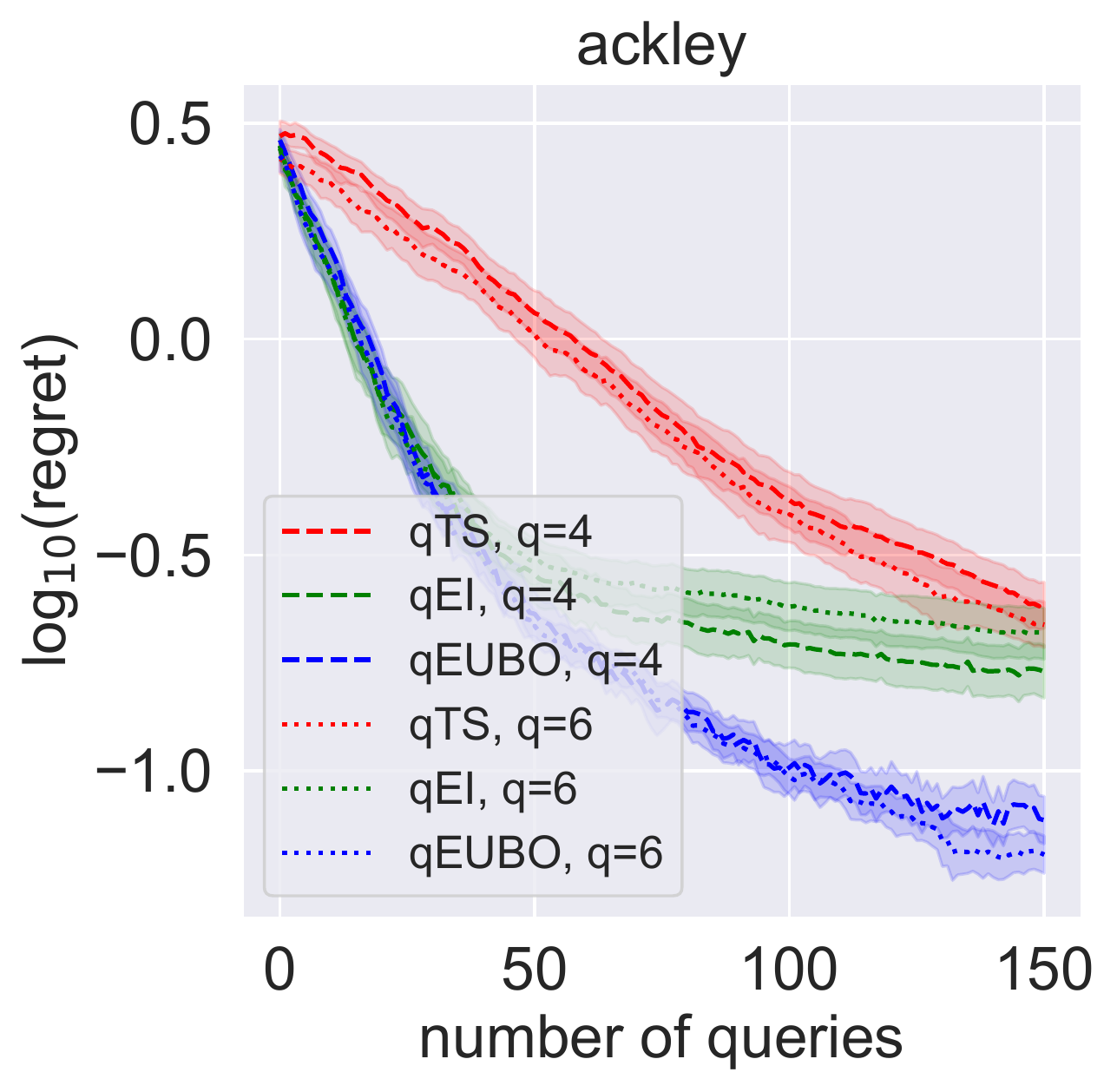}
  \includegraphics[width=0.32\textwidth]{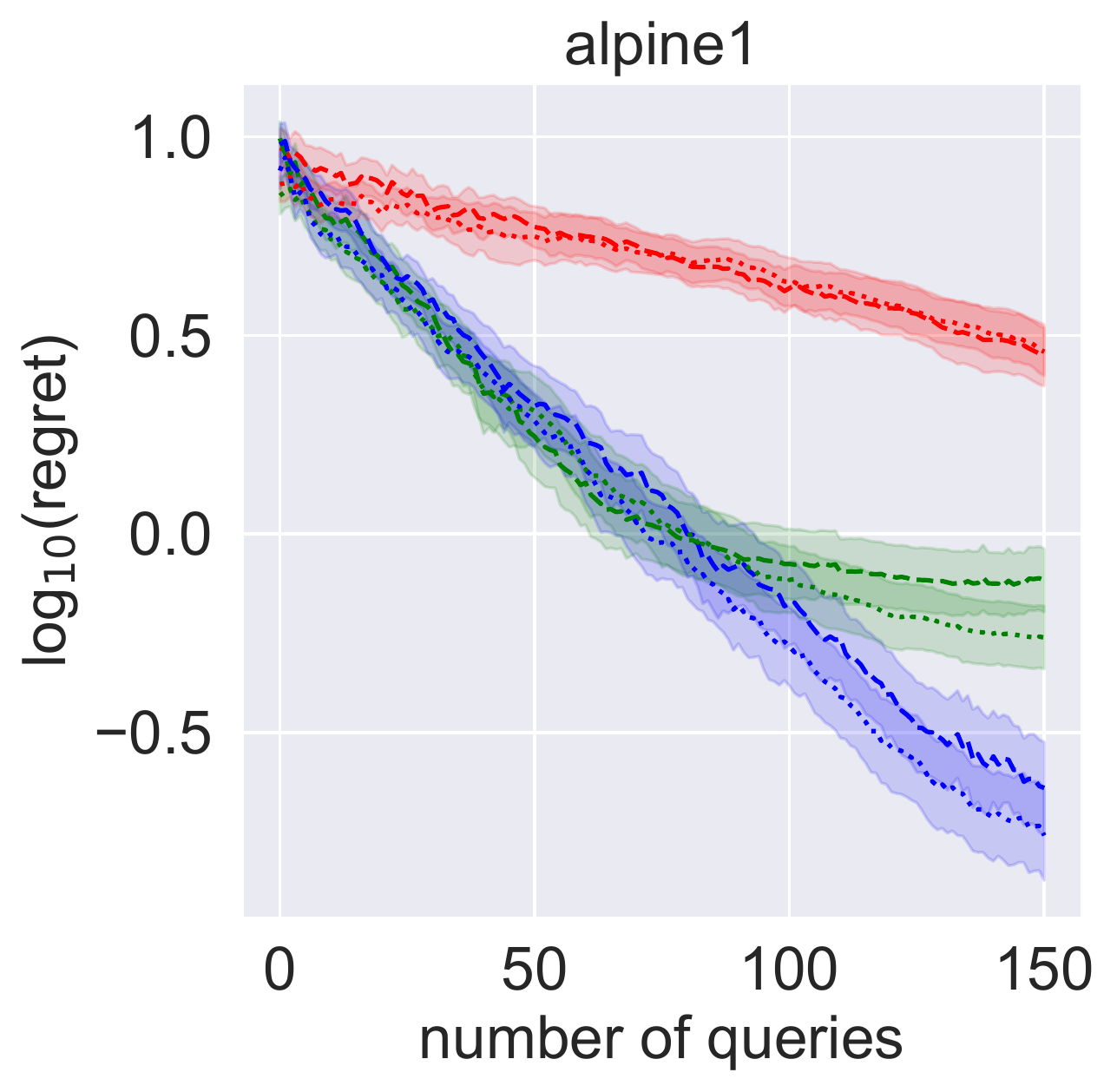}
  \includegraphics[width=0.32\textwidth]{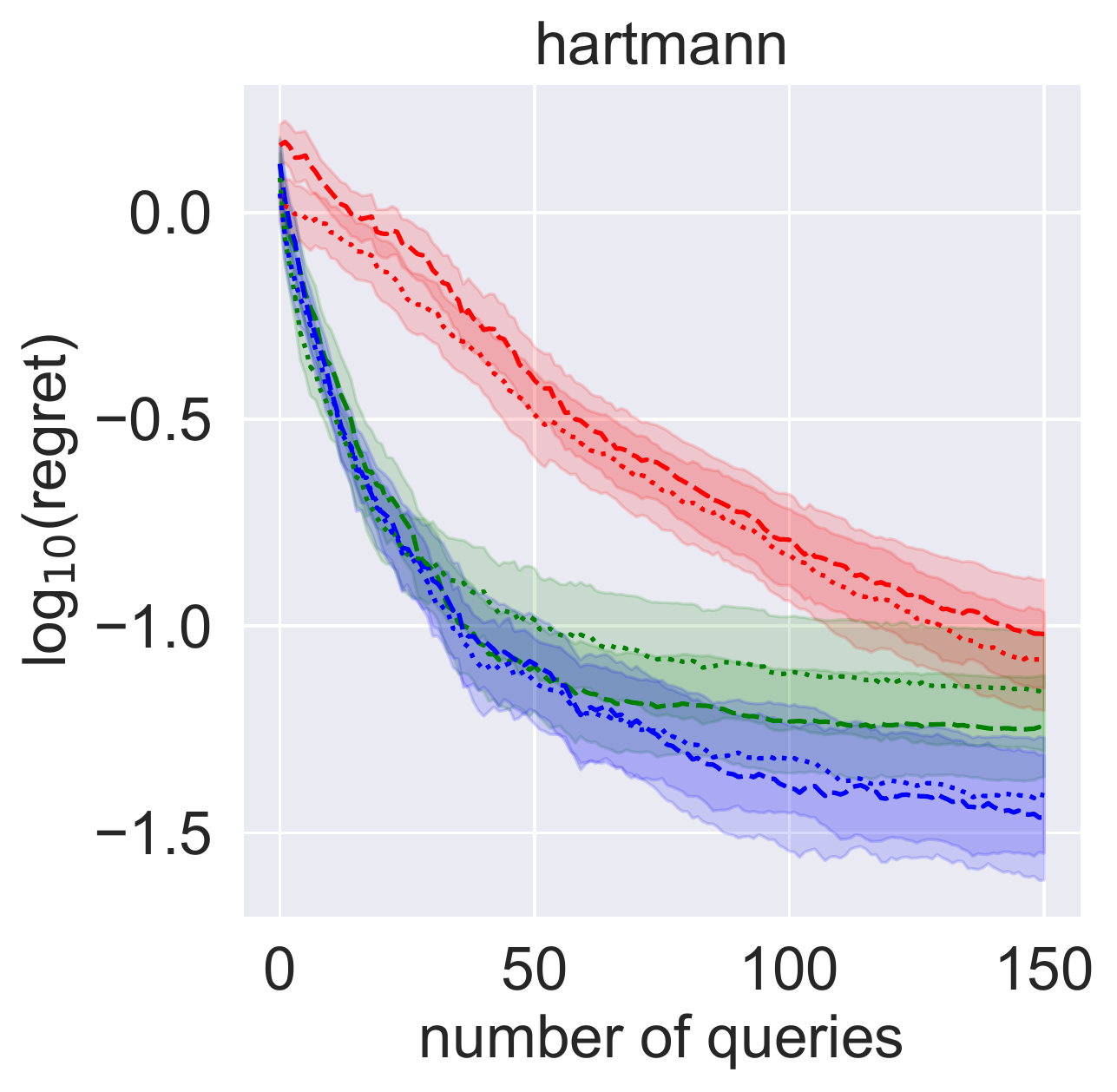}\\
  \includegraphics[width=0.317\textwidth]{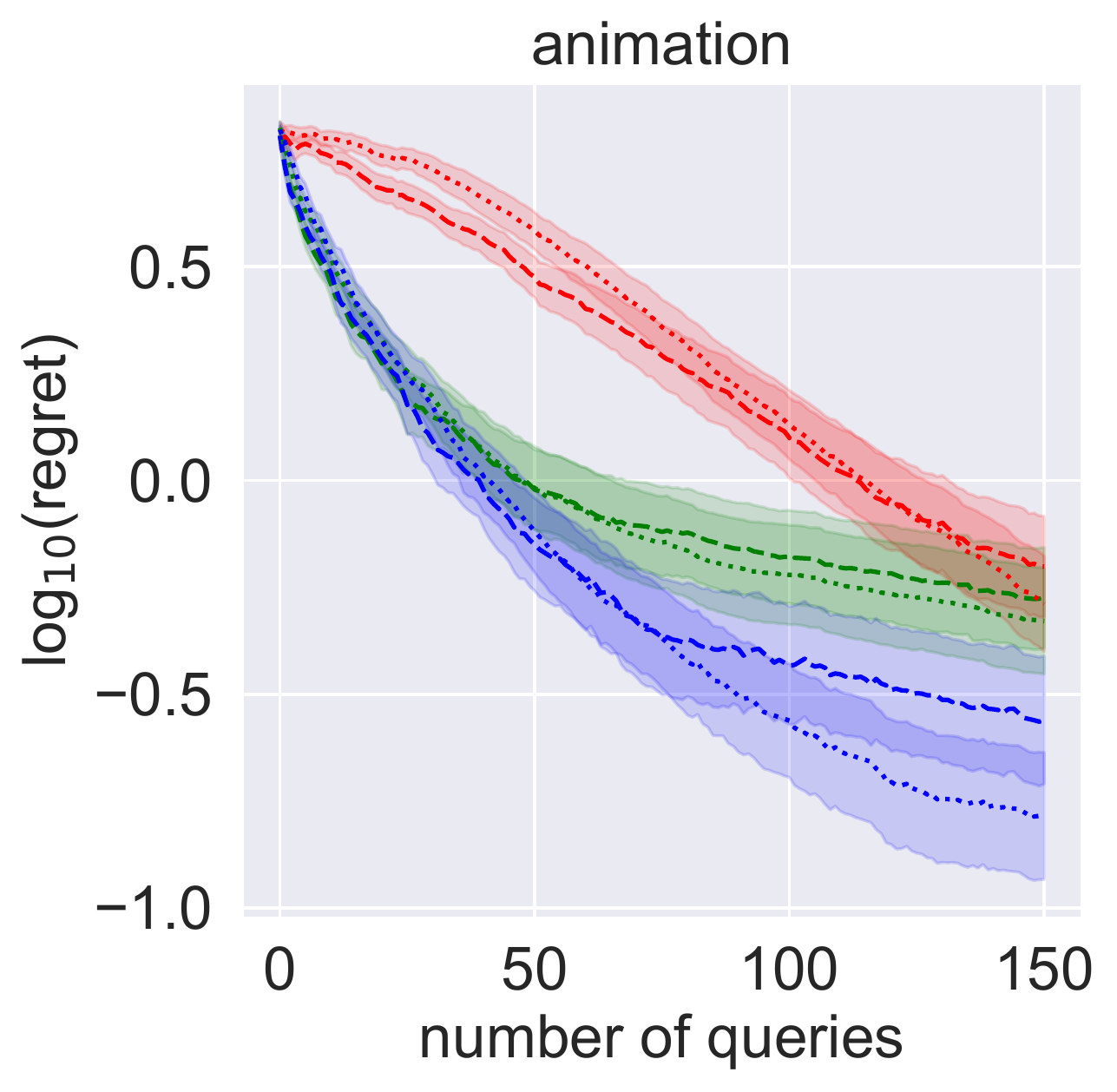}
\includegraphics[width=0.32\textwidth]{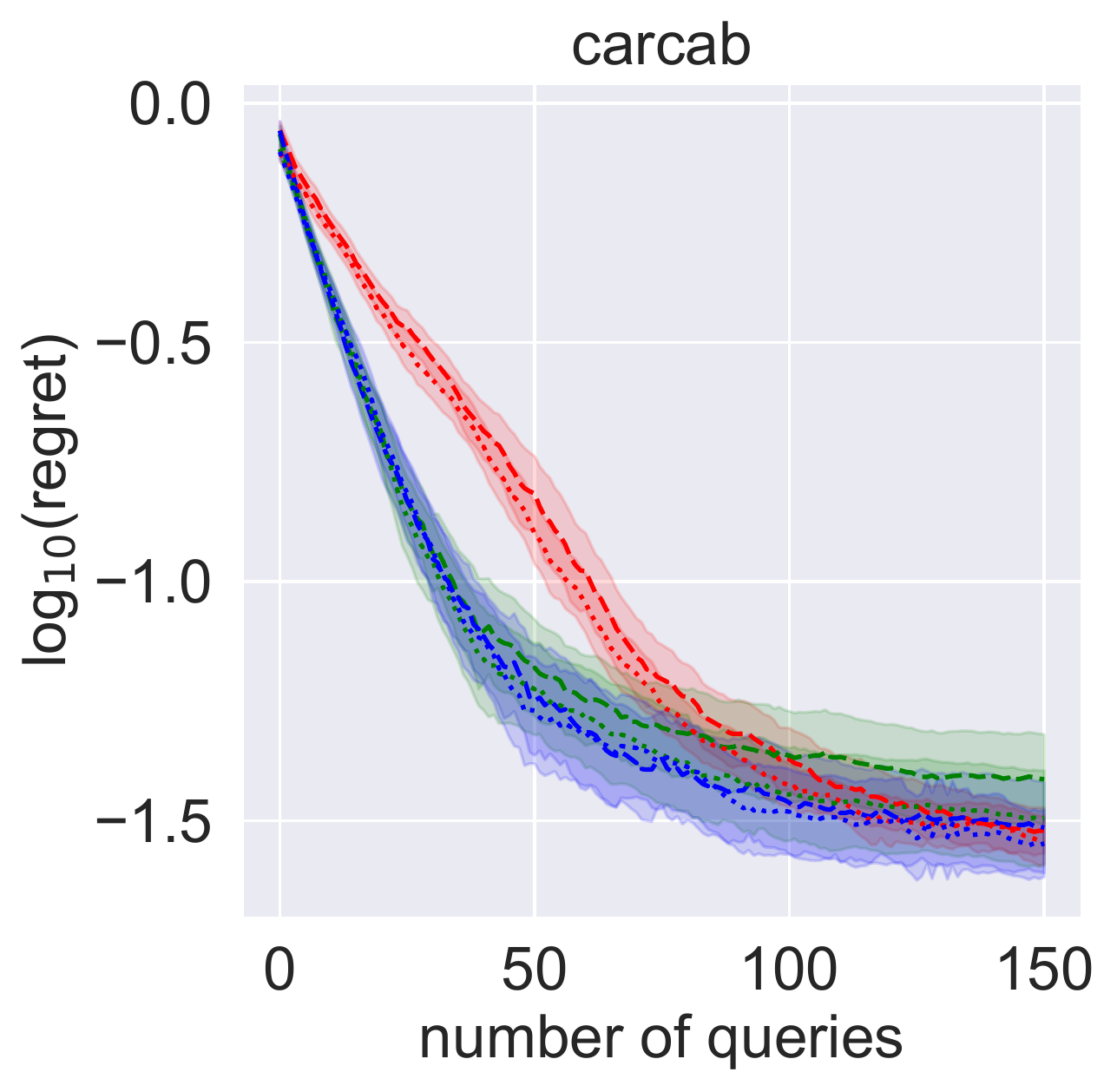}
  \includegraphics[width=0.32\textwidth]{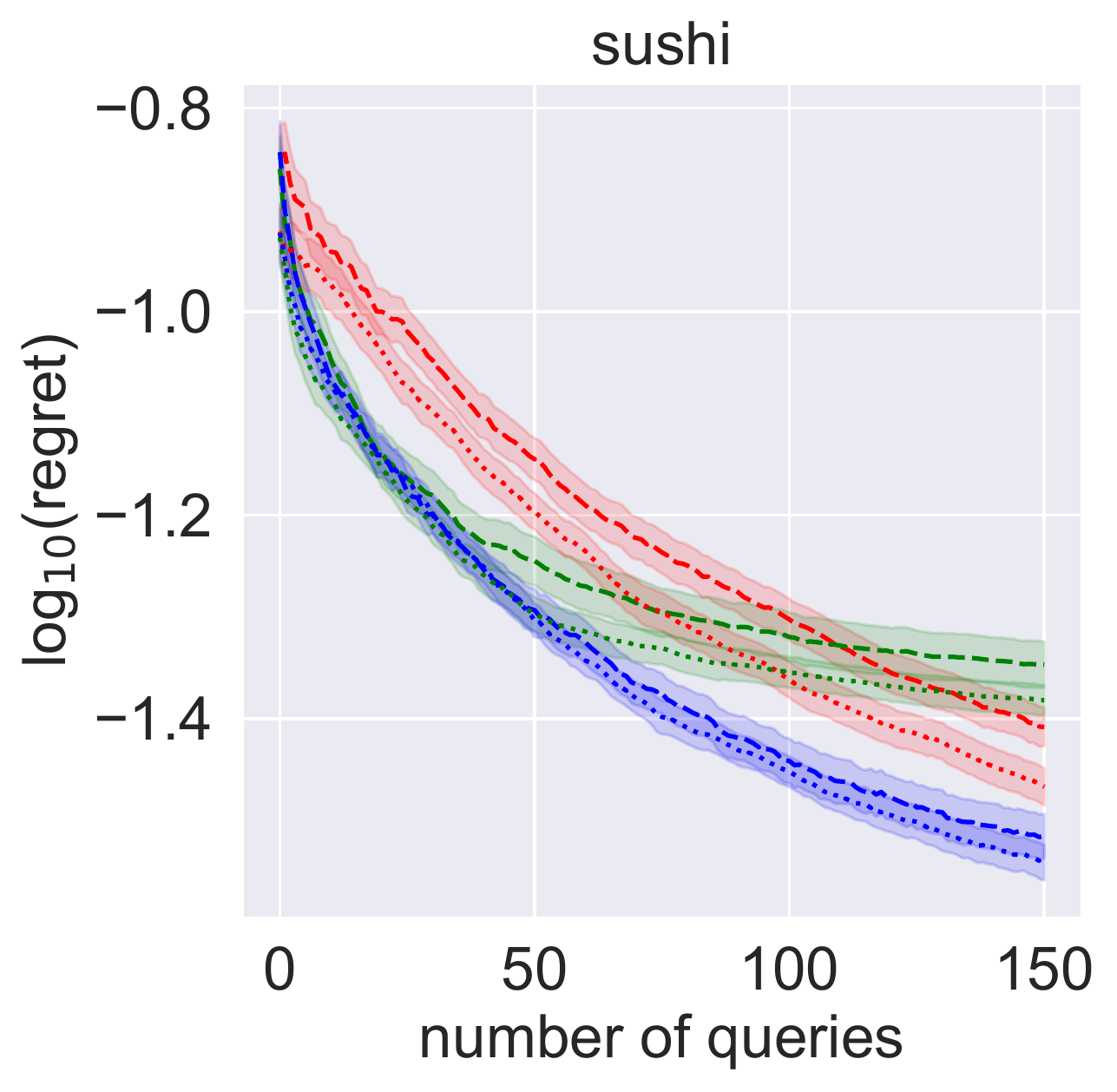}
 \end{tabular}
 \caption{log10(optimum value - objective value at the maximizer of the posterior mean) using moderate logistic noise and $q=4$ and $q=6$ alternatives per DM query.
 \label{fig:q3_results}}
\end{figure*}

\newpage
\subsection{Experiments with Varying Levels of Noise}
Figure~\ref{fig:noise_results} shows the results of three test problems (Ackley, Animation, and Sushi) by injecting varying levels of comparison noise.
Left, center, and right columns show the results for low, middle, and high noise levels respectively.
The noise levels are chosen so that the DM makes a comparison mistake 10, 20, and 30\% of the time on average when asked to compare random pairs of points among those with the top 1\% function values within the optimization domain $\X$ under a logistic likelihood.
Those noise levels are estimated over a large grid of random points in $\X$.

For experiments with lower noise, the performance gap between $\eubo$ and other baseline methods is most pronounced. This is consistent with Theorem~\ref{thm:2}, which shows that qEUBO is a better approximation of the one-step Bayes optimal policy for lower noise levels.
With increasing noise, the performance of all methods decreases. However, we consistently observe superior performance of $\eubo$ compared to other baseline methods. These results are averaged over 50 replications for Ackley, and 100 replications for Animation and Sushi.
\begin{figure*}[ht]
\centering
\begin{tabular}[b]{c}%
\includegraphics[width=0.32\textwidth]{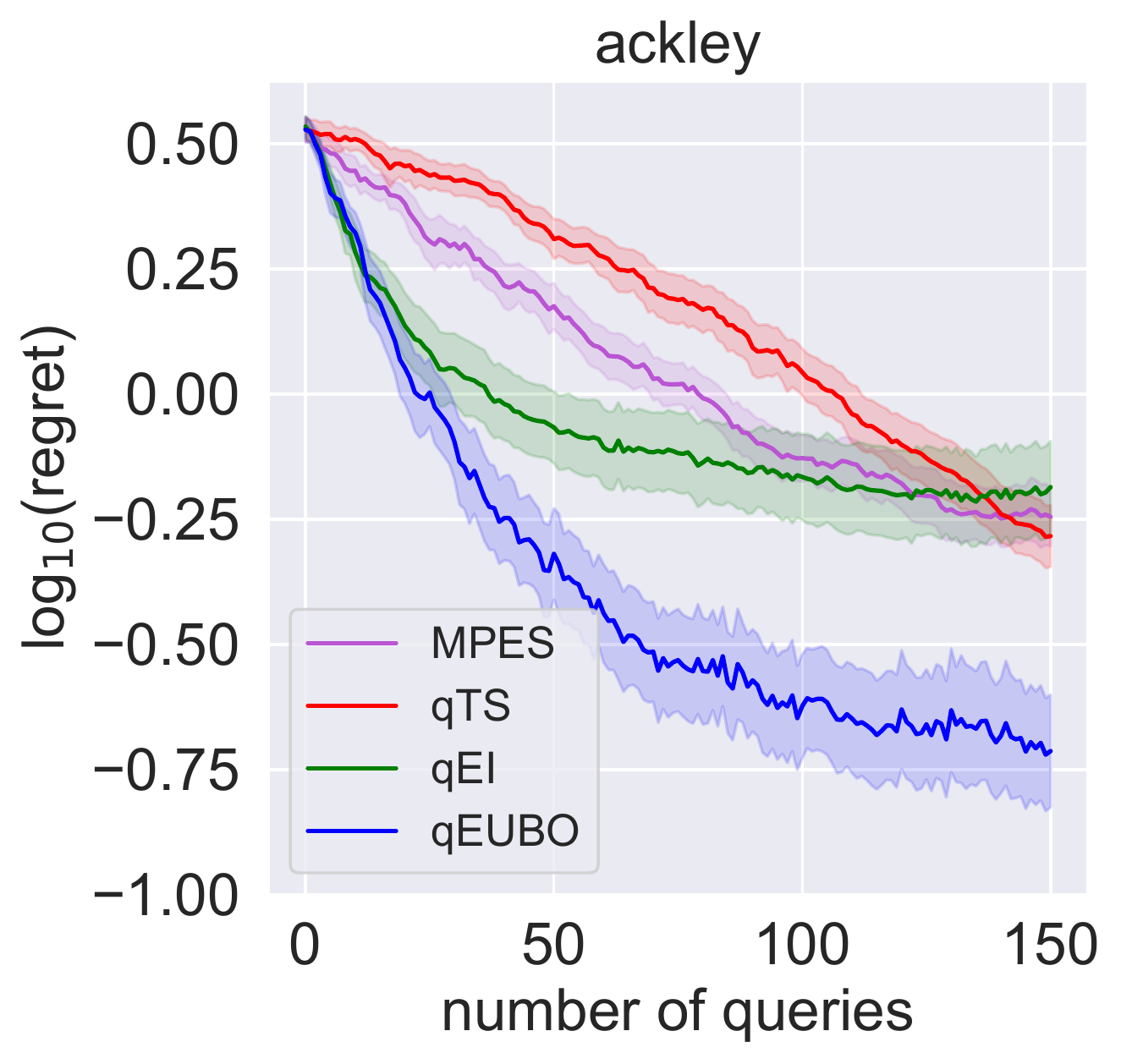}
  \includegraphics[width=0.32\textwidth]{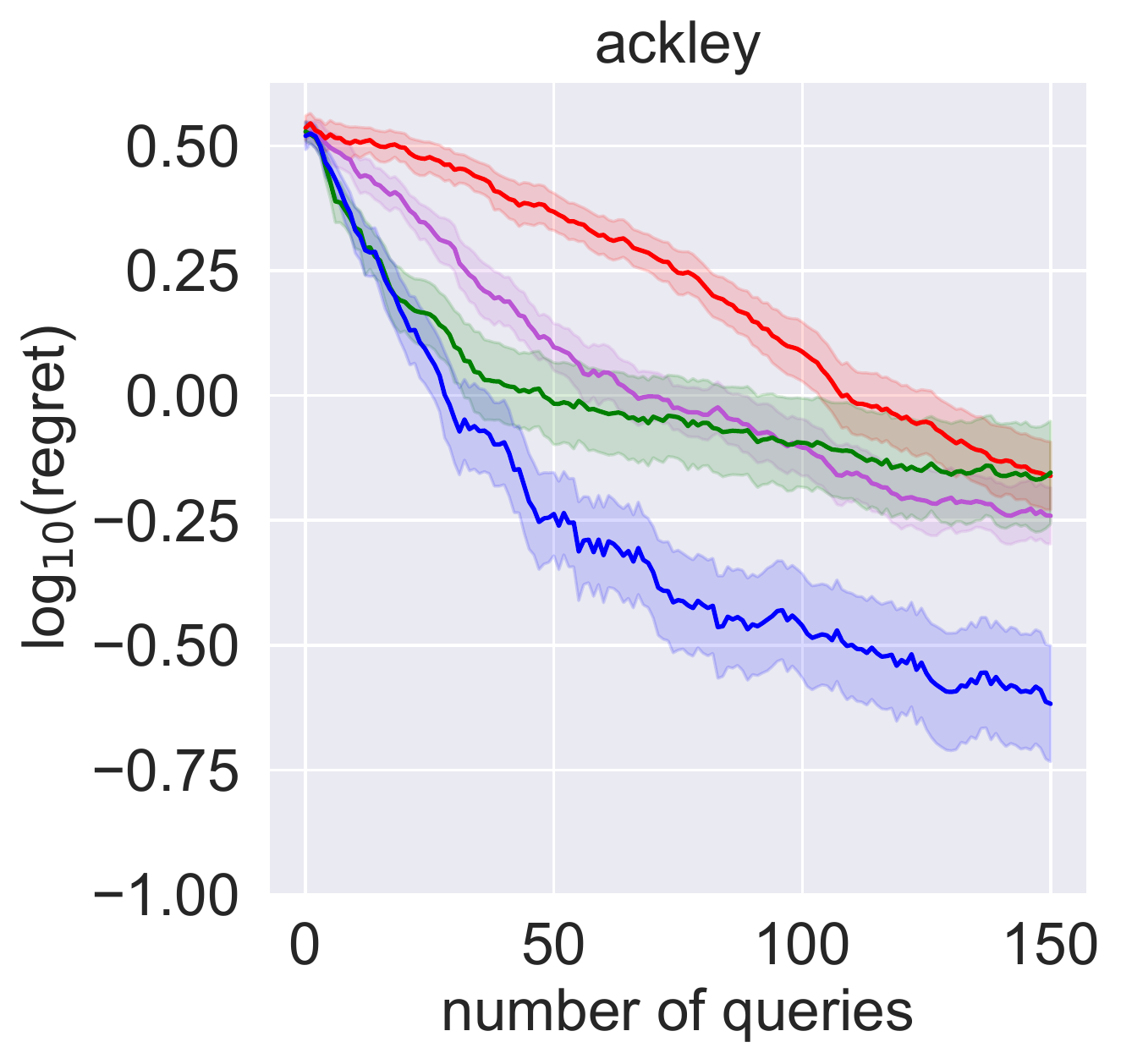}
   \includegraphics[width=0.32\textwidth]{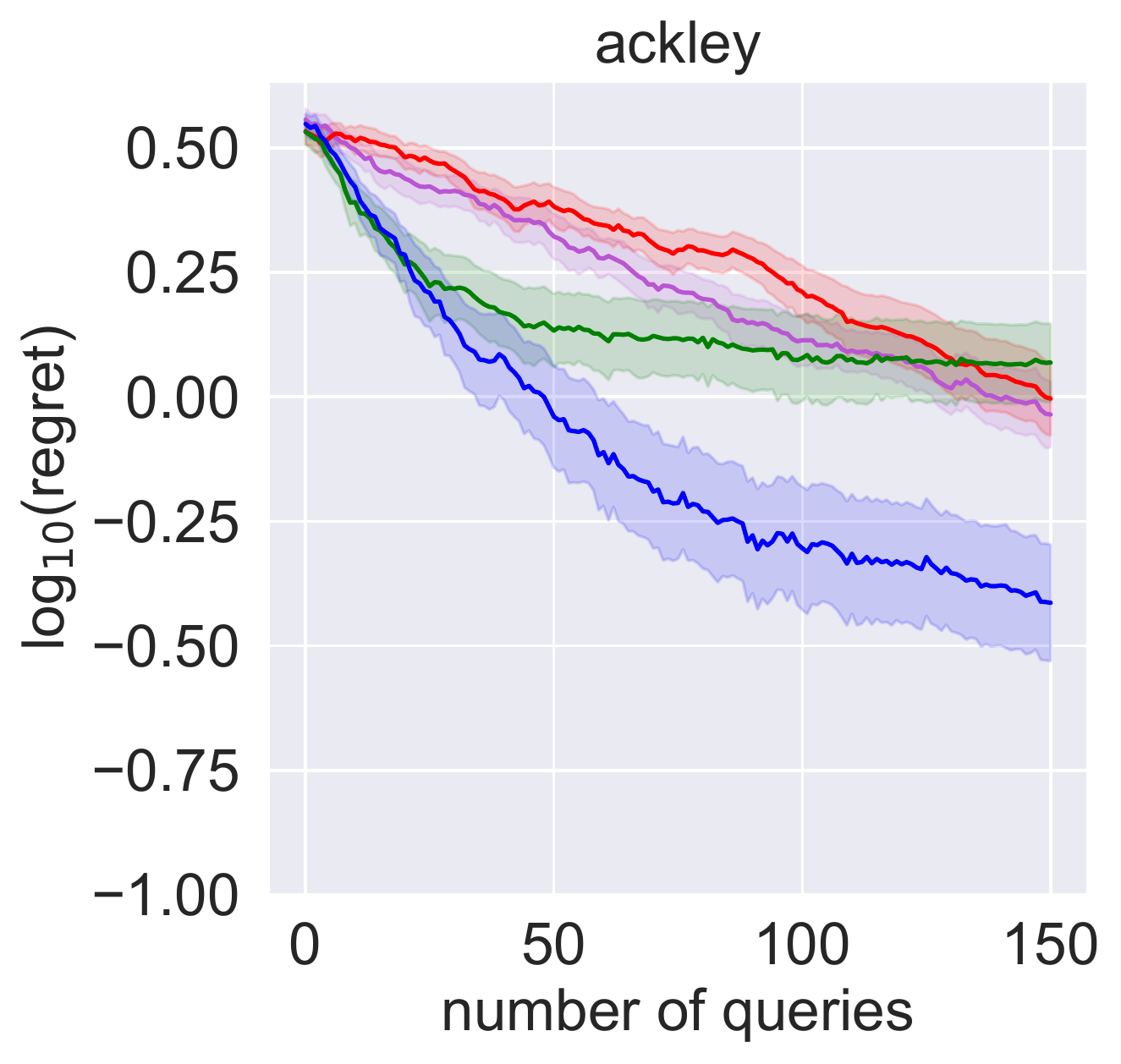}\\
\includegraphics[width=0.32\textwidth]{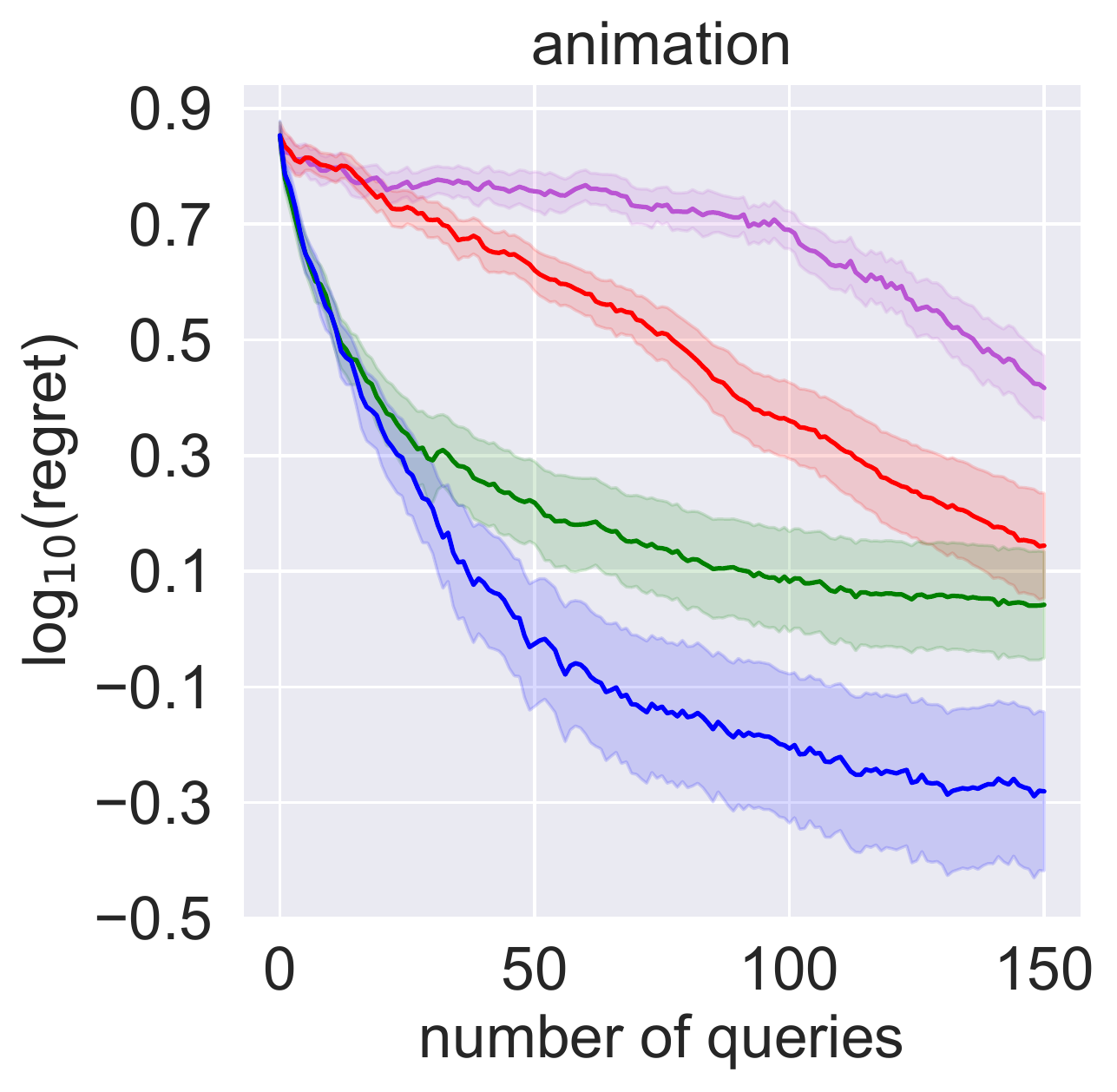}
  \includegraphics[width=0.32\textwidth]{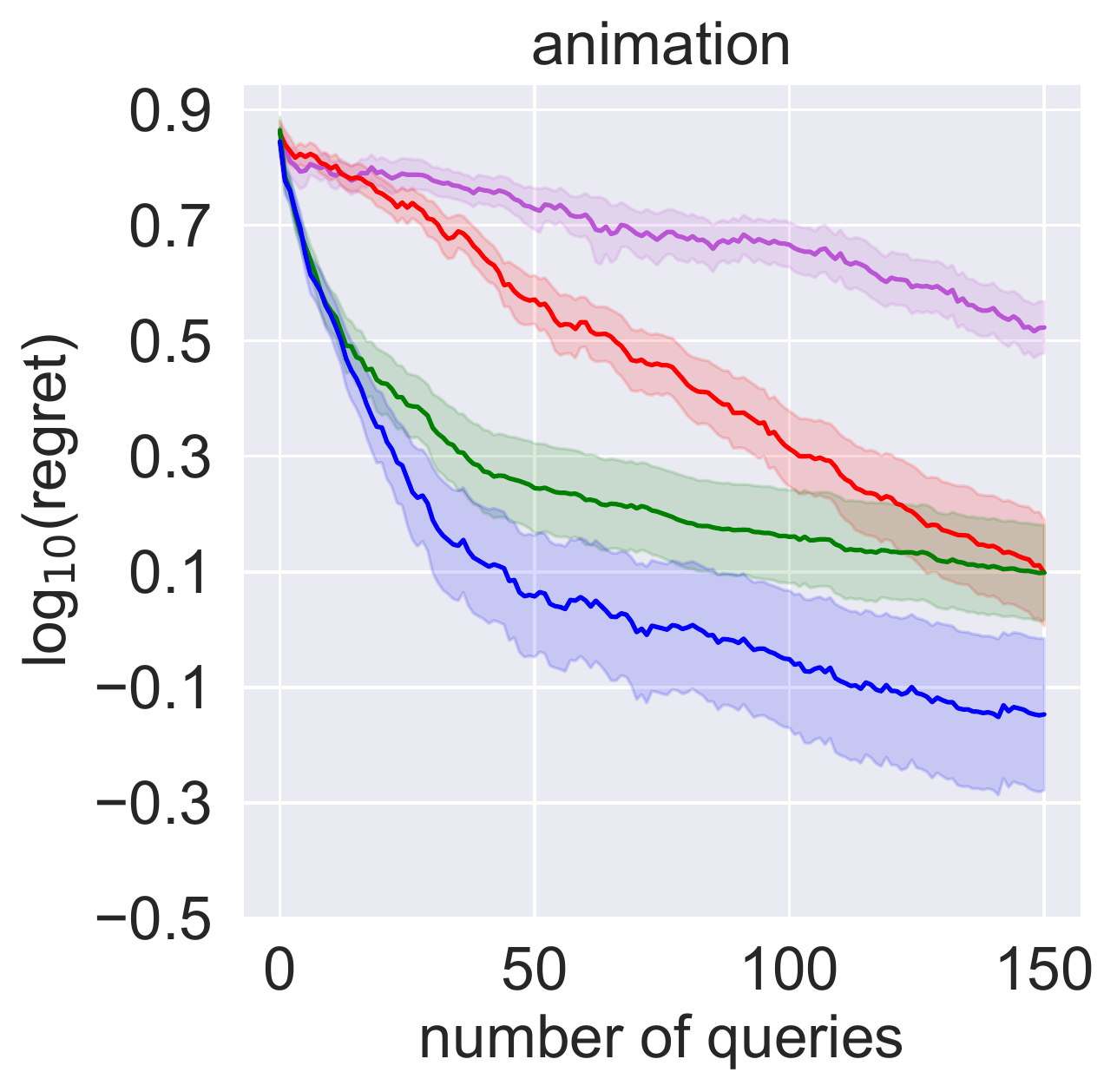}
   \includegraphics[width=0.32\textwidth]{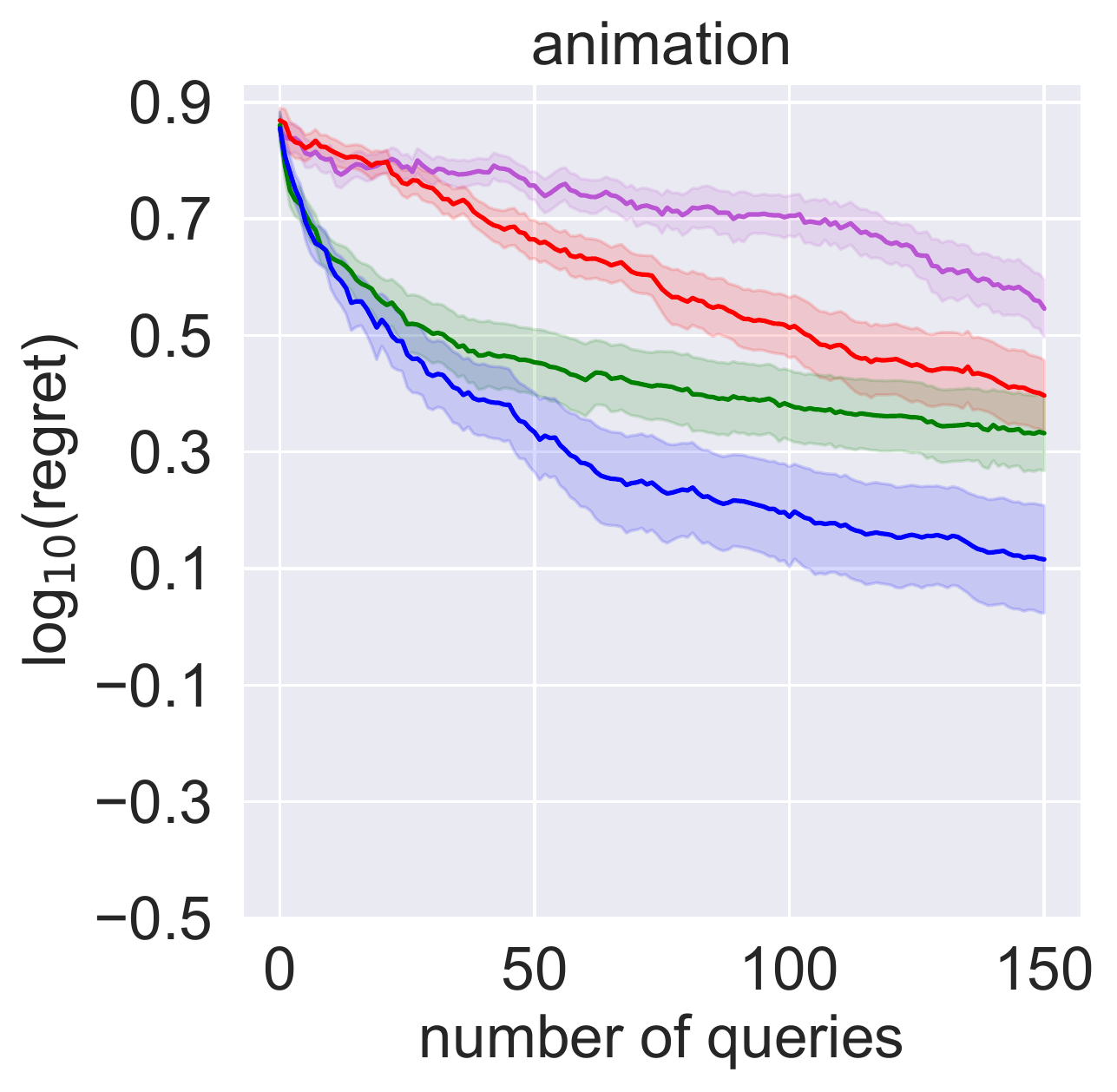}
\\
\includegraphics[width=0.32\textwidth]{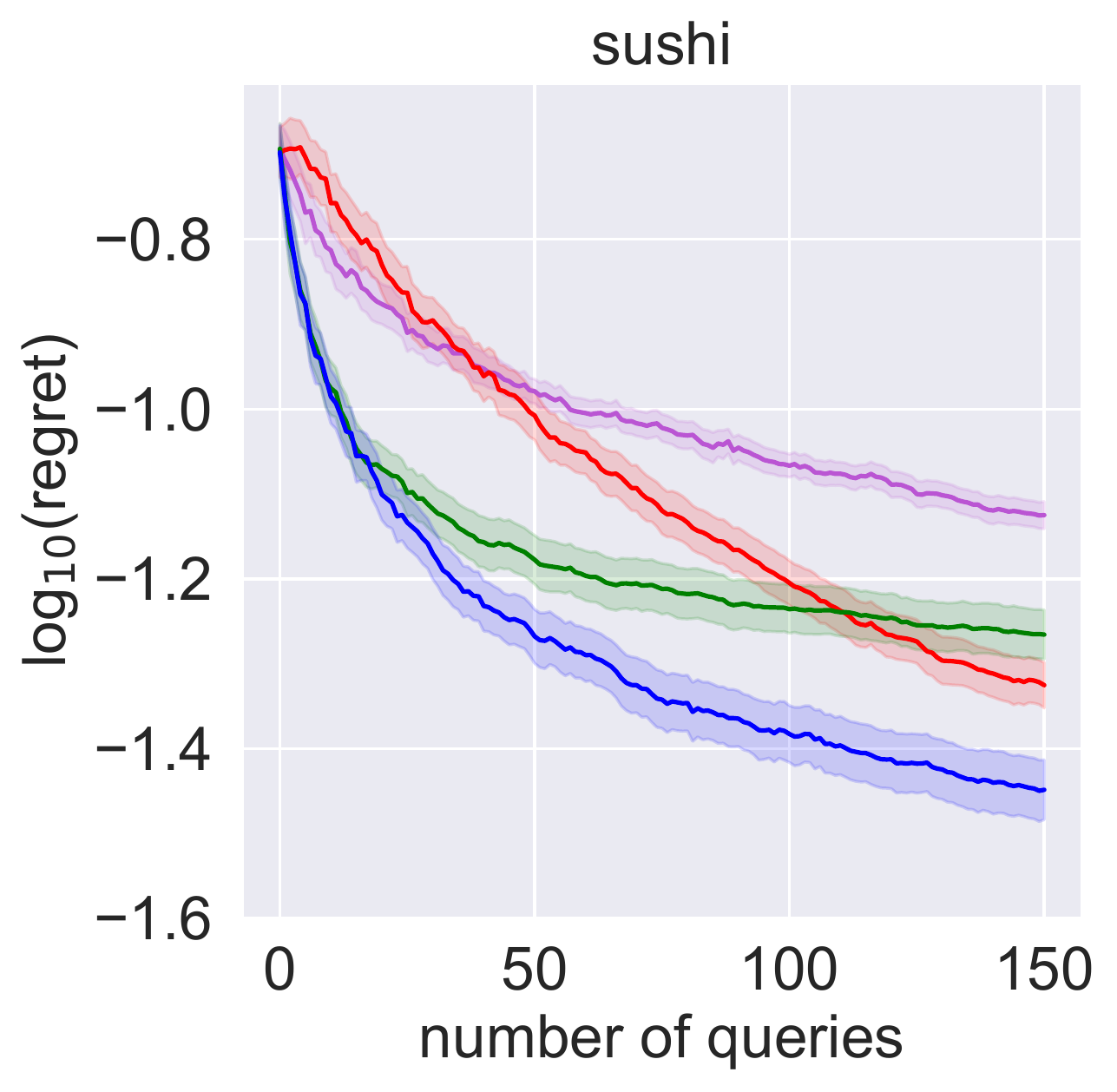}
  \includegraphics[width=0.32\textwidth]{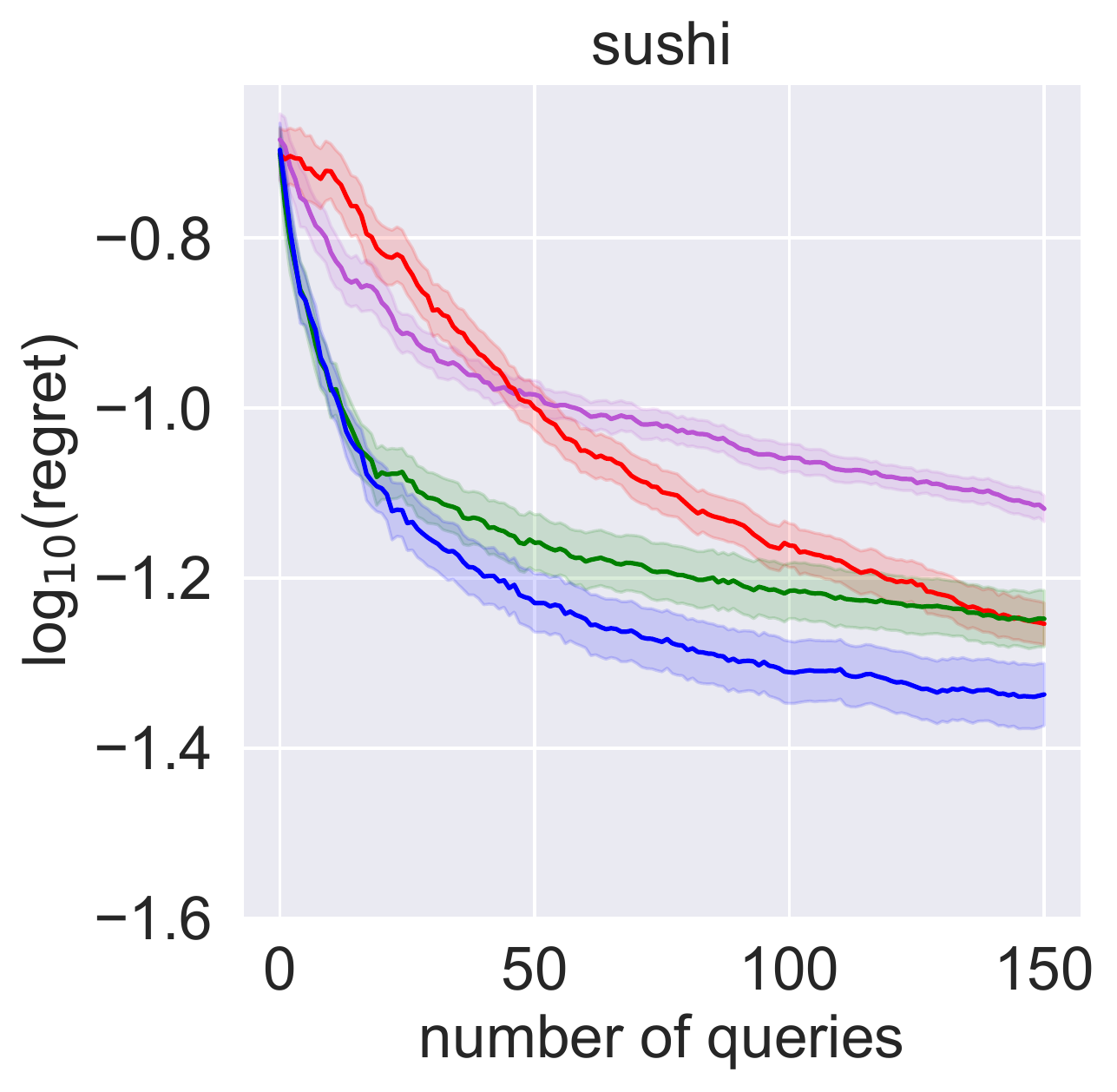}
   \includegraphics[width=0.32\textwidth]{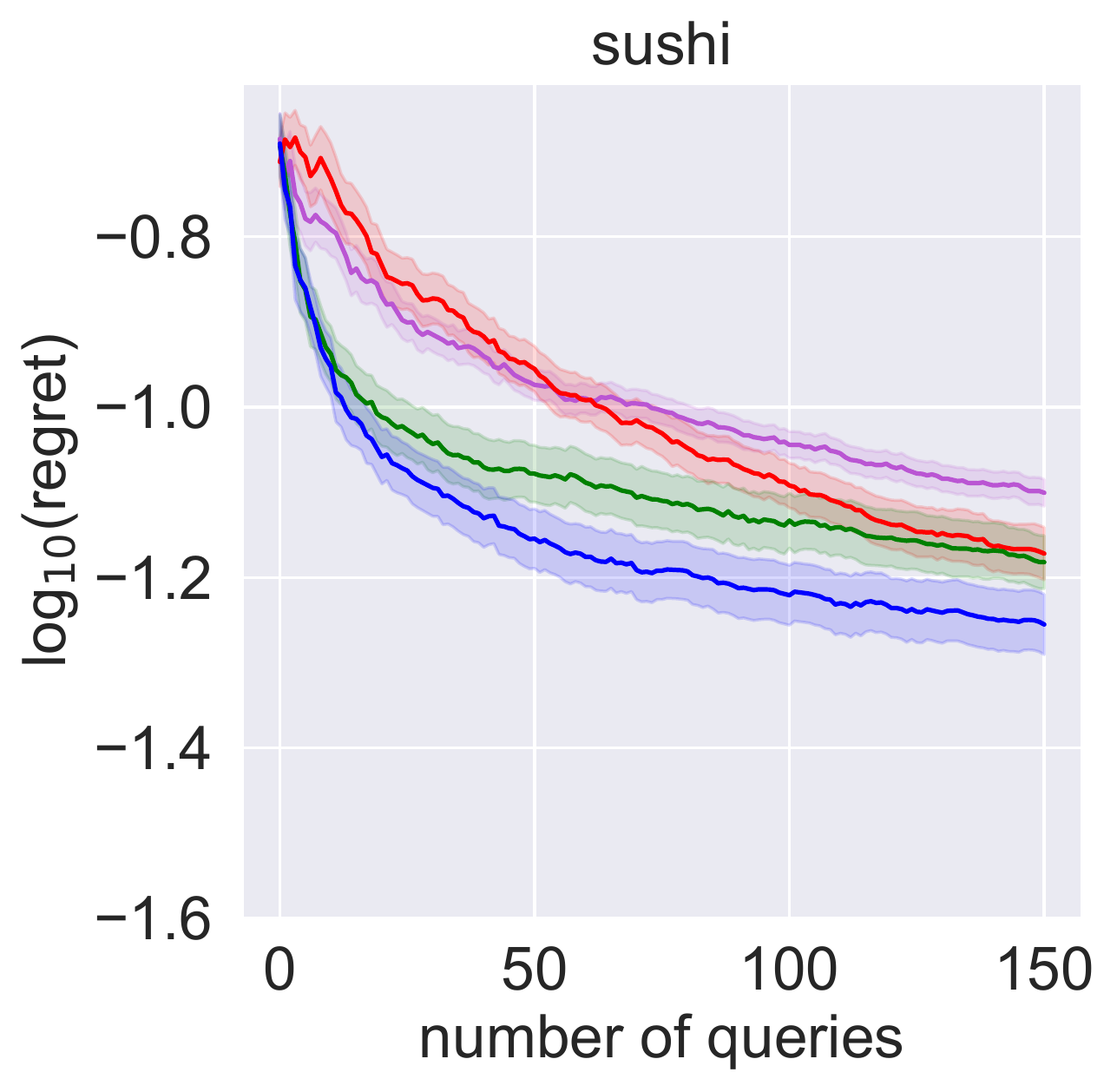}
 \end{tabular}
 \caption{log10(optimum value - objective value at the maximizer of the posterior mean) with $q=2$ alternatives per DM query and varying levels of noise. Left, center, and right columns show the results for three of our experiments with low, middle, and high noise, respectively.
 \label{fig:noise_results}}
\end{figure*}

\vfill

\end{document}